\documentclass[10pt,twocolumn]{article} 
\usepackage{simpleConference}
\usepackage{times}
\usepackage{graphicx}
\usepackage{amssymb}
\usepackage{url,hyperref}




\usepackage{helvet}  
\usepackage{courier}  
\usepackage{graphicx} 
\urlstyle{rm} 
\usepackage{natbib}  
\usepackage{caption} 
\DeclareCaptionStyle{ruled}{labelfont=normalfont,labelsep=colon,strut=off} 
\frenchspacing  
\setlength{\pdfpagewidth}{8.5in}  
\setlength{\pdfpageheight}{11in}  
%
\usepackage{algorithm}
\usepackage{algorithmic}

\usepackage{xcolor}
\usepackage{color}
\usepackage{inputenc} 		
\usepackage[T1]{fontenc}    
\usepackage{hyperref}       
\usepackage{url}            
\usepackage{booktabs}       
\usepackage{amsfonts}       
\usepackage{nicefrac}       
\usepackage{microtype}      
\usepackage{float}
\usepackage{algorithm}
\usepackage{makecell}
\usepackage{multirow}
\usepackage{wrapfig}
\usepackage{lipsum}
\usepackage{hhline}
\usepackage{algorithmic}
\usepackage{amsmath,bm}
\usepackage{amsthm}
\usepackage[shortlabels]{enumitem}
\usepackage{mathrsfs}
\usepackage{xcolor}
\usepackage{graphicx}
\usepackage{color}
\usepackage{soul}
\usepackage{wrapfig}
\usepackage{subfigure}
\usepackage{bbding}
\usepackage{booktabs, caption, makecell}

\usepackage{threeparttable}

\newcommand{\alg}{$\mathsf{PRECISION}$~}
\newcommand{\algns}{$\mathsf{PRECISION}$}
\newcommand{\algplus}{$\mathsf{PRECISION}^+$~}
\newcommand{\algplusns}{$\mathsf{PRECISION}^+$}

\usepackage{pifont}
\newcommand{\cmark}{\ding{51}}%
\newcommand{\xmark}{\ding{55}}%

\allowdisplaybreaks[2]



\newcommand{\bx}{\bar{x}}

\renewcommand{\d}{\mathbf{d}}
\newcommand{\bd}{\bar{\d}}
\newcommand{\E}{\mathbf{E}}
\newcommand{\Ec}{\mathcal{E}}

\newcommand{\Eb}{\mathbb{E}}

\newcommand{\F}{\mathbf{F}}

\newcommand{\I}{\mathbf{I}}

\newcommand{\Lc}{\mathcal{L}}

\newcommand{\M}{\mathbf{M}}

\newcommand{\Mt}{\widetilde{\mathbf{M}}}

\newcommand{\Nc}{\mathcal{N}}

\newcommand{\p}{\mathbf{p}}
\newcommand{\bp}{\bar{\p}}

\newcommand{\Rb}{\mathbb{R}}
\newcommand{\Rc}{\mathcal{R}}

\newcommand{\Sc}{\mathcal{S}}

\renewcommand{\u}{\mathbf{u}}

\renewcommand{\v}{\mathbf{v}}

\newcommand{\W}{\mathbf{W}}

\newcommand{\x}{\mathbf{x}}

\newcommand{\y}{\mathbf{y}}

\newcommand{\xb}{\mathbf{\bar{x}}}

\newcommand{\1}{\mathbf{1}}

\newtheorem{thm}{Theorem}

\newtheorem{cor}[thm]{Corollary}
\newtheorem{lem}{Lemma}

\newtheorem{defn}{Definition}

\newtheorem{rem}{Remark}

\newtheorem{assum}{Assumption}

\newcommand{\vtheta}{\boldsymbol \theta}
\newcommand{\vomega}{\boldsymbol \omega}



\usepackage{hyperref}
\allowdisplaybreaks

\usepackage{bibentry}

\begin{document}
\title{
 PRECISION: Decentralized Constrained Min-Max Learning with Low Communication and Sample Complexities}

\author{ Zhuqing Liu$^{+}$, Xin Zhang$^*$, Songtao Lu$^{\circ}$, and Jia Liu$^{+}$\\
$^+$Department of Electrical and Computer Engineering, The Ohio State University\\$^*$Department of Statistics, Iowa State University\\$^\circ$IBM Research AI, Thomas J. Watson Research Center
}


\maketitle
\thispagestyle{empty}

\begin{abstract}
Recently, min-max optimization problems have received increasing attention due to their wide range of applications in machine learning (ML).
However, most existing min-max solution techniques are either single-machine or distributed algorithms coordinated by a central server.
In this paper, we focus on the {\em decentralized} min-max optimization for learning with domain constraints, where multiple agents collectively solve a nonconvex-strongly-concave min-max saddle point problem without coordination from any server.
Decentralized min-max optimization problems with domain constraints underpins many important ML applications, including multi-agent ML fairness assurance, and policy evaluations in multi-agent reinforcement learning.
We propose an algorithm called \alg (\ul{pr}oximal gradi\ul{e}nt-tra\ul{c}king and stochast\ul{i}c recur\ul{s}ive var\ul{i}ance reducti\ul{on}) that enjoys a convergence rate of \(\mathcal{O}(1/T)\), where \(T\) is the maximum number of iterations.
To further reduce sample complexity, we propose \algplus with an adaptive batch size technique.
We show that the fast \(\mathcal{O}(1/T)\) convergence of \alg and \algplus to an \(\epsilon\)-stationary point imply \(\mathcal{O}(\epsilon^{-2})\) communication complexity and \(\mathcal{O}(m\sqrt{n}\epsilon^{-2})\) sample complexity, where \(m\) is the number of agents and $n$ is the size of dataset at each agent.
To our knowledge, this is the first work that achieves \(\mathcal{O}(\epsilon^{-2})\) in both sample and communication complexities in decentralized min-max learning with domain constraints. 
Our experiments also corroborate the theoretical results.
\end{abstract}

\maketitle


\section{Introduction}\label{Section: introduction}

In recent years, machine learning (ML) has achieved a great success in many areas, including robotics\cite{siau2018building}, image recognition\cite{ozyurt2020efficient}, natural language processing\cite{nozaki2018predictive}, recommender systems\cite{deldjoo2020adversarial}, to name just a few.
Traditionally, the training of ML models is deployed in high-performance computer clusters co-located at large-scale data centers with easy access to big training datasets.
However, with more diverse ML applications emerging, the deployment of ML  has also been migrating to the edge of computing and communication networks due to the following reasons: 
First, in many ML applications, data are generated and collected through diverse data sources that are geographically disperse (e.g., smart mobile devices, vehicles, environmental sensors, satellite imagery).
Second, because of the limited communication capabilities of the devices and data privacy concerns, it is expensive or even infeasible to send the data collected at the edge networks to the cloud for centralized processing. 
These real-world limitations have spawned the rapid development of {\em decentralized learning} over edge networks in recent years, which can leverage highly flexible peer-to-peer edge computing networks with arbitrary topologies \cite{nedic2009distributed,lian2017can}.
Also, thanks to the resilience to single-point-of-failure, data privacy, and simple implementations, decentralized learning has attracted growing interest recently, and has found various science and engineering applications, 
such as distributedrobotics control \cite{ren2007information,zhou2011multirobot} and network resource allocation \cite{jiang2018consensus,rhee2012effect},
such as dictionary learning \cite{chen2014dictionary}, multi-agent systems \cite{cao2012overview,zhou2011multirobot}, multi-task learning \cite{wang2018distributed,zhang2019distributed}, and information retrieval \cite{ali2004tivo}.

From a mathematical perspective, conducting decentralized learning over a computing network amounts to solving an optimization problem {\em distributively} and {\em collaboratively} by a group of agents in the network.
However, among the existing literature of decentralized learning, most works are focused on the standard loss minimization formulation, i.e., $\min_{\x \in \mathbb{R}^{d}}f(\x)$, where $f(\cdot)$ denotes the loss objective function of learning and $\x$ denotes the global model parameters to be learned, and $d$ is the model dimension.
While this standard loss minimization formulation is sufficiently general to cover a wide range of ML applications (e.g., robotic network  \cite{smart2002effective,kober2013reinforcement,polydoros2017survey}), sensor network  \cite{cortes2004coverage,ogren2004cooperative,rabbat2004distributed}), power network  \cite{callaway2010achieving,dall2013distributed,ernst2004power,glavic2017reinforcement}), it has become increasingly apparent that its mathematical structure is not rich enough to capture new requirements of ever-emerging ML applications.
Notably, many sophisticated ML problems nowadays necessitates the so-called ``min-max'' optimization in the form of $\min_{\x \in \mathcal{X}} \max_{\y \in \mathcal{Y}} f(\x,\y)$, where $\x$ and $\y$ are both parameters to be learned (may have different dimensionality), and $\mathcal{X}$ and $\mathcal{Y}$ are some conforming real subspaces for $\x$ and $\y$, respectively.
Although min-max optimization also has a long history that dates back to 1945 \cite{wald1945statistical}, research on {\em decentralized min-max optimization} remains in its infancy so far and results in this area are surprisingly limited.

In this paper, rather than studying the unstructured general decentralized min-max problems  as in \cite{liu2019decentralized2,liu2020decentralized}, we focus on a subclass of interesting decentralized min-max optimization, where multiple agents collectively solve a {\em domain-constrained} nonconvex-strongly-concave (NCX-SCV) min-max problem.
The decentralized constrained NCX-SCV min-max problem is important because it arises naturally from many recently emerging multi-agent ML applications, such as multi-agent fairness constraints in adversarial training \cite{xu2021robust}, policy evaluation in multi-agent reinforcement learning (MARL) \cite{qiu2020single}, and multi-agent fairness assurance in ML \cite{baharlouei2019r, sattigeri2018fairness} (see Section~\ref{Section: Preliminary} for more in-depth discussions).

However, designing effective and efficient algorithms for solving decentralized constrained NCX-SCV min-max problems is highly non-trivial due to the following technical challenges:
First, min-max optimization tackles a composition of an inner maximization problem
and an outer minimization problem.
This tightly coupled inner-outer mathematical structure, together with the decentralized nature and the non-convexity of the outer problem, render the design and theoretical analysis of the algorithms rather difficult.
Moreover, the constrained structures in both the inner and outer problems impose yet another layer of challenges in the algorithmic design for decentralized constrained NCX-SCV min-max problems. 
Second, the decentralization over edge computing networks faces two fundamentally {\em conflicting} performance metrics.
On one hand, due to the high dimensionality of deep learning models and large datasets, it is infeasible to exploit information beyond first-order stochastic gradients to determine search directions in algorithm design.
Although the variance of stochastic gradients can be reduced by increasing the number of training samples in mini-batches, doing so incurs higher computational costs for the stochastic gradients.
On the other hand, if one uses fewer training samples in each iteration to trade for a lower computational cost, the larger variance in the stochastic gradients inevitably leads to more  communication rounds to reach a certain training accuracy (i.e., slower convergence).
The high communication complexity is particularly problematic in wireless edge networks, where communication connections could be low-speed and highly unreliable.
Third, constrained decentralized min-max optimization presents a significantly greater challenge than its unconstrained counterpart. This is primarily due to the non-smooth nature of the domain constraints and the intricate coupling between these constraints and the min-max problem structure.

The major contribution of this paper is that we propose a series of new algorithmic techniques to address the challenges above and achieve low sample and communication complexities in decentralized constrained NCX-SCV min-max problems.
Our main technical results and their significance are summarized as follows:
\begin{list}{\labelitemi}{\leftmargin=1em \itemindent=-0.09em \itemsep=.2em}
\item 
We propose a decentralized constrained min-max optimization algorithm called \alg (\ul{pr}oximal gradi\ul{e}nt-tra\ul{c}king and stochast\ul{i}c recur\ul{s}ive var\ul{i}ance reducti\ul{on}) and show that, to achieve an $\epsilon$-stationary point, \alg enjoys a convergence rate of $\mathcal{O}(1/T)$ ($T$ is the maximum number of iterations).
This result further implies an $[\mathcal{O}(m\sqrt{n}\epsilon^{-2}), \mathcal{O}(\epsilon^{-2})]$ sample-communication complexity scalings, where $m$ is the number of agents, and $n$ is the size of the local dataset at each agent. 

\item To relax the full gradient evaluation requirement in \algns, we propose an enhanced algorithm called \algplus, which is based on an adaptive batch size technique.
\algplus further reduces the sample complexity of \algns, while retaining the same $[\mathcal{O}(m\sqrt{n}\epsilon^{-2})$, $\mathcal{O}(\epsilon^{-2})]$ sample-communication complexity scaling laws as those of \algns.
Moreover, a lower sample complexity can be obtained in \algplusns by slightly trading off its communication complexity (the trade-off is only reflected in the hidden Big-O constants).

\item We note that both \alg and \algplus algorithms integrate two proximal operators for both the inner and outer constraints (on $\x$ and $\y$), variance reduction techniques for both inner and outer updates, and gradient-tracking-based updates in both inner and outer variables.
In this sense, both \algns-based algorithms can be viewed as a triple hybrid approach, which necessitates new performance analysis and proof techniques.
It is also worth pointing out that the proposed algorithmic and proof techniques in \alg could be of independent interest in decentralized min-max learning theory in general.



\end{list}

The rest of the paper is organized as follows.
In Section~\ref{Section: Preliminary}, we first provide the preliminaries of the decentralized min-max optimization problems and discuss related works.
In Section~\ref{Sec: GT-SRVR}, we propose two stochastic variance reduced algorithms, namely \alg and \algplusns. The convergence rate, communication complexity, and sample complexity of \alg and \algplus are also provided in Section~\ref{Sec: GT-SRVR}.
%
%
Section~\ref{Section: experiment} provides numerical results to verify our theoretical findings, and Section \ref{Section: conclusion} concludes this paper.

\section{Preliminaries and related work}\label{Section: Preliminary}
To facilitate subsequent technical discussions, in Section~\ref{Sec: formulation}, we first provide the basics of decentralized min-max optimization and its consensus formulation.
Then, we formally define the notions of sample and communication complexities of the consensus form of decentralized min-max optimization problems.
Next, in Section~\ref{Sec: Related Work on Policy Evaluation}, we provide an overview of related work of existing optimization algorithms for solving min-max learning problems and their performance in terms of their sample and communication complexities, thus putting our work in comparative perspectives.
%

\subsection{Preliminaries of Decentralized Min-Max Optimization} \label{Sec: formulation}
{\bf 1) Network Consensus Formulation:}
Consider an undirected connected network $\mathcal{G} = (\mathcal{N},\mathcal{L})$, where
$\mathcal{N}$ and $\mathcal{L}$ are the sets of nodes (agents) and edges, respectively, with
$|\mathcal{N}| =m$.
Each agent has local computation capability and is able to communicate with the set of its neighboring agents defined as $\Nc_i \triangleq \{i' \in \Nc,: (i,i')\in \Lc\}$.
For presentation simplicity, we assume that each agent $i$ has $n$ data samples and thus there are $mn$ data samples in total\footnote{
We note that with more complex notation, all our proofs and results continue to hold in cases with unequal sized local datasets.
}.
In decentralized min-max optimization, the agents in the network distributively and collaboratively solve the following decentralized min-max optimization problem:

\begin{align} \label{Eq: general_problem}
&\min_{{\x} \in \mathcal{X}}\max_{{\y} \in \mathcal{Y}} \Big[  \frac{1}{m}\sum_{i=1}^{m}  F_i({\x},{\y}) +h(\x) \Big],
\end{align}
where $\x \in \mathcal{X}$ and $\y \in \mathcal{Y}$ are parameters to be trained for the outer-min and inner-max problems, respectively,
the sets $\mathcal{X} \subseteq \mathbb{R}^{p_1}$ and $\mathcal{Y} \subseteq \mathbb{R}^{p_2}$ are closed and convex sets,
$F_i({\x},{\y}) \!\triangleq\! \frac{1}{n} \sum_{j=1}^{n} f_{ij}({\x_i},{\y_i}| \bm{\xi}_{ij}) $ denotes the local objective function, and $h(\x_i)$ is a proper convex function (possibly non-differentiable) that usually plays the role of regularization.
Here, $F_i({\x},{\y})$ is only observable to node $i$ and is assumed to be non-convex with respect to $\x$ for a fixed $\y$, and strongly concave with respect to $\y$ for a fixed $\x$.
To solve Problem~\eqref{Eq: general_problem} in a decentralized fashion, a common approach is to rewrite it in the following equivalent form:
\begin{align} \label{Eq: consensus_problem}
 \min_{\!\left\{{\x_i}\!\in \mathcal{X},\forall i\right\}}  \max_{\!\left\{{\y_i}\!\in \mathcal{Y},\forall i\right\}} & \left[ \frac{1}{mn}\sum_{i=1}^{m}\sum_{j=1}^{n} f_{ij}({\x_i},{\y_i}| \bm{\xi}_{ij}) +h(\x_i)\right],
\nonumber\\
 \hspace{.05in}  \text{subject to}   &  \quad {\x_i} = \x_{i'}, {\y_i} = \y_{i'}, ~~  \forall (i,i') \in \mathcal{L}, \!\!\! 
\end{align}
where ${\x_i}$ and ${\y_i}$ are the local copies of the original parameters $\x$ and $\y$ at agent $i$, respectively.
%
%
The equality constraints in \eqref{Eq: consensus_problem} ensure that the local copies at all agents are equal to each other, hence the name ``consensus form.''
Clearly, Problems~\eqref{Eq: general_problem} and \eqref{Eq: consensus_problem} share the same solution.
%
%
In the rest of this paper, we will focus on solving Problem~\eqref{Eq: consensus_problem}, which will be referred to as a decentralized non-convex-strongly-concave (NCX-SCV) consensus min-max optimization problem.
The goal of decentralized consensus min-max optimization is to design an algorithm to attain a collective $\epsilon$-stationary point $\{\x_i,\y_i, \forall i\}$ that satisfies the following condition:

\begin{align*}  
&\underbrace{\frac{1}{m} \sum_{i=1}^{m}\left\|\mathbf{x}_{i}-\overline{\mathbf{x}}\right\|^{2}}_{\substack{\mathrm{Outer \,\, consensus} \\ \mathrm{error}}} + \underbrace{\frac{1}{m} \sum_{i=1}^{m}\left\|\mathbf{y}_{i}-\overline{\mathbf{y}}\right\|^{2}}_{\substack{\mathrm{Inner \,\, consensus} \\ \mathrm{error} } }  \\&+ \underbrace{\Eb\|{\y}^* \!-\! {\bar{{\y}}}\|^2 }_{\substack{\mathrm{Saddle \,\, point} \\ \mathrm{error} }}+  \underbrace{\|\frac{1}{m}\sum_{i=1}^m \nabla_{\x} F_i(\x,\y ) \|^2 }_{\mathrm{Global \,\,gradient \,\,magnitude} } \le \epsilon^2,
\end{align*}
where $\bar{\x} \triangleq \frac{1}{m} \sum_{i=1}^{m} \x_{i}$, $\bar{\y} \triangleq \frac{1}{m} \sum_{i=1}^{m} \y_{i}$, and ${\y}^*$ represents the maximizer point of $F$ over $\y$, where ${\y}^*({\bar{{\x}}}) \in \arg\max_{{\y}\in\mathcal{Y}} F({\bar{{\x}}} ,\y)$,


As mentioned in Section~\ref{Section: introduction}, two of the most important performance metrics in decentralized optimization are the sample and communication complexities.
In this paper, we adopt two definitions of sample and communication complexities that are widely used in the decentralized optimization literature (e.g., \cite{sun2020improving}) to measure the efficiency of our algorithms:
\begin{defn}[Sample Complexity]
	The sample complexity is defined as the total number of incremental first-order oracle (IFO) calls required across all nodes until an algorithm converges to an $\epsilon$-stationary point, where one IFO call evaluates a pair of gradients $(\nabla_{\x} f_{ij}(\x,\y), \nabla_{\y} f_{ij}(\x,\y))$ at node $i$.
\end{defn}

\begin{defn}[Communication Complexity]
Let a round of communications be a time window during which each node sends a vector to its neighboring nodes while receiving a set of vectors from all its neighboring nodes.
Then, the communication complexity is defined as the total number of rounds of communications required until an algorithm converges to an $\epsilon$-stationary point.
\end{defn}

{\bf 2) Motivating Application Examples:}
With the basics of decentralized constrained NCX-SCV min-max optimization, we provide two examples to further motivate its practical relevance:
\begin{list}{\labelitemi}{\leftmargin=1em \itemindent=-0.09em \itemsep=.2em}
	%
%
\item {\em Multi-Agent Fair ML:}
Consider a machine learning task with dataset $\{b_{ij},[\tilde{\bm{\xi}}_{ij}^{\top},\bm{\xi}_{ij}^{*\top}]^{\top} \}$ over a multi-agent network, where $b_{ij}$ is the observed label of the $j$-th sample at the $i$-th  agent, $\tilde{\bm{\xi}}_{ij}\in\mathbb{R}^{d_1}$ denotes the corresponding nonsensitive features and $\bm{\xi}_{ij} \in\mathbb{R}^{d_2}$ represents the sensitive features.
In the problem of Fair ML, fairness is imposed by adding a regularization term that penalizes the statistical correlation between the learning model output $\hat{b}_{ij}$ and the sensitive attributes $\bm{\xi}_{ij}^{*}$.
In binary case, one example is the Renyi correlation \cite{baharlouei2019r} as a regularization to impose fairness, under which the multi-agent fair ML problem can be written as a decentralized NCX-SCV min-max problem \cite{baharlouei2019r}:
$
\min _{\x \in \mathcal{X} } \max _{{\y}\in \mathcal{Y}} \mathbb{E}_i\big[ \mathbb{L}(F_i({\x},  \y| {\bm{\xi}_i}), b_i)-\lambda_l \sum_{j=1}^{c} y_{ij}^{2} {\mathbf{f}}_{ij}({\x_i}, {\bm{\xi}_i})+\lambda_l $

$\cdot \sum_{j=1}^{c} y_{ij} \tilde{S}  {\mathbf{f}}_{ij}({\x}, {\bm{\xi}_i})\big],
$
 where $\tilde{S}=2 S-1$, $S=\{0,1\}$, denotes the sensitive attribute, $\mathbb{L}$ is the loss function, $\lambda_l$ is a positive scalar balancing fairness and goodness-of-fit, $c$ is the class label and $ {\mathbf{f}}_{ij}({\x}, {\bm{\xi}_i})$ represents the vector-valued output of a neural network after soft-max layer.
 \item {\em Data Poisoning Attack:}
Consider a decentralized learning problem with $m$ agents trying to learn a common model. An adversary has the ability to inject noise into the training samples of a subset of agents.
Let $\y_i$ denote the model parameter and let $\x_i$ denote the injected poisoned data parameter. 
In this problem, the adversary tries to maximize the loss function while the other agents aim at minimizing the loss function.
Thus, the data poisoning attack problem has the following NCX-SCV min-max problem:
 $\max _{ \x \in \mathcal{X} } \min _{\y \in \mathcal{Y}  } \sum_{i=1}^{m} \frac{1}{|{\bm{\xi}_i} |} \sum_{\ell \in {\bm{\xi}_i}} \log \big(1+\exp \big(\big(-v_{\ell} \y_{i}^{T}\big(w_{\ell}+\x_{i}\big)\big)\big)$,
where $v_{\ell} \in \mathbb{R}$ and $w_{\ell} \in \mathbb{R}^{d}$ denote the $\ell$-th data point's label and the feature vector, respectively.
%


\end{list}

\subsection{Related Work} \label{Sec: Related Work on Policy Evaluation}

{\bf 1) Centralized NCX-SCV Min-Max Optimization:}
In the literature, the state-of-the-art algorithms for solving NCX-SCV optimization problems in the centralized setting are GDA~\cite{lin2020gradient}, min-max-PPA~\cite{lin2020near}, and  SREDA~\cite{luo2020stochastic}.
Specifically, \citet{lin2020gradient} proposed a gradient-based GDA method to find a first-order Nash equilibrium point.
In each iteration, GDA performs gradient descent over the $\x$-variable and gradient ascent over the $\y$-variable.
GDA has an $\mathcal{O}(1/T)$ convergence rate for NCX-SCV min-max optimization problems, where $T$ is the maximum number of iterations.
Also, it requires a full gradient evaluation in each iteration, which implies an $\mathcal{O}(n\epsilon^{-2})$ sample complexity to achieve an $\epsilon$ convergence error.
%
%
The Minimax-PPA method is proposed in \cite{lin2020near} to solve NCX-NCV problem and achieves an $\tilde{\mathcal{O}}\left( n \varepsilon^{-2}\right)$ sample complexity.
These methods have a high sample complexity in the big-data regime with a large $n$.
To overcome this issue, several variance reduction methods have also been proposed.
%
For example, in \cite{luo2020stochastic}, a variance reduction algorithm named SREDA is proposed, which is further enhanced by \cite{xu2020enhanced} to allow a larger step-size.
SREDA achieves an $\tilde{\mathcal{O}}\left(n +\sqrt{n} \epsilon^{-2}\right)$ sample complexity for large $n$, thus having a lower sample complexity than GDA and minimax-PPA.
However,  SREDA can only handle min-max problems with constraints on $\x$ but not on $\y$. 
We summarize the above comparisons in Table~\ref{tab:comp1}.
While the above algorithms achieve varying degrees of success in solving NCX-SCV min-max problems, they are developed for the centralized setting, which is fundamentally different from our work.

\begin{table*}[htbp]
	\caption{Comparisons among algorithms for NCX-SCV min-max problems ($m$ is the number of agents, $n$ is the size of dataset for each agent, and $\epsilon$ is the convergence error.
	Our proposed algorithms are marked in bold.}
	\label{tab:comp1}
	\begin{center}
	{\small
		\begin{tabular}{c c c c c}
			\toprule
			\multirow{2}{*}{Algorithm$^*$}   &Proximal  & Sample  & Commun.&Decen-\\
			&Operator  & Complex. &  Complex. &tralized\\
			\midrule
			GDA \cite{lin2020gradient} &  $\y$  & \!\!{$\tilde{\mathcal{O}}\left( n\varepsilon^{-2}\right)$} & {-} & \xmark\\
			\midrule
			Minmax-PPA \cite{lin2020near} &$  	 	    \x$ and $	  \y$  &\!\! {$\tilde{\mathcal{O}}\left( n \varepsilon^{-2}\right)$} & {-} & \xmark\\
			\midrule	{ SREDA \cite{luo2020stochastic} } & $\x$ &
		\!\!\!\!\!\!\!	$	\tilde{\mathcal{O}}  \left( n  +   \sqrt{n} \varepsilon^{-2} \right) $& -& \xmark \\
			\midrule
			$\mathsf{\mathbf{PRECISION}}$ & \multirow{2}{*}{$\x $ and $ \y$ }& \multirow{2}{*}{$ \!\!\mathcal{O}(m\sqrt{n}\epsilon^{-2})$} & \multirow{2}{*}{$\mathcal{O}(\epsilon^{-2}) $}& \multirow{2}{*}{\cmark}\\
			{$\mathsf{\mathbf{PRECISION}^+}$}& &  \\
			\bottomrule
		\end{tabular}
	}
	\end{center}
\end{table*}

\begin{table*}[t!]
	\caption{Comparisons among algorithms for decentralized min-max problems.
	}
	\label{tab:comp2}
	\begin{center}
	{\small
		\begin{tabular}{c c c  c c}
			\toprule
			\multirow{2}{*}{Algorithm$^*$}   &Proximal  & Sample  & Commun.&	\multirow{2}{*}{Problem}   \\
			&Operator  & Complex. &  Complex.\\
			\midrule
			{DPOSG \cite{liu2020decentralized} } & - & {$\mathcal{O}(\epsilon^{-12})$} & {$\!\!\!\!\!\!\!\!\!\mathcal{O}(\log (1/\epsilon))$} &\!\!NCX-NCV\\
		\midrule
		{  CSPSG \cite{mateos2015distributed} } & $\x $ and $ \y$  & $\mathcal{O}(\epsilon^{-4}) $ & $\mathcal{O}(\epsilon^{-4}) $ & {CX-CV} \\
		\midrule
		{DPPSP \cite{liu2019decentralized2} } & $\x $ and $ \y$  & $\mathcal{O}(\epsilon^{-4}) $ & $\mathcal{O}(\epsilon^{-4}) $ & \!NCX-NCV \\
			\midrule
			{GT-GDA \cite{tsaknakis2020decentralized}  } & $\x$ or $\y$ & {$\mathcal{O}( {mn}\epsilon^{- 2})$} & {$\mathcal{O}(\epsilon^{-2})$}& \!\!NCX-SCV  \\
			\midrule
			$\mathsf{\mathbf{PRECISION}}$  & \multirow{2}{*}{$\x $ and $ \y$ }& \multirow{2}{*}{$ \mathcal{O}( m \sqrt{n} \epsilon^{ - 2})$} & \multirow{2}{*}{$\mathcal{O}( \epsilon^{-2}) $} &\multirow{2}{*}{\!\!NCX-SCV} \\
			{$\mathsf{\mathbf{PRECISION}^+}$}& &  \\
			\bottomrule
		\end{tabular}
		}
	\end{center}
\end{table*}

\smallskip
{\bf 2) Decentralized Min-Max Optimization:}
%
%
%
%
As mentioned in Section~\ref{Section: introduction}, existing results on decentralized min-max optimization are quite limited.
%
The earliest attempt is the CSPSG method \cite{mateos2015distributed}, which considered the most ideal convex-concave (CX-CV) setting.
Due to its simplistic SGD-type updates, CSPSG has high sample and communication complexities of $\mathcal{O}(\epsilon^{-4})$.
DPOSG \cite{liu2020decentralized} considered unstructured nonconvex-nonconave (NCX-NCV) unconstrained decentralized min-max problems in the context of large-scale GANs, and proposed to leverage the classical DSGD \cite{nedic2009distributed} approach to decentralize the centralized counterpart algorithm called OGDA \cite{mokhtari2020unified}.
Due to the limitations inherent in DSGD, DPOSG suffers from a high sample complexity of $\mathcal{O}(\epsilon^{-12})$.
In contrast, DPPSP \cite{liu2019decentralized2} also studied unstructured NCX-NCV decentralized min-max optimization problems with constraints.
Due to the use of basic proximal SGD-type updates, DPPSP also suffers high sample and communication complexities of $\mathcal{O}(\epsilon^{-4})$.

Compared to the simplistic algorithmic techniques in \cite{liu2019decentralized2,liu2020decentralized}, our  \alg algorithms is a triple hybrid algorithm that integrates proximal operators, variance reductions, and gradient tracking, thus achieving much lower sample and communication complexities.
We note that although our significantly lower sample and communication complexities are achieved under the more structured NCX-SCV setting, we believe our techniques can also be applied to NCX-NCV to improve the sample and communication complexities of existing works.
This will be left in our future work.

%
%
%
The most related work to ours is GT-GDA \cite{tsaknakis2020decentralized}, which also studied constrained decentralized NCX-SCV min-max optimization.
The key difference between GT-GDA and our work is that only one constraint set is imposed on either $\x$ or $\y$, but not on both.
In contrast, we consider the more complex case where both $\x$ and $\y$ are constrained.
GT-GDA also requires several inner updates for $\y$ and then performs one update for $\x$, which is similar to alternating direction method of multipliers \cite{boyd2011distributed} (ADMM) update scheme.
Also, our algorithms achieve a lower sample complexity $\mathcal{O}(m\sqrt{n}\epsilon^{-2})$ than that of $\mathcal{O}(mn\epsilon^{-2})$ in GT-GDA. 
To conclude this section, we summarize the above comparisons in Table~\ref{tab:comp2}.
Another closely related work can be found in \cite{zhang2021taming}, where the authors developed a decentralized optimization method for a multi-agent reinforcement learning policy evaluation problem based on the mean squared projected Bellman error (MSPBE), which can be formulated as a finite-sum minimax problem.
However, our work differs from \cite{zhang2021taming} in the following aspects:
	(i) Unlike \cite{zhang2021taming}, our method can handle  {\em non-smooth} objectives.
	However, the direct proximal extension of the algorithm in \cite{zhang2021taming} may diverge in solving the decentralized problem~\cite{hong2022divergence}. To this end, we propose a specialized proximal operator $\tilde{\mathbf{x}}_{i}\left(\mathbf{x}_{i, t}\right)$ to address this challenge, see detailed discussions in our Remark \ref{rem22};
	(ii) Our approach addresses  {\em general} decentralized min-max optimization problems, while \cite{zhang2021taming} is limited to RL policy evaluation.

%
%
%

\section{Solution Approach}\label{Sec: GT-SRVR}


In this section, we first present our \alg and \algplus algorithms in Sections~\ref{sec:algs} and \ref{sec:alg+}, respectively.
Then, we provide the main theoretical results and the key insights of the \alg and \algplus algorithms in Section~\ref{sec:thms}.
Due to space limitation and for better readability, we relegate some proof details of the theoretical results to our Appendix. 

\subsection{The \alg Algorithm} \label{sec:algs}

To solve the consensus form of decentralized min-max problem in Problem~\eqref{Eq: consensus_problem}, we adopt the network consensus mixing approach in the literature \cite{nedic2009distributed}. 
Toward this end, we let $\M \in \Rb^{m\times m}$ denote the consensus weight matrix and let $[\M]_{ii'}$ denote the element in the $i$-th row and the $i'$-th column in $\M$.
$\M$ satisfies the following properties \cite{nedic2009distributed,wai2018multi}:

\begin{enumerate}[topsep=1pt, itemsep=-.1ex, leftmargin=.25in]
\item[(a)] {\em Doubly stochastic:} $\sum_{i=1}^{m} [\mathbf{M}]_{ii'}=\sum_{i'=1}^{m} [\mathbf{M}]_{ii'}=1$;

\item[(b)] {\em Symmetric:} $[\mathbf{M}]_{ii'} = [\M]_{i'i}$, $\forall i,i' \in \mathcal{N}$;

\item[(c)] {\em Network-Defined Sparsity:} $[\M]_{ii'} > 0$ if $(i,i')\in \mathcal{L};$ otherwise $[\mathbf{M}]_{ii'}=0$, $\forall i,i' \in \mathcal{N}$.
\end{enumerate}
Note that the above properties imply that the eigenvalues of $\M$ are real and can be sorted as $-1 < \lambda_m(\M) \leq \cdots \leq \lambda_2(\M) < \lambda_1(\M) = 1$.
For notational convenience, we define the second-largest eigenvalue in magnitude of $\M$ as $\lambda \triangleq \max\{|\lambda_2(\M)|, .., |\lambda_m(\M)|\}$, which will play an important role in the step-size selection and analysis of the algorithm's convergence rate.
With the above notation, we are now in a position to describe our proposed algorithms.

As mentioned in Section~\ref{Section: introduction}, our \alg algorithm can be viewed as a triple hybrid of proximal, gradient tracking, and variance reduction techniques.
Next, we will see that these techniques can be organized into three key algorithmic steps:

\begin{list}{\labelitemi}{\leftmargin=1em \itemindent=-0.09em \itemsep=.2em}
	\item {\em Step 1 (Local Proximal Operations):} In each iteration $t$, each agent $i$ first performs the following proximal operations to cope with the constraint sets $\mathcal{X}$ and $\mathcal{Y}$ for the outer and inner variables, respectively:
\begin{align}
\label{Eq: DA_updating_x} \tilde{{\x}}_i({\x}_{i,t}) =&{\arg\min}_{{\x}_i \in \mathcal{X}} \langle\p_{i,t}, {\x}_{i}- {\x}_{i,t}\rangle\notag\\& +\frac{\tau}{2} \| {\x}_i - {\x}_{i,t}\|^2 + h({\x}_i), \\
\label{Eq: DA_updating_y} \tilde{{\y}}_{i}({{\y}}_{i,t}) \! =& {\arg\min}_{{\y}_i \in \mathcal{Y}}  \big\| {\y}_i- \big({\y}_{i,t} + \alpha \d_{i,t}\big)\big\|^2,
\end{align}
where $\p_{i,t}$ and $\d_{i,t}$ are two auxiliary vectors for gradient tracking purposes and will be defined shortly, $\tau > 0$ is a constant proximal control parameter,
%
%
and $\alpha > 0$ is a constant parameter to control the magnitude of the updates of ${\y}$. 
%
%
	\item {\em Step~2 (Consensus Update):} Next, each agent $i$ updates the outer and inner model parameters ${\x}_i,{\y}_i$:
\begin{align}\label{Eq: consenus_updating_x} 
{\x}_{i,t+1} \! =& \! \underbrace{ \sum_{i' \in \mathcal{N}_{i}} [\M]_{ii'} {\x}_{i',t} }_{\mathrm{(a)}}+ \underbrace{ \nu \left( \tilde{{\x}}_i({\x}_{i,t}) -{\x}_{i,t}  \right) }_{\mathrm{(b)}}, 
\end{align}

\begin{align}\label{Eq: consenus_updating_y}
 {\y}_{i,t+1} =& \underbrace{ \sum_{i' \in \mathcal{N}_{i}} [\M]_{ii'} {\y}_{i',t} }_{\mathrm{(a)}}+ \underbrace{ \eta (\tilde{{\y}}_{i} ({{\y}}_{i,t})- {\y}_{i,t}) }_{\mathrm{(b)}},
\end{align}
where $\nu$ and $\eta$ are the step-sizes for updating ${\x}$- and ${\y}$-variables, respectively. 
Note that in \eqref{Eq: consenus_updating_x} and \eqref{Eq: consenus_updating_y}, component $(a)$ is a local weighted average at agent $i$, which is also referred to as ``consensus step,'' 
and component $(b)$ performs a local update in the spirit of Frank-Wolfe given the proximal points $\tilde{\x}$ and $\tilde{\y}$, which is different from the conventional decentralized stochastic gradient updates \cite{nedic2018network}.

	\item {\em Step 3 (Local Gradient Estimate):} In the next step, each agent $i$ estimates its local gradients using the following gradient estimators: 
		\begin{subequations}\label{Eq: Updating_uv}
			\begin{eqnarray}
			\v_{i,t} =
			\begin{cases}
			\nabla_{{{\x}}}  F_i({{\x}}_{i,t},{{\y}}_{i,t}), \qquad \text{if~~} \text{mod}(t,q) = 0, \\
			\v_{i,t\!-\!1} \!+\!\frac{1}{|\Sc_{i,t}|}\!\sum_{j \in \Sc_{i,t}}\!\! \big( \nabla_{{{\x}}} f_{ij}({{\x}}_{i,t},{{\y}}_{i,t})
			\!\\\quad \quad\quad-\! \nabla_{{{\x}}}  f_{ij}({{\x}}_{i,t\!-\!1},{{\y}}_{i,t-1})\big), \quad  \text{o.w.}
			\end{cases}\label{Eq: Updateing_v} \!\!\!\!\!\!  \\
			\u_{i,t} =
			\begin{cases}
			\nabla_{{{\y}}}  F_i({{\x}}_{i,t},{{\y}}_{i,t}), \qquad \text{if~~} \text{mod}(t,q) = 0, \\
			\u_{i,t\!-\!1} \!+\!\frac{1}{|\Sc_{i,t}|}\!\sum_{j \in \Sc_{i,t}}\!\! \big( \nabla_{{{\y}}} f_{ij}({{\x}}_{i,t},{{\y}}_{i,t})
			\!\\ \quad \quad\quad-\! \nabla_{{{\y}}}  f_{ij}({{\x}}_{i,t\!-\!1},{{\y}}_{i,t-1})\big), \quad  \text{o.w.} 
			\end{cases}\label{Eq: Updating_u} \!\!\!\!\!\! 
			\end{eqnarray}
		\end{subequations}
Here, $\Sc_{i,t}$ is the sample mini-batch in the $t$-th iteration, and $q$ is a pre-set inner loop iteration number.
%

\item {\em Step 4 (Gradient Tracking): Each agent $i$ updates $\p_i$ and $\d_i$ by averaging over its neighboring tracked gradients:}
	\begin{align}\label{Eq: Tracking_updating}
	\begin{cases}
	\p_{i,t} \!=\! \sum_{i' \in \mathcal{N}_{i}} [\M]_{ii'} \p_{i',t-1}  + \v_{i,t}- \v_{i,t-1}, \\
	\d_{i,t} \!= \!\sum_{i' \in \mathcal{N}_{i}} [\M]_{ii'} \d_{i',t-1}  + \u_{i,t} -\u_{i,t-1}.
	\end{cases}
	\end{align}
\end{list}

Our \alg algorithm can be intuitively understood as follows:
In \algns, each agent conducts both descent and ascent steps, since Problem~\eqref{Eq: consensus_problem} minimizes over ${\x}$ and maximizes over ${\y}$.
Note that $\v_{i,t}$ and $\u_{i,t}$ in \eqref{Eq: Updating_uv} only contain the gradient information of the local objective function $F_{i}({{\x}},{{\y}})$.
Merely updating with directions $\v_{i,t}$ and $\u_{i,t}$ cannot guarantee the convergence of the global objective function $F({\x},{\y})$.
Therefore, we introduce two auxiliary variables $\p_{i,t}$ and $\d_{i,t}$ for global gradient tracking purposes.
As each agent $i$ updates these two variables by performing the local weighted aggregation shown in \eqref{Eq: Tracking_updating}, $\p_{i,t}$ and $\d_{i,t}$ track the directions of the global gradients.

It is insightful to compare \alg with our most related work, the GT-GDA method in \cite{tsaknakis2020decentralized}.
In GT-GDA, agent $i$ computes the local {\em full gradients} in the $t$-th iteration as follows:
\begin{equation}\label{Eq: grad_updating}
\v_{i,t} = \nabla_{{\x}} F_{i}({\x}_{i,t},{\y}_{i,t}), \quad \u_{i,t} =  \nabla_{{\y}}  F_{i}({\x}_{i,t},{\y}_{i,t}).
\end{equation}
Different from GT-GDA \cite{tsaknakis2020decentralized}, \alg estimates the local gradients in Eq.~\eqref{Eq: Updating_uv} at agent $i$.
In Eq.~\eqref{Eq: Updating_uv}, the algorithm evaluates a full gradient $\nabla  F_i({{\x}}_{i,t},{{\y}}_{i,t})$ only every $q$ steps.
For other iterations with $\text{mod}(t,q) \neq 0$, \alg uses local stochastic gradients estimated by a mini-batch $\frac{1}{|\Sc_{i,t}|}\sum_{j \in \Sc_{i,t}}\!\! \nabla_{{{\y}}} f_{ij}({{\x}}_{i,t},{{\y}}_{i,t})$ and a recursive correction term $\u_{i,t\!-\!1} \!-\!\frac{1}{|\Sc_{i,t}|}\!\sum_{j \in \Sc_{i,t}}\!\! \nabla_{{{\y}}}  f_{ij}({{\x}}_{i,t\!-\!1},{{\y}}_{i,t-1})$.
Thanks to the periodic full gradients and recursive correction terms, \alg is able to achieve a convergence rate of $\mathcal{O}(1/T)$.
Moreover, due to the stochastic subsampling of  $\Sc_{i,t}$, \alg has a {\em lower} sample complexity than GT-GDA \cite{tsaknakis2020decentralized}.
The full description of \alg is shown in Algorithm~\ref{Algorithm: Prox-GT-SRVR}.

\begin{algorithm}[t!]
	\caption{\algns/\algplus at Agent $i$.}\label{Algorithm: Prox-GT-SRVR}
	\begin{algorithmic} [1]
		\REQUIRE If { \alg:}$ |\Rc_{i,t}| \!=\! n$;\\
		If {\algplus:} $$ |\Rc_{i,t}| = \min \{c_{\gamma} \sigma^{2}\left(\gamma_{t}\right)^{-1}, c_{\epsilon} \sigma^{2} \epsilon^{-1}, n\}.$$
		\STATE Set prime-dual parameter pair $({{\x}}_{i,0},{{\y}}_{i,0}) = ({{\x}}^0,{{\y}}^0)$.
		\STATE Draw $\Rc_{i,0}$ samples without replacement and calculate local stochastic gradient estimators as $$\p_{i,0} \!=\! \v_{i,0} \!=\! \frac{1}{|\Rc_{i,0}|}\!\! \sum_{j \in \Rc_{i,0}}\!\!\! \nabla_{{{\x}}}  f_{ij}({{\x}}_{i,0},{{\y}}_{i,0});$$
		$$\d_{i,0} \!=\! \u_{i,0}\!=\! \frac{1}{|\Rc_{i,0}|}\!\! \sum_{j \in \Rc_{i,0}} \!\!\!\nabla_{{{\y}}}  f_{ij}({{\x}}_{i,0},{{\y}}_{i,0});$$
		\FOR{$t = 1, \cdots, T$}
		\STATE  Update local parameters $({{\x}}_{i,t+1},{{\y}}_{i,t+1})$ as in Eq.~\eqref{Eq: DA_updating_x}-\eqref{Eq: consenus_updating_y};
		\STATE Compute local estimators $(\v_{i,t+1},\u_{i,t+1})$ as in Eq.~\eqref{Eq: Updating_uv};
		\STATE Track global gradients $(\p_{i,t+1},\d_{i,t+1})$ as in Eq.~\eqref{Eq: Tracking_updating};
		\ENDFOR
	\end{algorithmic}
\end{algorithm}

\subsection{The \algplus Algorithm} \label{sec:alg+}

Note that in \algns, full gradients are required for every $q$ steps, which may still incur high computational costs in some situations.
Also, in the initialization phase of \alg (before the main loop), agents need to evaluate full gradients, which could be time-consuming.
To address these challenges, we enhance the \alg with an adaptive batch size technique, and this enhanced version is called \alg{$^+$}.
Specifically, we modify the gradient estimators in ~\eqref{Eq: Updateing_v} and \eqref{Eq: Updating_u} in iteration $t$ with $\text{mod}(t,q)=0$ as follows :
\begin{align}
&\v_{i,t} =
\frac{1}{|\Rc_{i,t}|}\!\! \sum_{j \in \Rc_{i,t}}\!\!\! \nabla_{{{\x}}}  f_{ij}({{\x}}_{i,t},{{\y}}_{i,t}), \quad\\&
\u_{i,t} =
\frac{1}{|\Rc_{i,t}|}\!\! \sum_{j \in \Rc_{i,t}}\!\!\! \nabla_{{{\y}}}  f_{ij}({{\x}}_{i,t},{{\y}}_{i,t}),
\end{align}
where $\Rc_{i,t}$ is a subsample set (sampling without replacement), whose size
is chosen as
\begin{align} \label{choose_batch}
|\Rc_{i,t}| = \min \{c_{\gamma} \sigma^{2}\left(\gamma_{t}\right)^{-1}, c_{\epsilon} \sigma^{2} \epsilon^{-1}, n\}.
\end{align}
Here, $c_{\gamma}$ and $c_{\epsilon}$ are problem-dependent constants  to be defined later, $\sigma^{2}$ is the variance bound of data heterogeneity across agents (also defined later), and $\gamma_{t+1} \triangleq \frac{1}{q} \sum_{i=\left(n_{t}-1\right) q}^{t}\left\| \tilde{{\x}}_{t}-1 \otimes \bar{{\x}}_{t} \right\|^{2}$, where $\otimes$ represents the Kronecker product operator.

The selection of $|\Rc_{i,t}|$ is motivated by the fact that the periodic full gradient evaluation only plays an important role in the later stage of the convergence process: in the later stage of the convergence process, we need more accurate update direction.  
Later, we will see that under some mild assumptions and parameter settings, \algplus has the same convergence rate as that of \algns.
The full description of the \algplus algorithm is also illustrated in Algorithm~\ref{Algorithm: Prox-GT-SRVR}.

\subsection{ Theoretical Results of the $\mathsf{\mathbf{PRECISION}}$ and $\mathsf{\mathbf{PRECISION}^+}$ Algorithms} \label{sec:thms}

Before presenting the theoretical results of our algorithms, we first state the following assumptions:
\begin{assum}[Global Objective]\label{Assump: obj}
	The functions $F({\x},{\y}) = \frac{1}{m}\sum_{i=1}^{m} [F_i({\x}_i,{\y}_i)]$ and $J({\x}) = \max_{{\y}\in \mathcal{Y}} F({\x},{\y})$ satisfy:
	\begin{enumerate}[topsep=1pt, itemsep=-.1ex, leftmargin=.25in]
		\item[(a)] (Boundness from Below): There exists a finite lower bound $Q^* =Q({\x}^*)  = \inf_{{\x}} (J({\x})+h(\x) ) > -\infty;$
		
		\item[(b)] (Strong Concavity in ${\y}$): Local objective function $F_i({\x},\cdot)$ is $\mu$-strongly concave for fixed ${\x}\in\Rb^{p_1}$, i.e., there exists a positive constant $\mu$ such that $\|\nabla_{{\y}} F_i({\x}, {\y}) \!-\! \nabla_{{\y}} F_i({\x}, {\y}^\prime)\|\!\ge\! \mu\|{\y}\!-\!{\y}^\prime\|, \forall~\bm {x},{\y},{\y}^\prime \!\in\! \mathbb{R}^{p_2}, i \!\in\! [m]$.
%
		
		\item[(c)] (Bounded Gradient at Maximum): The partial gradient at  every $(\x,\nabla_{{\x}} F({x}, {\y}^*({\x})))$ pair is bounded, i.e., $\|\nabla_{{\x}} F({\x}, {\y}^*({\x}))\|<\infty$, $\forall ~{\x}\in\Rb^{p_1}$.
	\end{enumerate}
\end{assum}

Assumptions~\ref{Assump: obj}(a) and \ref{Assump: obj}(b) are standard in the literature.
Assumption~\ref{Assump: obj}(c) guarantees that $\nabla J({\x}) = \nabla_{{\x}} F({\x}, {\y}^*({\x}))$. 

%

\begin{assum}[Lipschitz Smoothness of Local Objectives]\label{Assump: Individual Lipschitz}
	The function $f_{ij}({{\x}},\cdot)$ is $L_f$-Lipschitz smooth, i.e., there exists a constant $L_f>0$, such that $\nabla f_{ij}({{\x}}, {{\y}}) \!=\! [\nabla_{{{\x}}} f_{ij}({{\x}}, {{\y}}]^\top, \nabla_{{{\y}}} f_{ij}({{\x}}, {{\y}})^\top)^\top$ satisfies $\|\nabla f_{ij}({{\x}}, {{\y}})\!-\! \nabla f_{ij}({{\x}}^\prime, {{\y}}^\prime)\|^2\! \le \!L_f^2\|{{\x}}\!-\!{{\x}}^\prime\|^2 \!+\! L_f^2\|{{\y}}\!-\!{{\y}}^\prime\|^2$, $\forall~ {{\x}},{{\x}}^\prime \in\mathcal{X},  {{\y}},{{\y}}^\prime \in \mathcal{Y}, i \in [m], j\in[n]$.
\end{assum}

Further, we have the following assumption only for the algorithm \algplusns:
\begin{assum}[Bounded Variance]\label{Assump: Bounded Variance}
	There exists a constant $\sigma^2>0$, such that $\Eb\|\nabla f_{ij}({{\x}}, {{\y}}) - \nabla F_{i}({{\x}}, {{\y}})\|^2\le\sigma^2 $, $\forall~ {{\x}}, {{\y}},\in \mathbb{R}^p, i \in [m], j\in[n]$.
\end{assum}
%

To address the challenges in characterizing the convergence rate for NCX-SCV decentralized constrained min-max problems, we propose the following {\em new} metric, which is the key to the success of establishing all convergence results in this paper:
\begin{align}\label{Eq: metric1}
\mathfrak{M}_t \triangleq& \Eb[
 \left\|\tilde{{\x}}_{t} - 1 \otimes \bar{{\x}}_t\right\|^{2} +\left\|{\x}_{t} -1 \otimes \bar{{\x}}_{t}\right\|^{2}\notag\\&+\left\|{\y}_{t} -1 \otimes \bar{{\y}}_{t}\right\|^{2} +\|{\y}_t^* -  {\bar{{\y}}}_t\|^2],
\end{align}
where  ${\y}_t^*$ denotes ${\y}^*({\bar{{\x}}}_t) = \arg\max_{{y}\in\Rb^p} F({\bar{{\x}}}_t,{\y})$.
The first two terms in~\eqref{Eq: metric1} are inspired by the metric in SONATA \cite{scutari2019distributed}, which measures the converging progress of non-convex decentralized {\em minimization} problems (not min-max). 
 %
 The third term in~\eqref{Eq: metric1} measures the consensus error of local copies on $\y$.
The fourth term in~\eqref{Eq: metric1} quantifies ${\bar{{\y}}}_t$'s convergence to the point ${\y}_t^*$ for $F({\bar{{\x}}}_t,\cdot)$.
Thus, as $\mathfrak{M}_t \rightarrow 0$, we have that the algorithm reaches a consensus on a first-order stationary point (FOSP) of the original decentralized constrained min-max optimization problem.


With the metric in~\eqref{Eq: metric1}, the convergence rates of algorithms \alg/\algplus can be characterized as follows:
\begin{thm}[Convergence of \algns]\label{Thm: Prox-GT-SRVR}
Under Assumption~\ref{Assump: obj} (a)-(d) and Assumption~\ref{Assump: Individual Lipschitz}, suppose that $\beta\leq \min \Big\{ \frac{\tau}{12}  ,\frac{1}{3} \Big\},$

$\alpha\leq \frac{1}{4L_f}, q = |\Sc_{i,t}| = \lceil  \sqrt{n}  \rceil $ hold and let $c_1= \frac{1-\lambda^2}{1+\lambda^2}$, if the step-sizes satisfy: 
	$
%
\eta\!\leq  \min \Big\{ \frac{1}{8}, \frac{c_1 m \mu}{ 375\alpha L_f^2},  \frac{15L_f^2}{\beta \mu \alpha^2c_1} ,\frac{3c_1^2 m}{10(1+c_1)\mu \alpha} \Big\},
%
\nu\leq\min \Big\{ \frac{c_1 m\beta}{40L_f^2} , $
$\frac{2c_1 m\beta}{5\tau} ,\frac{2c_1\beta\mu^2 m}{375L_f^4}, \frac{5 \tau}{3m c_1} ,
%
\frac{\tau}{6 m (1+1/c_1)}, \frac{3\mu\eta \alpha\tau}{17L_f^2} ,\frac{\tau}{3(L_f+ L_f^2/\mu) } \Big\} ,
$
then the following convergence result for the \alg algorithm holds:
\begin{align}
\frac{1}{T}\!\sum_{t\!=\!0}^{T-1} \Eb[\mathfrak{M}_t]
\!\le\!
\frac{\Eb [\mathfrak{p}_{0} - Q^*] }{\min\{C_1,C_2,C_3,\nu L_f^2/2\}(T+1)},\notag
\end{align}
where $Q^*=Q({{\x}}^*)$ and $\mathfrak{p}_t$ is a potential function defined as:
\begin{align} \label{eqn_Thm1_pt}
 \mathfrak{p}_{t}&  \triangleq Q({\bar{{\x}}}_{t})  + \frac{4\nu L_f^2}{ \beta \mu\eta^2 } \|{\bar{{\y}}}_{t}  -  {\y}_{t}^*\|^2   \notag\\&+  \frac{1}{m} \sum_{i = 1}^{m} [\|{\x}_{i,t}  - {\bar{{\x}}}_t\|^2  + \|{\y}_{i,t}  - {\bar{{\y}}}_t\|^2 ],
\end{align}
and $C_1,C_2, C_3\geq 0$ are constants. 
Due to space limitation, detailed definition of these constants are relegated to our Appendix.
Also, in (\ref{eqn_Thm1_pt}), $Q({{\x}}_t) \triangleq \max_{{{\y}}} F({{\x}}_t,{{\y}}) + h (  {\x}_t)$,  
and ${{\y}}_t^* = \arg\max_{{{\y}}} F({{\x}}_t,{{\y}})$.

\end{thm}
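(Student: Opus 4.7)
The plan is to establish a one-step descent inequality for the potential function $\mathfrak{p}_t$ and then telescope over $t = 0, \ldots, T-1$. The four components in $\mathfrak{p}_t$ (the envelope $Q(\bar{x}_t)$, the saddle-point gap $\|\bar{y}_t - y_t^*\|^2$, and the $x$- and $y$-consensus errors) are each controlled by a separate lemma, and the tricky step-size conditions in the statement are precisely what is needed so that cross-coupling terms get absorbed.

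First I would establish the standard ``tool'' lemmas. (i) \textbf{Smoothness of the envelope}: under Assumption \ref{Assump: obj}(b),(c) and Assumption \ref{Assump: Individual Lipschitz}, $J(x) = \max_y F(x,y)$ is $L_J$-smooth with $L_J = L_f + L_f^2/\mu$ and $\nabla J(x) = \nabla_x F(x,y^*(x))$, and $y^*(\cdot)$ is $L_f/\mu$-Lipschitz. (ii) \textbf{SARAH-type variance bound}: by a standard induction over the inner loop of length $q$, using Assumption \ref{Assump: Individual Lipschitz} and $|\mathcal{S}_{i,t}| = \lceil\sqrt{n}\rceil$,
\begin{equation*}
\mathbb{E}\|v_{i,t} - \nabla_x F_i(x_{i,t},y_{i,t})\|^2 \le \tfrac{L_f^2}{|\mathcal{S}_{i,t}|}\sum_{s}\bigl(\|x_{i,s}-x_{i,s-1}\|^2 + \|y_{i,s}-y_{i,s-1}\|^2\bigr),
\end{equation*}
and the analogous bound for $u_{i,t}$. (iii) \textbf{Gradient-tracking contraction}: stacking $p_t, d_t$ and using double stochasticity of $M$ plus the recursion in (\ref{Eq: Tracking_updating}), the deviations $\|p_t - 1\otimes \bar{p}_t\|^2$ and $\|d_t - 1\otimes \bar{d}_t\|^2$ contract at rate $\lambda^2$ up to increments $\|v_t - v_{t-1}\|^2$ and $\|u_t - u_{t-1}\|^2$; these increments are in turn bounded by successive iterate differences via Lipschitz smoothness.

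Next I would bound each piece of $\mathfrak{p}_{t+1} - \mathfrak{p}_t$. For $Q(\bar{x}_{t+1}) - Q(\bar{x}_t)$, use the $L_J$-smoothness of $J$, the convexity of $h$, and the defining optimality of the proximal subproblem (\ref{Eq: DA_updating_x}) to extract a negative $-\Omega(\nu)\|\tilde{x}_t - 1\otimes \bar{x}_t\|^2$ descent term, at the cost of errors in the $p_{i,t}$ direction and in $\bar{y}_t - y_t^*$ (the latter via $\|\nabla_x F(\bar{x}_t,\bar{y}_t) - \nabla J(\bar{x}_t)\| \le L_f\|\bar{y}_t - y_t^*\|$). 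For $\|\bar{y}_{t+1} - y^*_{t+1}\|^2$, apply the three-point lemma for the projected/proximal gradient ascent step and strong concavity to get a contraction $(1 - \Omega(\eta\alpha\mu))\|\bar{y}_t - y_t^*\|^2$, plus cross terms in the $y$-consensus error and in $\|\bar{x}_{t+1} - \bar{x}_t\|^2$ (bounded by $\mathcal{O}(\nu^2)\|\tilde{x}_t - 1\otimes \bar{x}_t\|^2$). For the consensus errors, the spectral gap $1 - \lambda^2$ gives contraction coefficients $c_1 = (1-\lambda^2)/(1+\lambda^2)$ with residuals driven by $\nu^2\|\tilde{x}_t - 1\otimes \bar{x}_t\|^2$, $\eta^2\|d_t\|^2$, and the gradient-tracking errors.

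Finally I would assemble the four inequalities and choose step-sizes so that the coefficient of each term in $\mathfrak{M}_t$ is strictly negative. The bookkeeping is where the listed thresholds on $\eta$, $\nu$, $\alpha$, $\beta$, $\tau$ come from: for example, the $\nu \le c_1 m \beta /(40 L_f^2)$ bound ensures the variance-reduction residual is dominated by the $x$-consensus contraction; the $\eta \le c_1 m \mu /(375 \alpha L_f^2)$ bound keeps the $y$-tracking increment under the saddle-gap contraction; and the proximal-control conditions on $\tau$ and $\beta$ secure the descent term $-C_1 \|\tilde{x}_t - 1\otimes \bar{x}_t\|^2$ on the envelope. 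Telescoping $\mathfrak{p}_{t+1} \le \mathfrak{p}_t - \min\{C_1,C_2,C_3,\nu L_f^2/2\}\,\mathfrak{M}_t$ from $t=0$ to $T-1$ and using $\mathfrak{p}_t \ge Q^*$ yields the claimed $\mathcal{O}(1/T)$ bound. The principal obstacle is the global step-size tuning: each lemma injects cross-terms in every other metric (consensus, tracking, variance, saddle gap), and finding a consistent assignment of coefficients so that all four negative terms survive simultaneously is the only delicate part of the argument; once that algebra is done, the telescoping step is immediate.
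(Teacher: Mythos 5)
Your proposal is correct and follows essentially the same route as the paper: a potential function combining the envelope $Q(\bar{\x}_t)$, the saddle gap $\|\bar{\y}_t-\y_t^*\|^2$, and the two consensus errors; per-term descent/contraction lemmas (envelope smoothness with $L_J=L_f+L_f^2/\mu$, three-point plus strong-concavity contraction for the inner variable, spectral-gap consensus contraction, and a SARAH-type variance bound over each epoch of length $q=\lceil\sqrt{n}\rceil$); and then step-size bookkeeping so that the coefficients $C_1,\dots,C_4$ are nonnegative before telescoping with $\mathfrak{p}_t\ge Q^*$. The only cosmetic difference is that you propose a separate contraction for the tracker deviations $\|\p_t-\1\otimes\bar{\p}_t\|^2$, whereas the paper exploits the average-preservation property of gradient tracking to reduce the tracking error directly to the estimator variance of the averages; both are standard and lead to the same bound.
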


\begin{figure*}[htbp!]
	\centering
	\subfigure[Loss function value vs. sample complexity.]{
		\includegraphics[width=0.225\textwidth]{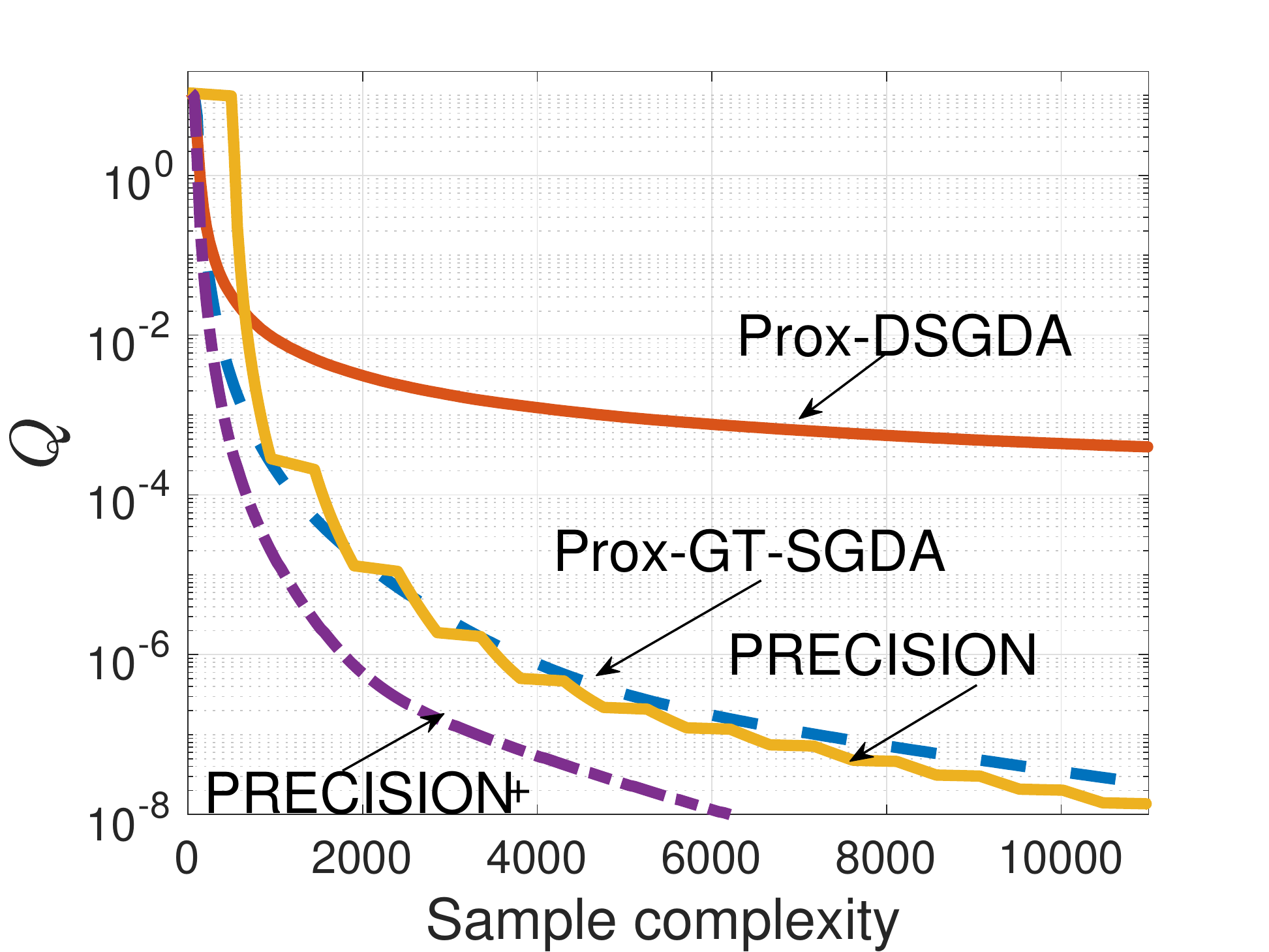}
		\label{fig:sample_MSPBE}
	}
	\hspace{0.001\textwidth}
	\subfigure[Convergence metric vs. sample complexity.]{
		\includegraphics[width=0.225\textwidth]{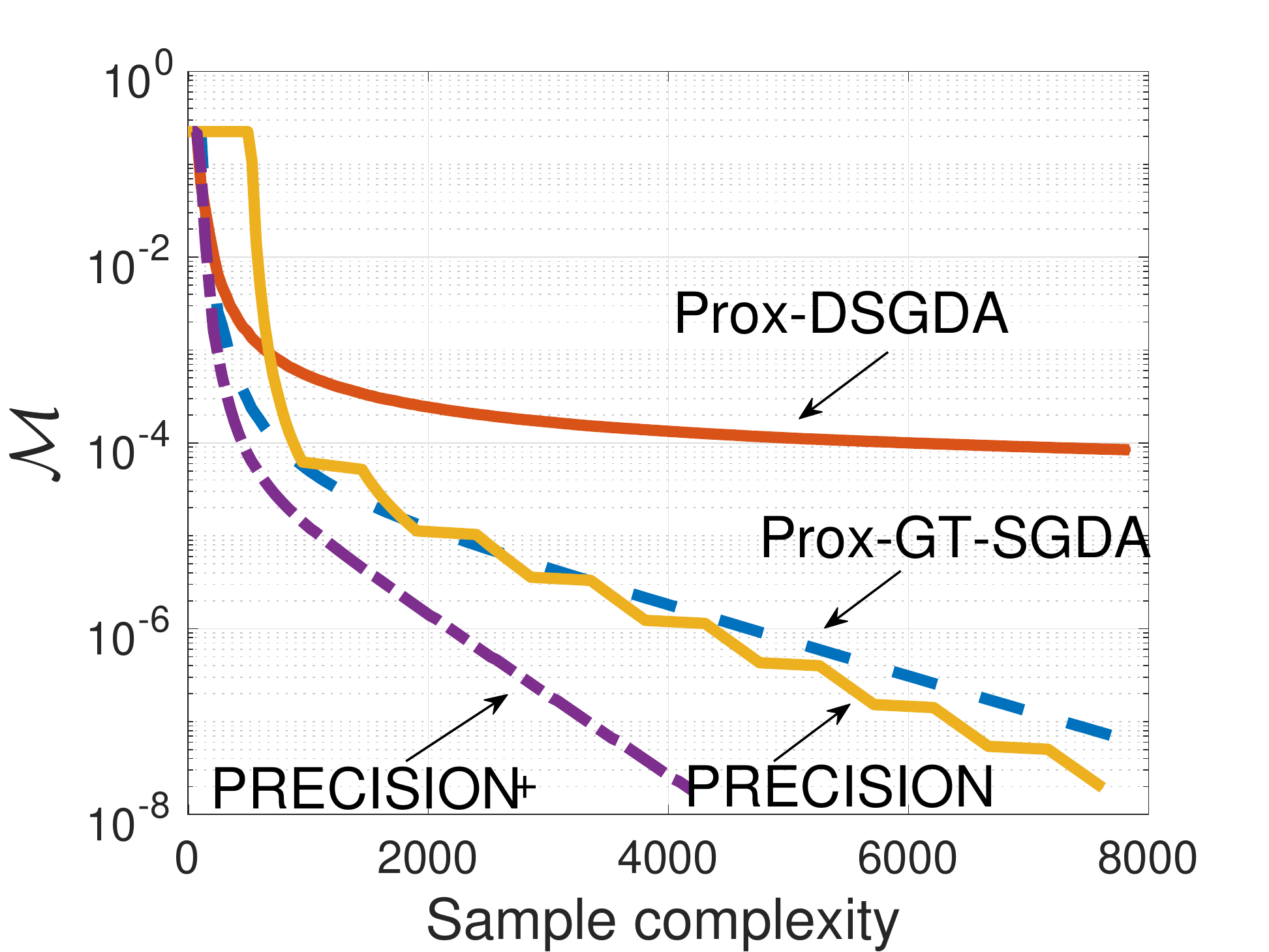}
		\label{fig:sample_Metric}
	}
	\hspace{0.001\textwidth}
	\subfigure[Loss function value vs. communication complexity.]{
		\includegraphics[width=0.225\textwidth]{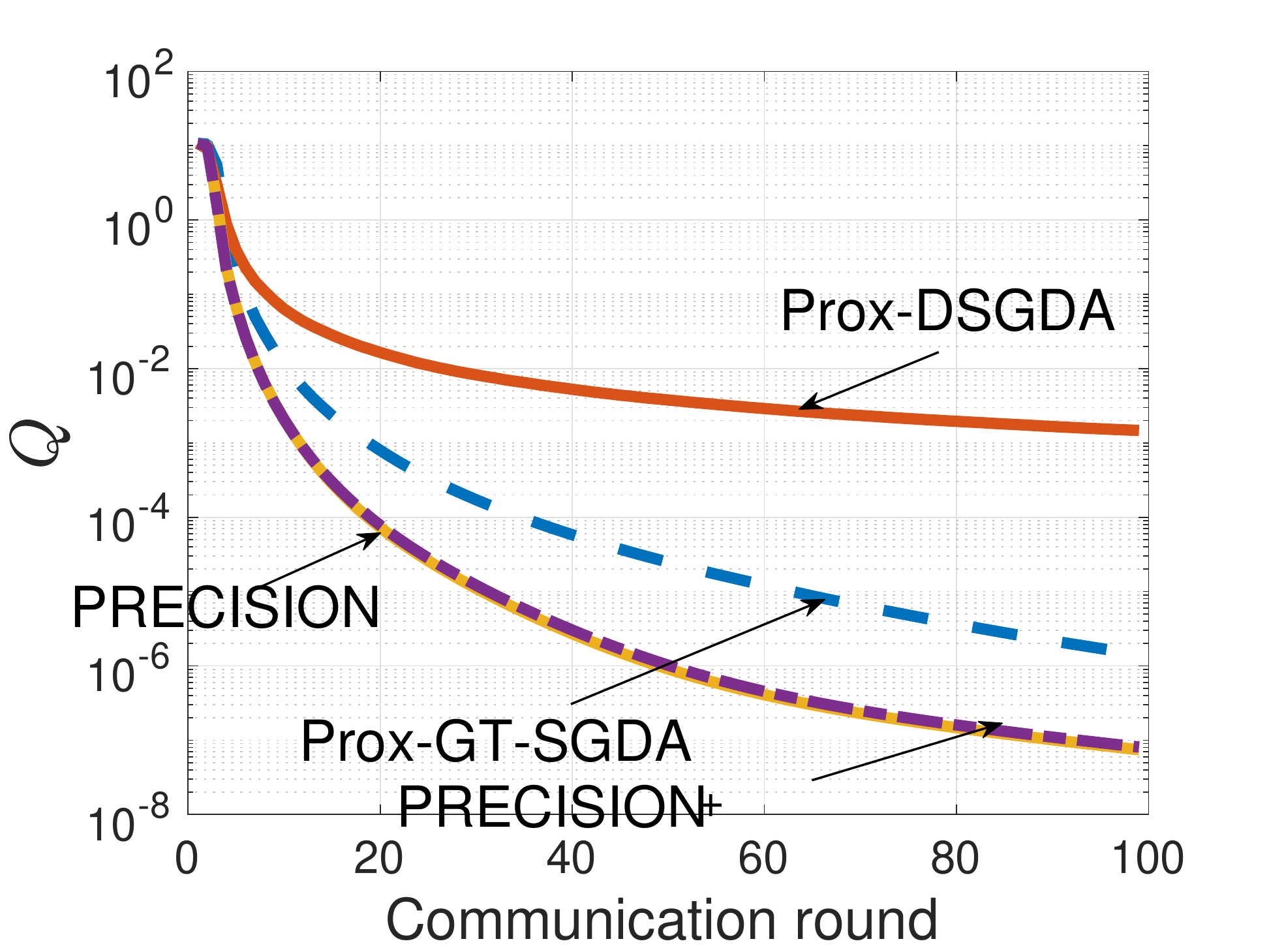}
		\label{fig:comms_MSPBE}
	}
	\hspace{0.001\textwidth}
	\subfigure[Convergence metric  vs. communication \!\!  complexity.]{
		\includegraphics[width=0.225\textwidth]{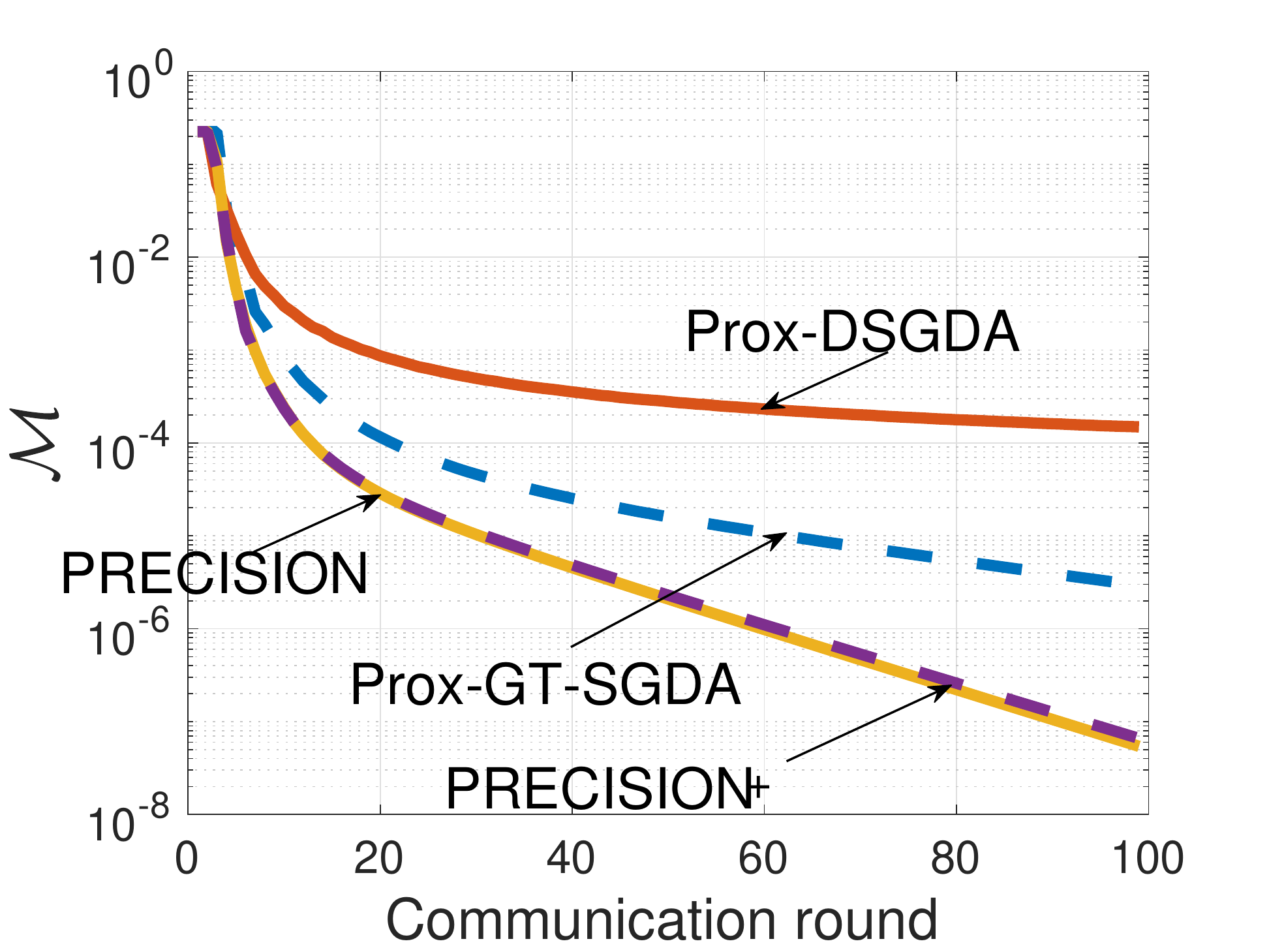}
		\label{fig:comms_Metric}
	}
	\caption{Comparisons of algorithms for decentralized NCX-SCV min-max optimization problems.}
	\label{fig_compare}
\end{figure*}

\begin{thm}[Convergence of \algplusns]\label{Thm: Prox-GT-SRVR+}
Under Assumption~\ref{Assump: obj} (a)-(d), Assumptions~\ref{Assump: Individual Lipschitz}-\ref{Assump: Bounded Variance}, and the same parameter settings as in Theorem \ref{Thm: Prox-GT-SRVR}, with additional parameters $c_{\gamma}$ and $c_{\epsilon}$ satisfying the conditions:
\begin{align} &
c_{\gamma} \geq  (\frac{75 {\eta\alpha}}{8 \mu}\frac{1}{m} +\frac{\nu}{ \beta } \frac{1}{m} ) \frac{\nu\tau}{12}, \quad c_{\epsilon}>0, 
\end{align}and the potential function as stated in Theorem \ref{Thm: Prox-GT-SRVR}, the following convergence result for \algplus holds:
\begin{align} \label{thm2_eq}
\frac{1}{T}\!\sum_{t\!=\!0}^{T-1} \Eb[\mathfrak{M}_t]
\!&\le
\frac{\Eb [\mathfrak{p}_{0} -Q^*] }{(T+1)\min\{ C_1,C_2',C_3, \nu L_f^2/2\}} \notag\\
&+ \bigg( \frac{75 {\eta\alpha}}{16 \mu}\frac{2}{m} +\frac{\nu}{2 \beta } \frac{2}{m}  \bigg) \frac{ \epsilon}{c_{\epsilon} },
\end{align}
where the constant $C_2'\geq$ and the definition of $C_2'$ is relegated to our Appendix.
\end{thm}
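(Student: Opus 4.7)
The plan is to follow the proof architecture of Theorem \ref{Thm: Prox-GT-SRVR} as a template, modifying only the places where the periodic full-gradient evaluations enter, since those are the only steps that differ between \alg and \algplusns. In \algns, the checkpoint estimators $\v_{i,t},\u_{i,t}$ at iterations with $\text{mod}(t,q)=0$ are exact gradients of $F_i$, so they contribute zero variance to the descent inequality for the potential function $\mathfrak{p}_t$. In \algplusns, they are mini-batch means over $\mathcal{R}_{i,t}$ drawn without replacement, which under Assumption~\ref{Assump: Bounded Variance} gives the standard per-sample bound
\begin{align*}
\Eb\|\v_{i,t}-\nabla_{\x}F_i(\x_{i,t},\y_{i,t})\|^2 \le \tfrac{\sigma^2}{|\mathcal{R}_{i,t}|}, \quad \Eb\|\u_{i,t}-\nabla_{\y}F_i(\x_{i,t},\y_{i,t})\|^2 \le \tfrac{\sigma^2}{|\mathcal{R}_{i,t}|},
\end{align*}
with the bound vanishing whenever $|\mathcal{R}_{i,t}|=n$. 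By the batch-size rule \eqref{choose_batch}, whichever of the three arguments of the $\min$ is active, we obtain $\sigma^2/|\mathcal{R}_{i,t}|\le \gamma_t/c_\gamma+\epsilon/c_\epsilon$ (with the $n$-case giving zero).

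The next step is to push this checkpoint variance through the SARAH-style recursion inside each length-$q$ epoch. Retracing the error-propagation lemma behind Theorem~\ref{Thm: Prox-GT-SRVR}, the expected squared error of $\v_{i,t}$ and $\u_{i,t}$ throughout the epoch is the checkpoint error plus the same telescoping terms as in \algns. Consequently every variance bound used to establish the per-iteration descent of $\mathfrak{p}_t$ picks up an additive contribution of $\sigma^2/|\mathcal{R}_{i,t^\star}|$, where $t^\star$ is the most recent checkpoint. Tracking how this extra term appears through (i) the strong-concavity inequality controlling $\|\bar{\y}_t-\y_t^*\|^2$ and (ii) the proximal/gradient-tracking term for $\x$, one finds the extra one-step descent slack is at most $\bigl(\tfrac{75\eta\alpha}{8\mu m}+\tfrac{\nu}{\beta m}\bigr)\bigl(\tfrac{\gamma_t}{c_\gamma}+\tfrac{\epsilon}{c_\epsilon}\bigr)/2$, which is precisely the combination that appears in the hypothesis on $c_\gamma$ and in the residual term of \eqref{thm2_eq}.

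The core new idea is then the absorption step. Recall $\gamma_{t+1}=\tfrac{1}{q}\sum_{s=(n_t-1)q}^{t}\|\tilde{\x}_s-\1\otimes\bar{\x}_s\|^2$ and that the Theorem~\ref{Thm: Prox-GT-SRVR} descent inequality already produces a strictly negative term proportional to $-\tfrac{\nu\tau}{12m}\|\tilde{\x}_t-\1\otimes\bar{\x}_t\|^2$. Summing the modified descent inequality across an entire epoch causes the $\gamma_t/c_\gamma$ residuals to line up with exactly these negative terms after a shift of indices; the condition $c_\gamma\ge \bigl(\tfrac{75\eta\alpha}{8\mu m}+\tfrac{\nu}{\beta m}\bigr)\tfrac{\nu\tau}{12}$ then guarantees that the positive slack from variance is dominated by the negative consensus-error decrease, at the cost of shrinking the coefficient of $\|\tilde{\x}_t-\1\otimes\bar{\x}_t\|^2$ in the effective descent. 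This redefined coefficient is precisely $C_2'$, while $C_1$, $C_3$, and $\nu L_f^2/2$ are unchanged because they come from parts of the descent that do not involve checkpoint variance. The only surviving residual is the $\epsilon/c_\epsilon$ part, which cannot be absorbed and yields the additive term in \eqref{thm2_eq}.

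The main obstacle is the alignment in the absorption step: $\gamma_t$ is a \emph{trailing average} over the previous $q$ iterations, whereas the negative consensus-error terms from the descent inequality live at the current iteration. A naive single-step absorption fails, so one must either (a) sum the descent inequality over the same $q$-window used in the definition of $\gamma_t$, or (b) apply a shifted-index summation so that the $-\tfrac{\nu\tau}{12m}\|\tilde{\x}_s-\1\otimes\bar{\x}_s\|^2$ contributions across one epoch exactly match the $\gamma_t/c_\gamma$ slack used at the subsequent checkpoint. Making this shift tight without loosening constants—and verifying that boundary epochs at $t=0$ and $t=T-1$ do not spoil the telescoping into $\mathfrak{p}_0-Q^*$—is the delicate bookkeeping that turns the hypothesis on $c_\gamma$ into the clean statement of the theorem. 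Once that alignment is established, telescoping the modified descent from $t=0$ to $T-1$ and dividing by $T+1$ delivers \eqref{thm2_eq}.
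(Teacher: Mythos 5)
Your proposal follows the paper's own proof essentially verbatim: you bound the checkpoint-estimator variance by $\sigma^2/|\Rc_{i,t}|\le \gamma_t/c_\gamma+\epsilon/c_\epsilon$ via the $\min$ in the batch-size rule \eqref{choose_batch}, absorb the $\gamma_t/c_\gamma$ part (a trailing average of $\|\tilde{\x}_s-\1\otimes\bar{\x}_s\|^2$ over the preceding epoch) into the coefficient of the consensus-error term to produce $C_2'$ under the stated condition on $c_\gamma$, and carry the irreducible $\epsilon/c_\epsilon$ part as the additive residual in \eqref{thm2_eq}. This is exactly the argument in the paper's appendix, so only minor constant bookkeeping separates the two.
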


\begin{rem}\label{rem22}{\em
Compared to existing works on decentralized min-max optimization\cite{tsaknakis2020decentralized, zhang2021taming}, it is worth noting that the main difficulty in establishing convergence results in Theorem~\ref{Thm: Prox-GT-SRVR} and Theorem~\ref{Thm: Prox-GT-SRVR+} arises from the proximal operator in the outer-level subproblem. 
This operator precludes the use of conventional descent lemmas for convergence analysis, as outlined in Lemma~\ref{Desceding J} in the Appendix.
 Furthermore, unlike in single-agent constrained bilevel optimization, the direct proximal extension of the algorithm in ~\cite{hong2022divergence} $(\widetilde{{\x}}_{i,t} ={\arg\min}_{{\x} \in \mathcal{X}} \| {\x}- ({\x}_{i,t} - \tau\p_{i,t}) \|^2$) will diverge for the decentralized constrained min-max problem in this paper.
  To address this challenge, we employ a special proximal update rule in~\eqref{Eq: DA_updating_x}. 
  The proximal operator $\tilde{{\x}} _{i,t}$ in~\eqref{Eq: DA_updating_x}, consensus updating~\eqref{Eq: consenus_updating_x}, and the corresponding local update~\eqref{Eq: consenus_updating_x} are the key in addressing the {\em non-smooth} objective challenge encountered in decentralized learning.
  }
\end{rem}

\begin{rem} 
	{\em
		In  Theorems~\ref{Thm: Prox-GT-SRVR} and \ref{Thm: Prox-GT-SRVR+}, the step-sizes and convergence rates depend on the network topology.
		For a sparse network, $\lambda$ is close to (but not exactly) one (recall that $\lambda = \max\{|\lambda_2|,|\lambda_m|\} <1$),
      the step-size needs to be smaller as $\lambda$ gets close to one, which leads to a slower convergence.
Additionally, the convergence performance of \algplus is affected by constant $( \frac{75 {\eta\alpha}}{16 \mu}\frac{2}{m} +\frac{\nu}{2 \beta } \frac{2}{m}  ) \frac{ \epsilon}{c_{\epsilon} }$, which depends on the inexact gradient estimation at the $t$-th iteration with $\text{mod}(t,q) = 0$.
Intuitively, a larger value of $c_\epsilon$ allows us to use a larger batch size as shwon in \eqref{choose_batch}, which in turn leads to faster convergence. Theoretically, we can observe that a larger value of $c_\epsilon$ results in a smaller constant $( \frac{75 {\eta\alpha}}{16 \mu}\frac{2}{m} +\frac{\nu}{2 \beta } \frac{2}{m}  ) \frac{ \epsilon}{c_{\epsilon} }$ in \eqref{thm2_eq}, thereby yielding a more accurate estimation.}
\end{rem}

Following from Theorems~\ref{Thm: Prox-GT-SRVR} and \ref{Thm: Prox-GT-SRVR+}, we immediately have the sample and communication complexity results for the \alg and \algplus algorithms:
\begin{cor}\label{Cor: complexity-Prox-GT-SRVR}
Under the conditions in Theorems~\ref{Thm: Prox-GT-SRVR} and ~\ref{Thm: Prox-GT-SRVR+}, and with $q=\sqrt{n}$,
 to achieve an $\epsilon$-stationary solution, 
the following results for the \alg and \algplus algorithms hold:
 \begin{itemize}[topsep=1pt, itemsep=-.1ex, leftmargin=.15in]
 	\item Communication Complexity: the numbers of total communication rounds are upper bounded by $\mathcal{O}(\epsilon^{-2})$
	\item Sample Complexity: The total samples evaluated across the network are upper bounded by $\mathcal{O}(m\sqrt{n}\epsilon^{-2}))$.
\end{itemize}
\end{cor}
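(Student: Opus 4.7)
The plan is to read off the two complexity bounds directly from the $\mathcal{O}(1/T)$ convergence rates in Theorems~\ref{Thm: Prox-GT-SRVR} and~\ref{Thm: Prox-GT-SRVR+}, since once the per-iteration decay is in hand, the corollary reduces to bookkeeping. First I would translate the averaged guarantee $\frac{1}{T}\sum_{t=0}^{T-1}\Eb[\mathfrak{M}_t] \le C_0/T$ from Theorem~\ref{Thm: Prox-GT-SRVR} into an iteration count: imposing $C_0/T \le \epsilon^2$ yields $T = \mathcal{O}(\epsilon^{-2})$, and returning the iterate with the smallest $\mathfrak{M}_t$ (or a uniformly random one) supplies an $\epsilon$-stationary solution. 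For \algplusns, Theorem~\ref{Thm: Prox-GT-SRVR+} attaches a residual term $\bigl(\frac{75\eta\alpha}{16\mu m}+\frac{\nu}{2\beta m}\bigr)\frac{\epsilon}{c_\epsilon}$; I would pick the problem-dependent constant $c_\epsilon$ large enough to drive this residual below $\epsilon^2$, after which the same $T=\mathcal{O}(\epsilon^{-2})$ iteration count carries over.

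Next I would read off the communication complexity directly from Algorithm~\ref{Algorithm: Prox-GT-SRVR}: each outer iteration executes one neighbor exchange of the primal-dual pair $(\x_i,\y_i)$ via~\eqref{Eq: consenus_updating_x}--\eqref{Eq: consenus_updating_y} and one of the tracking vectors $(\p_i,\d_i)$ via~\eqref{Eq: Tracking_updating}, while the proximal steps~\eqref{Eq: DA_updating_x}--\eqref{Eq: DA_updating_y} and the SARAH-style gradient estimators in~\eqref{Eq: Updating_uv} are purely local. Thus the per-iteration communication cost is $\Theta(1)$, and the total communication is $\Theta(T) = \mathcal{O}(\epsilon^{-2})$.

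For the sample complexity I would partition the $T$ iterations into \emph{checkpoint} iterations with $\mathrm{mod}(t,q)=0$ and the remaining \emph{inner} iterations. With $q = \lceil\sqrt{n}\rceil$, the number of checkpoint iterations is at most $\lceil T/\sqrt{n}\rceil$, and at each of them agent $i$ uses at most $n$ IFO calls (exactly $n$ for \algns; at most $n$ for \algplus because the batch size in~\eqref{choose_batch} is capped by $n$). On every inner iteration agent $i$ uses $|\Sc_{i,t}|=\lceil\sqrt{n}\rceil$ calls. Summing gives a per-agent cost of
\[
\bigl\lceil T/\sqrt{n}\bigr\rceil \cdot n \;+\; T\cdot\lceil\sqrt{n}\rceil \;=\; \mathcal{O}(T\sqrt{n}),
\]
and plugging in $T=\mathcal{O}(\epsilon^{-2})$ and multiplying by $m$ agents yields the desired $\mathcal{O}(m\sqrt{n}\,\epsilon^{-2})$.

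The main obstacle I anticipate is the \algplus bookkeeping: one must verify that enlarging $c_\epsilon$ to swallow the residual $\mathcal{O}(\epsilon/c_\epsilon)$ term in Theorem~\ref{Thm: Prox-GT-SRVR+} does not inflate the batch size beyond the $n$ cap in~\eqref{choose_batch}. Because the $\min$ in~\eqref{choose_batch} is always dominated by $n$ in the worst case, the checkpoint sample count per agent never exceeds $n$, so the overall scaling matches that of \algns. The only additional check is that any savings obtained from the $c_\gamma\sigma^2\gamma_t^{-1}$ branch only tighten the inequality and cannot worsen the worst-case complexity, which is immediate from the same $\min$.
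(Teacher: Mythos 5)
Your proposal is correct and follows essentially the same route the paper takes: the paper treats the corollary as an immediate consequence of the $\mathcal{O}(1/T)$ rates in Theorems~\ref{Thm: Prox-GT-SRVR} and~\ref{Thm: Prox-GT-SRVR+}, and your bookkeeping (one neighbor exchange per iteration giving $T=\mathcal{O}(\epsilon^{-2})$ rounds, plus $\lceil T/q\rceil\cdot n + T\cdot\lceil\sqrt{n}\rceil = \mathcal{O}(T\sqrt{n})$ IFO calls per agent with $q=\lceil\sqrt{n}\rceil$) is exactly the intended argument. Your observation that the $\min\{\cdot,\cdot,n\}$ cap in~\eqref{choose_batch} keeps the \algplus checkpoint cost at no more than that of \algns is the right way to handle the residual term, and the only quibble is the $\mathcal{O}(mn)$ initialization cost, which both you and the paper's stated bound leave implicit.
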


\begin{rem}{\em
The \algns/\algplus algorithms have the same communication complexity as GT-GDA \cite{tsaknakis2020decentralized}, but the sample complexity is a $\sqrt{n}$-factor lower than that of GT-GDA \cite{tsaknakis2020decentralized}.
This is particularly advantageous in ``big data'' scenarios, where $n$ is large (i.e., the size of local datasets is large).
Although the theoretical complexity bounds for \algplus is the same as \algns, the fact that \algplus does not need full gradient evaluations implies that \algplus uses significantly fewer samples than \alg in practice.
Our numerical results in the next section will also empirically confirm this.
}
\end{rem}

\section{Experimental Results}\label{Section: experiment}


In this section, we conduct numerical experiments to demonstrate the performance of our proposed \alg and \algplus algorithms using a decentralized NCX-SCV regression problem on ``a9a" dataset from LIBSVM repository, which is publicly available in \cite{chang2011libsvm}.
In the supplementary material, we also provide additional experiments for environments of AUC maximization problem on dataset ``a9a"\cite{chang2011libsvm} and `MNIST"\cite{lecun2010mnist}.
Due to the lack of existing algorithms for decentralized NCX-SCV with simultaneous outer and inner constraint sets (cf. Section~\ref{Sec: Related Work on Policy Evaluation} for details),
we compare our algorithms with two stochastic algorithms as  the baselines in our experiments.
These baselines can be viewed as ``stripped-down'' versions of \alg/\algplus by removing gradient tracking or variance reduction techniques. Due to the space limitation, detailed experimental settings are relegated to our Appendix.

\smallskip
\textbf{1) Logistic Regression Model and Datasets:}
We use the following decentralized NCX-SCV min-max regression problem with datasets
$\left\{\left(\mathbf{a}_{ij}, b_{ij}\right)\right\}_{j=1}^{n}$, where $\mathbf{a}_{ij} \in \mathbb{R}^{d}$ is the feature of the $j$-th sample of agent $i$ and $b_{ij} \!\in\! \{1,-1\}$ is the associated label:
\begin{align}
\min _{{\x}_i \in \mathcal{X}}\max _{{\y}_i \in \mathcal{Y}} \frac{1}{m}\sum_{i=1}^m F_i({\x}_i, {\y}_i),
\end{align}
where $F_i({\x}_i, {\y}_i)$ is defined as:
\begin{align} \label{eqn_Fi}
	F_i({\x}_i, {\y}_i) \triangleq &\frac{1}{n}\sum_{j=1}^{n}\left({y}_{ij} l_{ij}({\x}_{i})-V({\y}_{i})+g({\x}_{i})\right). 
\end{align}
In \eqref{eqn_Fi}, the loss function is $l_{ij}({\x}_{i}) \triangleq \log \left(1+\exp \left(-b_{ij} \mathbf{a}_{ij}^{\top} {\x}_{i} \right)\right)$ and $g({\x}_{i})$ is a non-convex regularizer defined as:
$
g({\x}_{i}) \triangleq \lambda_{2} \sum_{k=1}^{d} \frac{\alpha {x}_{ik}^{2}}{1+\alpha {x}_{ik}^{2}},
$
where $V(\bm{\y_i})=\frac{1}{2} \lambda_{1}\|n {\y}_i-\mathbf{1}\|_{2}^{2}$ and we set the constraints $ \mathcal{X} = [0,10]^d, \mathcal{Y} = [0,10]^n$.
We choose constants $\lambda_{1}=1 / n^{2}$, $ \lambda_{2}=10^{-3}$ and $\alpha=10$. 
We test the convergence performance of our algorithms using the ``a9a" dataset from LIBSVM repository, which is publicly available at \cite{chang2011libsvm}.

\smallskip
\textbf{2) Algorithms comparision:}
Due to the very limited results of decentralized constrained min-max optimization in the literature, in our experiments, we adopt the following algorithms as our baselines for performance comparisons:
\begin{list}{\labelitemi}{\leftmargin=1em \itemindent=-0.09em \itemsep=.2em}
	\item {\em Prox-DSGDA} (proximal decentralized stochastic gradient descent ascent): This algorithm is motivated by DSGD \cite{nedic2009distributed,jiang2017collaborative}.
	Each agent updates its local parameters as $\vtheta_{i,t+1}  =  \sum_{j \in \mathcal{N}_{i}} [\M]_{ij}\vtheta_{j,t}  - \gamma  \frac{1}{|\Sc_{i,t}|} \sum_{j \in \Sc_{i,t}}   \nabla_{\vtheta} f_{ij}(\vtheta_{i,t},\vomega_{i,t})$ and $\vomega_{i,t+1}  =  \sum_{j \in \mathcal{N}_{i}} [\M]_{ij}\vomega_{j,t}  - \eta \frac{1}{|\Sc_{i,t}|} \sum_{j \in \Sc_{i,t}}  \nabla_{\vomega} f_{ij}(\vtheta_{i,t},\vomega_{i,t})$. 
	\item {\em Prox-GT-SGDA} (proximal gradient-tracking-based stochastic gradient descent ascent): This algorithm is motivated by the GT-SGD algorithm \cite{xin2020improved,lu2019gnsd}. GT-SGDA has the same structure as that of GT-GDA, but it updates $\v_{i,t}$ and $\u_{i,t}$ using stochastic gradients as follows: $\v_{i,t} = \frac{1}{|\Sc_{i,t}|} \sum_{j \in \Sc_{i,t}}   \nabla_{\vtheta} f_{ij}(\vtheta_{i,t},\vomega_{i,t})$ and  $\u_{i,t} =  \frac{1}{|\Sc_{i,t}|} \sum_{j \in \Sc_{i,t}}  \nabla_{\vomega} f_{ij}(\vtheta_{i,t},\vomega_{i,t})$.
\end{list}

\textbf{3) Results:}
From Fig.~\ref{fig:sample_MSPBE} and~\ref{fig:sample_Metric}, we can see that our proposed \algplus algorithm  converges much faster than other algorithms (\algns, Prox-GT-SGDA and Prox-DSGDA) in terms of the total number of first-order oracle evaluations.
We can also observe that both \alg and \algplus have lower sample complexities than those of the other two algorithms. 
As shown in Figs.~\ref{fig:comms_MSPBE} and~\ref{fig:comms_Metric}, \alg and \algplus have much lower communication costs than those of Prox-DSGDA and Prox-GT-SGDA.
Our experimental results thus verify our theoretical analysis that \alg/\algplus have both low sample and communication complexities in decentralized constrained min-max optimization problems.

\section{Conclusion}\label{Section: conclusion}

In this paper, we studied the decentralized constrained non-convex-strongly-concave (NCX-SCV) min-max optimization and developed two algorithms called \alg and \algplusns.
We showed that, to achieve an $\epsilon$-stationary point of a decentralized constrained NCX-SCV min-max problem, \alg and \algplus achieve the communication complexity of $\mathcal{O}(\epsilon^{-2})$ and sample complexity of $\mathcal{O}(m\sqrt{n}\epsilon^{-2})$,
where $m$ is the number of agents and $n$ is the size of dataset for each agent.
Our numerical studies also verified the theoretical performance of our proposed algorithms.
We note that decentralized constrained min-max learning remains an under-explored area, and our work opens up several interesting directions for future research. 
 For example, the agents need to send outer and inner model parameter pairs to their neighbors in our algorithm, both of which could be high dimensional. 
In our future work, it would be interesting to adopt communication-efficient mechanisms (e.g., compression techniques) to further reduce the communication cost, especially for large-scale deep learning models.

\bibliographystyle{abbrvnat}
\bibliography{./reference,./reference_mobihoc21}

\appendix

\section{Proof Sketch of Main Results}
Due to space limitation,  we outline the key steps of the proofs of Theorems~\ref{Thm: Prox-GT-SRVR} and \ref{Thm: Prox-GT-SRVR+}. The complete version of our proofs is available in our Appendix.
Before diving in our theoretical analysis, we first provide the following notations:
\begin{itemize}
	\item $\xb_t =\frac{1}{m}\sum_{i=1}^{m} \x_{i,t}$ and $\x_t = [\x_{1,t}^{\top},\cdots, \x_{m,t}^{\top}]^{\top}$ for any vector $\x$;
	\item $\nabla_{{{\x}}} F_t\! =\! [\nabla_{{{\x}}} F({{\x}}_{1,t}, {{\y}}_{1,t})^\top,\cdots,\nabla_{{{\x}}} F({{\x}}_{m,t}, {{\y}}_{m,t})^\top]^{\top} $;
	\item $\nabla_{{{\y}}} F_t \!= \![\nabla_{{{\y}}} F({{\x}}_{1,t}, {{\y}}_{1,t})^\top,\cdots,\nabla_{{{\y}}} F({{\x}}_{m,t}, {{\y}}_{m,t})^\top]^{\top} $;
	\item $\Ec(\x_t) \!= \!\frac{1}{m}\sum_{i=1}^{m} \|\x_{i,t} - \xb_t\|^2$ for any vector $\x$.
\end{itemize}

Also, the result below is useful for our subsequent analysis.
\begin{lem}\label{Lemma: Lip_J}
	Under Assumption~\ref{Assump: obj}, the funciton $J({{\x}}) = F({{\x}}, \y^*({{\x}}))$ w.r.t ${{\x}}$ is Lipschitz smooth, i.e., there exists a positive constant $L_{J}$, such that
	\begin{align}
	\|\nabla J({{\x}}) - \nabla J({{\x}}^\prime)\| \le L_{{J}} \|{{\x}} -{{\x}}^\prime\|,~~\forall {{\x}}, {{\x}}^\prime \in \Rb^d,
	\end{align}
	where the Lipschitz constant is $L_{{J}} = L_f + L_f^2/\mu$ for Algorithm \ref{Algorithm: Prox-GT-SRVR}. This lemma follows immediately from Lemma 4.3 in \cite{lin2020gradient}.
\end{lem}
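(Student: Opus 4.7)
The plan is to decompose the Lipschitz smoothness of $J$ into two ingredients: Danskin's envelope formula for $\nabla J$, and a Lipschitz bound on the inner maximizer $\y^*(\x)$. Because $F(\x,\cdot)$ is $\mu$-strongly concave by Assumption~\ref{Assump: obj}(b) and $\mathcal{Y}$ is closed and convex, the maximizer $\y^*(\x)\in\arg\max_{\y\in\mathcal{Y}} F(\x,\y)$ is unique for every $\x$. Danskin's theorem then gives the envelope identity $\nabla J(\x) = \nabla_{\x} F(\x, \y^*(\x))$, which is well defined by Assumption~\ref{Assump: obj}(c) together with the joint Lipschitz smoothness from Assumption~\ref{Assump: Individual Lipschitz}.

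Next, I would establish that $\y^*(\cdot)$ is $(L_f/\mu)$-Lipschitz. Writing the first-order variational inequalities satisfied by $\y^*(\x)$ and $\y^*(\x')$ on the convex set $\mathcal{Y}$, namely $\langle \nabla_{\y} F(\x,\y^*(\x)),\, \y^*(\x')-\y^*(\x)\rangle \le 0$ and the symmetric inequality with the roles of $\x$ and $\x'$ swapped, I would add the two and insert the intermediate term $\nabla_{\y} F(\x',\y^*(\x))$. This splits the resulting inner product into a ``strong-concavity piece'' and a ``cross piece'': strong concavity in $\y$ controls the first by $-\mu\|\y^*(\x')-\y^*(\x)\|^2$, while Assumption~\ref{Assump: Individual Lipschitz} together with Cauchy--Schwarz bounds the second by $L_f \|\x-\x'\|\cdot\|\y^*(\x')-\y^*(\x)\|$. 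Rearranging yields $\|\y^*(\x')-\y^*(\x)\| \le (L_f/\mu)\|\x-\x'\|$.

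Combining the two ingredients finishes the argument. Using the envelope identity, I would insert $\nabla_{\x} F(\x',\y^*(\x))$ as an intermediate and apply the triangle inequality:
\begin{align*}
\|\nabla J(\x)\!-\!\nabla J(\x')\| &\le \|\nabla_{\x} F(\x,\y^*(\x)) - \nabla_{\x} F(\x',\y^*(\x))\| \\
&\quad +\|\nabla_{\x} F(\x',\y^*(\x)) - \nabla_{\x} F(\x',\y^*(\x'))\|.
\end{align*}
The first term is bounded by $L_f\|\x-\x'\|$ via Assumption~\ref{Assump: Individual Lipschitz}, and the second by $L_f\|\y^*(\x)-\y^*(\x')\| \le (L_f^2/\mu)\|\x-\x'\|$ by chaining Assumption~\ref{Assump: Individual Lipschitz} with the Lipschitz estimate on $\y^*$. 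Adding produces exactly the claimed constant $L_J=L_f+L_f^2/\mu$.

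The main technical obstacle is the Lipschitz continuity of $\y^*(\cdot)$ in the constrained setting. In the unconstrained case, the first-order condition $\nabla_{\y} F(\x,\y^*(\x))=0$ would give the bound almost immediately through strong monotonicity; with the constraint $\y\in\mathcal{Y}$ the equality fails and one must instead work with the pair of variational inequalities and carefully introduce the intermediate gradient $\nabla_{\y} F(\x',\y^*(\x))$ so that the $\x$-perturbation and the $\y$-perturbation can be separated cleanly. Once this step is set up, the remainder is a standard envelope-theorem decomposition and the constants coincide with those of Lemma~4.3 in \cite{lin2020gradient}, so no additional subtleties arise.
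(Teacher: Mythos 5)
Your proposal is correct and follows essentially the same route as the paper, which simply defers to Lemma~4.3 of \cite{lin2020gradient}: you reconstruct that standard argument (variational inequalities plus strong concavity to get $\|\y^*(\x)-\y^*(\x')\|\le (L_f/\mu)\|\x-\x'\|$, then the Danskin envelope formula and a triangle inequality to obtain $L_J=L_f+L_f^2/\mu$). If anything, your handling of the constrained case is slightly more careful than the paper's own appendix treatment, which derives the envelope identity by setting $\nabla_{\y}F(\x,\y^*(\x))=0$ — valid only for interior maximizers — whereas your variational-inequality version covers maximizers on the boundary of $\mathcal{Y}$.
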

\begin{lem}\label{Lemma: Lip_w2}
	Under Assumption~\ref{Assump: obj}, ${{\y}}^*({{\x}}) = \arg\max_{{{\y}}} F({{\x}},{{\y}})$ is Lipschitz continuous, i.e., there exists a positive constant $L_{{{y}}}$, such that
	\begin{align}
	\|{{\y}}^*({{\x}}) - {{\y}}^*({{\x}}^\prime)\| \le L_{{{y}}} \|{{\x}} -{{\x}}^\prime\|,~~\forall {{\x}}, {{\x}}^\prime \in \Rb^d,
	\end{align}
	where the Lipschitz constant is $L_{{{y}}} = L_f/\mu$.
\end{lem}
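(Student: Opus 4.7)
The plan is to combine the first-order optimality conditions at both $\y^*(\x)$ and $\y^*(\x')$ with the $\mu$-strong concavity of $F(\x',\cdot)$ and the $L_f$-Lipschitz smoothness of $\nabla F$. Since $\mathcal{Y}$ is a (possibly strict) closed convex subset of $\mathbb{R}^{p_2}$, one cannot appeal to $\nabla_\y F(\x,\y^*(\x))=\mathbf{0}$; the correct handle is the variational inequality $\langle \nabla_\y F(\x,\y^*(\x)),\y-\y^*(\x)\rangle\le 0$ for all $\y\in\mathcal{Y}$, which is the first-order necessary condition for $\y^*(\x)$ to maximize $F(\x,\cdot)$ over $\mathcal{Y}$. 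Well-definedness of $\y^*(\x)$ as a single-valued map is automatic from the $\mu$-strong concavity in Assumption~\ref{Assump: obj}(b).

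First I will specialize the variational inequality at $\y^*(\x)$ to the test point $\y=\y^*(\x')$, and symmetrically apply the corresponding inequality at $\y^*(\x')$ with test point $\y=\y^*(\x)$. Adding the two cancels the self-evaluation terms and leaves the monotonicity-type bound $\langle \nabla_\y F(\x,\y^*(\x))-\nabla_\y F(\x',\y^*(\x')),\,\y^*(\x)-\y^*(\x')\rangle\ge 0$. I will then add and subtract $\nabla_\y F(\x',\y^*(\x))$ inside the inner product to split it into a same-$\x'$ piece $\langle \nabla_\y F(\x',\y^*(\x))-\nabla_\y F(\x',\y^*(\x')),\,\y^*(\x)-\y^*(\x')\rangle$ and a same-$\y^*(\x)$ piece $\langle \nabla_\y F(\x,\y^*(\x))-\nabla_\y F(\x',\y^*(\x)),\,\y^*(\x)-\y^*(\x')\rangle$.

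For the first piece, strong concavity of $F(\x',\cdot)$ supplies the upper bound $-\mu\|\y^*(\x)-\y^*(\x')\|^2$, so that after moving it to the other side of the monotonicity inequality I obtain $\mu\|\y^*(\x)-\y^*(\x')\|^2$ on the left. For the second piece, the $L_f$-Lipschitz smoothness of $\nabla F$ (inherited from Assumption~\ref{Assump: Individual Lipschitz} by averaging over $i$ and $j$) combined with Cauchy-Schwarz yields the upper bound $L_f\|\x-\x'\|\cdot\|\y^*(\x)-\y^*(\x')\|$. Chaining the two gives $\mu\|\y^*(\x)-\y^*(\x')\|^2 \le L_f\|\x-\x'\|\cdot\|\y^*(\x)-\y^*(\x')\|$, and dividing through by $\|\y^*(\x)-\y^*(\x')\|$ (trivial when it vanishes) delivers the advertised constant $L_y=L_f/\mu$. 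The only delicate step is treating the constraint $\y\in\mathcal{Y}$ via the variational inequality rather than an unconstrained stationarity equation; once the two conditions are added, the remainder is a standard implicit-function-style argument and involves no new ideas beyond the assumed strong concavity and Lipschitz smoothness.
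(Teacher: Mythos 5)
Your proof is correct. The paper itself supplies no argument for this lemma beyond the citation ``See Lemma 4.3 in \citet{lin2020gradient},'' and the proof you give is exactly the standard argument behind that cited result: sum the two variational inequalities at $\y^*(\x)$ and $\y^*(\x')$, split the resulting monotonicity bound by inserting $\nabla_\y F(\x',\y^*(\x))$, lower-bound one piece by $\mu\|\y^*(\x)-\y^*(\x')\|^2$ via strong monotonicity of $-\nabla_\y F(\x',\cdot)$, upper-bound the other by $L_f\|\x-\x'\|\,\|\y^*(\x)-\y^*(\x')\|$ via Lipschitz smoothness and Cauchy--Schwarz, and divide. Your insistence on the variational inequality rather than $\nabla_\y F(\x,\y^*(\x))=\mathbf{0}$ is the right call, since $\mathcal{Y}$ is a proper constraint set here; the only cosmetic mismatch is that Assumption~\ref{Assump: obj}(b) literally states strong concavity as a gradient-norm lower bound, whereas your argument uses the strong-monotonicity inner-product form, which is the definition the assumption clearly intends.
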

\subsection{ Important Lemmas for Proving Main Theorems}\label{proof}
		We first show the following descent property of \alg algorithm on the function $Q(\cdot)$, which is stated in the following lemma:
		
		\begin{lem}[Descent Inequality on $Q({{\x}})$]\label{Desceding J}
			Under Assumption~\ref{Assump: obj}, the following descent inequality holds:
			\begin{align}\label{Eq: descending J}
			&Q ({\bar{{\x}}}_{t+1}) - Q({\bar{{\x}}}_{t})\le \frac{\nu L_F^2}{2 \beta } \left\| {\y}_t^*- \bar{{\y}_t} \right\|^{2}\notag\\&+\frac{\nu}{2 \beta }  \left\|\nabla_{{\x}} F (\bar{\bm{{\x}}}_t,\bar{\bm{{\y}}}_t)-\bar{\p}_{t}\right\|^{2}
			+\!\frac{\nu \tau}{2 \beta m} \left\|{\x}_{t}\!-\! 1 \otimes \bar{{\x}}_{t}\right\|^{2}
			\!\notag\\&-\! \left(\frac{\nu \tau}{m}-\frac{\nu^{2} L_J}{2 m}- \frac{\nu \beta}{m}\! -\!\frac{\nu \tau \beta}{2 m}\right)\left\|\tilde{{\x}}_{t} -1 \otimes \bar{{\x}}_{t}\right\|^{2}.
			\end{align}
			where $Q({{\x}}_t) = \max_{{{\y}}} F({{\x}}_t,{{\y}}) + h (  {\x}_t)$ and ${{\y}}_t^* = \arg\max_{{{\y}}} F({\bar{\x}}_t,{{\y}})$.
		\end{lem}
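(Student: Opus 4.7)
My plan is to establish this descent inequality in three steps: (i) a smooth-plus-convex upper bound on $Q(\bar{\x}_{t+1}) - Q(\bar{\x}_t)$ derived from the averaged update, (ii) a three-way decomposition of the resulting gradient inner product that yields the Young's-style coefficients with $1/\beta$, and (iii) the first-order optimality of the local proximal subproblem \eqref{Eq: DA_updating_x} to handle the remaining $\bar{\p}_t$ inner product and to supply the dominant $-\frac{\nu\tau}{m}\|\tilde{\x}_t - 1\otimes\bar{\x}_t\|^2$ term.

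Since $\M$ is doubly stochastic, averaging \eqref{Eq: consenus_updating_x} yields $\bar{\x}_{t+1} = \bar{\x}_t + \nu(\bar{\tilde{\x}}_t - \bar{\x}_t)$ with $\bar{\tilde{\x}}_t \triangleq \frac{1}{m}\sum_i \tilde{\x}_i(\x_{i,t})$. The $L_J$-smooth descent lemma from Lemma~\ref{Lemma: Lip_J}, a Jensen-convexity bound $h(\bar{\x}_{t+1}) - h(\bar{\x}_t) \le \nu\bigl(\frac{1}{m}\sum_i h(\tilde{\x}_i) - h(\bar{\x}_t)\bigr)$, and $\|\bar{\tilde{\x}}_t - \bar{\x}_t\|^2 \le \frac{1}{m}\|\tilde{\x}_t - 1\otimes\bar{\x}_t\|^2$ together produce the $\frac{\nu^2 L_J}{2m}$ contribution inside the final negative bracket. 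Into the residual $\nu\langle \nabla J(\bar{\x}_t),\bar{\tilde{\x}}_t - \bar{\x}_t\rangle$ I insert $\nabla J(\bar{\x}_t) = \bigl(\nabla J(\bar{\x}_t) - \nabla_{\x} F(\bar{\x}_t,\bar{\y}_t)\bigr) + \bigl(\nabla_{\x} F(\bar{\x}_t,\bar{\y}_t) - \bar{\p}_t\bigr) + \bar{\p}_t$; using $\nabla J(\bar{\x}_t) = \nabla_{\x} F(\bar{\x}_t,\y^*_t)$ by Danskin's theorem, the $L_f$-Lipschitz property in Assumption~\ref{Assump: Individual Lipschitz}, and \emph{two separate} Young's inequalities at parameter $\beta$ on the first two summands, I recover exactly the $\frac{\nu L_f^2}{2\beta}\|\y^*_t-\bar{\y}_t\|^2$ and $\frac{\nu}{2\beta}\|\nabla_{\x} F(\bar{\x}_t,\bar{\y}_t) - \bar{\p}_t\|^2$ terms plus a combined $\frac{\nu\beta}{m}\|\tilde{\x}_t - 1\otimes\bar{\x}_t\|^2$ remainder.

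The third piece $\nu\langle \bar{\p}_t,\bar{\tilde{\x}}_t - \bar{\x}_t\rangle = \frac{\nu}{m}\sum_i\langle \bar{\p}_t,\tilde{\x}_i - \bar{\x}_t\rangle$ reduces to local proximal inner products $\frac{\nu}{m}\sum_i\langle \p_{i,t},\tilde{\x}_i - \bar{\x}_t\rangle$, each bounded by the first-order optimality condition of \eqref{Eq: DA_updating_x} at the test point $\z = \bar{\x}_t \in \mathcal{X}$, giving $\langle \p_{i,t},\tilde{\x}_i-\bar{\x}_t\rangle + \tau\langle \tilde{\x}_i - \x_{i,t},\tilde{\x}_i-\bar{\x}_t\rangle + h(\tilde{\x}_i) - h(\bar{\x}_t) \le 0$. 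Splitting $\tilde{\x}_i - \x_{i,t} = (\tilde{\x}_i - \bar{\x}_t) + (\bar{\x}_t - \x_{i,t})$ isolates $-\tau\|\tilde{\x}_i-\bar{\x}_t\|^2$ and a cross term $\tau\langle \bar{\x}_t - \x_{i,t},\tilde{\x}_i-\bar{\x}_t\rangle$ that Young's at the same parameter $\beta$ converts into the $\frac{\nu\tau}{2\beta m}\|\x_t - 1\otimes\bar{\x}_t\|^2$ positive piece, the $\frac{\nu\tau\beta}{2m}$ correction, and the dominant $-\frac{\nu\tau}{m}$ negative. Averaging across agents makes the $h(\tilde{\x}_i)$ and $h(\bar{\x}_t)$ contributions cancel exactly against those from the Jensen bound on $h(\bar{\x}_{t+1})$ derived in the first step.

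The main obstacle is the reduction from the averaged tracker $\bar{\p}_t$ to the agent-specific $\p_{i,t}$ needed to invoke proximal optimality: writing $\bar{\p}_t = \p_{i,t} + (\bar{\p}_t - \p_{i,t})$ leaves a residual $\frac{\nu}{m}\sum_i\langle \bar{\p}_t - \p_{i,t},\tilde{\x}_i - \bar{\x}_t\rangle$. Exploiting $\sum_i(\bar{\p}_t - \p_{i,t}) = \0$ lets me rewrite it as $\frac{\nu}{m}\sum_i\langle \bar{\p}_t - \p_{i,t},\tilde{\x}_i - \bar{\tilde{\x}}_t\rangle$ so that its Young's bound either enlarges the $\frac{\nu\beta}{m}$ coefficient in front of $\|\tilde{\x}_t - 1\otimes\bar{\x}_t\|^2$ or introduces a tracking-consensus term $\frac{1}{m}\|\p_t - 1\otimes\bar{\p}_t\|^2$ that is controlled by the companion tracking lemma in the appendix. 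Coordinating the Young's parameters across steps (ii) and (iii) so that the stated coefficients $\frac{\nu\tau}{2\beta m}$ and $\frac{\nu\tau\beta}{2m}$ appear simultaneously is the only genuinely delicate calculation; the rest is routine collection of terms.
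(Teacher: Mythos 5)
Your plan follows essentially the same route as the paper's own proof: the $L_J$-smoothness descent lemma applied to $J$ along the averaged update $\bar{\x}_{t+1}=\bar{\x}_t+\nu\bigl(\frac{1}{m}\sum_i\tilde{\x}_{i,t}-\bar{\x}_t\bigr)$, convexity of $h$ to fold the nonsmooth part into $Q$ (with the $h(\tilde{\x}_i)$ and $h(\bar{\x}_t)$ contributions cancelling exactly as you describe), the first-order optimality of \eqref{Eq: DA_updating_x} tested at $\bar{\x}_t\in\mathcal{X}$ to extract $-\frac{\nu\tau}{m}\|\tilde{\x}_t-1\otimes\bar{\x}_t\|^2$ together with the cross term $\tau\langle\bar{\x}_t-\x_{i,t},\tilde{\x}_{i,t}-\bar{\x}_t\rangle$, and the identical three-way splitting of $\nabla J(\bar{\x}_t)$ with Young's inequalities at parameter $\beta$ producing the $\frac{\nu L_f^2}{2\beta}\|\y_t^*-\bar{\y}_t\|^2$, $\frac{\nu}{2\beta}\|\nabla_{\x}F(\bar{\x}_t,\bar{\y}_t)-\bar{\p}_t\|^2$, and $\frac{\nu\tau}{2\beta m}\|\x_t-1\otimes\bar{\x}_t\|^2$ terms. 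The one point where you diverge is instructive: you explicitly track the residual $\frac{\nu}{m}\sum_i\langle\bar{\p}_t-\p_{i,t},\tilde{\x}_{i,t}-\bar{\x}_t\rangle$, whereas the paper's derivation silently replaces the agent-specific $\p_{i,t}$ by the average $\bar{\p}_t$ inside the sum, which is not an identity because $\tilde{\x}_{i,t}-\bar{\x}_t$ varies with $i$. An honest bound on that residual (as you note) necessarily yields either a gradient-tracking consensus term proportional to $\|\p_t-1\otimes\bar{\p}_t\|^2$ or a further enlargement of the coefficient on $\|\tilde{\x}_t-1\otimes\bar{\x}_t\|^2$, neither of which appears in the inequality as stated; so your argument, carried out faithfully, proves a correct variant of the lemma with one additional term, and the mismatch with the stated coefficients reflects an imprecision in the paper's own derivation rather than a flaw in your plan.
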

		\begin{proof}[Proof Sketch of Lemma~\ref{Desceding J}]
			Let $J({{\x}}_t) = \max_{{{\y}}} F({{\x}}_t,{{\y}})$. According to the algorithm update, Lipschitz continuous gradients of $J$ and optimal conditions of $h({\x})$, we have:
			\begin{align}
			&J({\bar{{\x}}}_{t+1}) - J({\bar{{\x}}}_{t})
		\notag\\
			&	\leq \frac{{\nu}}{m} \sum_{i}\left\langle\nabla  J\left(\bar{{\x}}_{t}\right)-\p_{i,t}, \tilde{{\x}}_{i,t}-\bar{{\x}}_{t}\right\rangle \notag\\
			&+\frac{{\nu} \tau}{m} \sum_{i}\left\langle {\x}_{i,t}-\bar{{\x}}_{t}, \tilde{{\x}}_{i,t}-\bar{{\x}}_{t}\right\rangle
			+\frac{{\nu}^{2} L_J}{2 m}\left\|\tilde{{\x}}_{t}-1 \otimes \bar{{\x}}_{t}\right\|^{2} \notag\\
			&-\frac{{\nu} \tau}{m}\left\|\tilde{{\x}}_{t}-1 \otimes \bar{{\x}}_{t}\right\|^{2}-h\left(\bar{{\x}}_{t+1}\right)+h\left(\bar{{\x}}_{t}\right).
			\end{align}
			From triangle inequality and the definition of $Q({\x})$, we have:
			%
			%
		From triangle inequality and the definition of $Q({\x})$, we arrive at the result stated in Lemma \ref{Desceding J}.
		\end{proof}

	Next, consider the error bound $ \left\|\bar{{\y}}_{t}-{{\y}}_{t}^{*}\right\|^{2}$ in Lemma \ref{Desceding J},
		we have	the following Lemma:
		
		\begin{lem}[Error Bound on ${{\y}}^*({{\x}})$]\label{Error Bound on omega}
			Under Assumption \ref{Assump: obj}, the following inequality holds for \algns/\algplus:
			\begin{align}\label{eqn_step2}
	&	\left\|\bar{{\y}}_{t+1}\!-\!{{\y}}_{t+1}^{*}\right\|^{2} \!\leq \!\left(1\!-\!\frac{\mu {\eta\alpha}}{4}\right)\left\|\bar{{\y}}_{t}-{{\y}}_{t}^{*}\right\|^{2}\!\notag \\&-\!\frac{3 {\eta}}{4}\left\|\tilde{{\y}}_{t}-1 \otimes \bar{{\y}}_{t}\right\|^{2}
		+\frac{75 {\eta\alpha}}{16 \mu}\left\|{\bd_t}-\nabla_{{\y}} F\left(\bar{{\x}}_t, \bar{{\y}}_{t}\right)\right\|^{2}\notag \\&+\frac{17 L_{{y}}^{2}\nu^2 }{2\mu {\eta\alpha}m}\left\|\tilde{{\x}}_{t}-1 \otimes \bar{{\x}}_{t}\right\|^{2}.
			\end{align}
		\end{lem}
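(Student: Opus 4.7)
The plan is to bound $\|\bar{\y}_{t+1} - \y_{t+1}^*\|^2$ by isolating the drift of the moving target $\y^*(\bar{\x})$ from the one-step progress toward the frozen target $\y_t^* = \y^*(\bar{\x}_t)$. I would first apply Young's inequality with a free parameter $a>0$,
\begin{equation*}
\|\bar{\y}_{t+1} - \y_{t+1}^*\|^2 \le (1+a)\|\bar{\y}_{t+1} - \y_t^*\|^2 + (1 + 1/a)\|\y_t^* - \y_{t+1}^*\|^2,
\end{equation*}
and later pick $a = \Theta(\mu\eta\alpha)$, so that $(1+a)$ matches the desired contraction $(1 - \mu\eta\alpha/4)$ and $(1+1/a) \sim 1/(\mu\eta\alpha)$ produces the denominator of the coefficient $17 L_y^2 \nu^2 /(2\mu\eta\alpha m)$ appearing in the claim.

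For the saddle-point drift I would invoke Lemma~\ref{Lemma: Lip_w2} (Lipschitz continuity of $\y^*(\cdot)$ with constant $L_y = L_f/\mu$) together with the observation that the doubly stochastic mixing matrix $\M$ averages out of the consensus update of $\x$, giving $\bar{\x}_{t+1} - \bar{\x}_t = \nu(\bar{\tilde{\x}}_t - \bar{\x}_t)$ and hence
\begin{equation*}
\|\y_t^* - \y_{t+1}^*\|^2 \le L_y^2\nu^2 \|\bar{\tilde{\x}}_t - \bar{\x}_t\|^2 \le \frac{L_y^2 \nu^2}{m}\|\tilde{\x}_t - 1\otimes\bar{\x}_t\|^2,
\end{equation*}
by Jensen's inequality. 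This accounts for the last term on the right-hand side of the lemma.

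The nontrivial work is bounding $\|\bar{\y}_{t+1} - \y_t^*\|^2$. Using $\bar{\y}_{t+1} = \bar{\y}_t + \eta(\bar{\tilde{\y}}_t - \bar{\y}_t)$ (again from double stochasticity), I would expand
\begin{equation*}
\|\bar{\y}_{t+1} - \y_t^*\|^2 = \|\bar{\y}_t - \y_t^*\|^2 + 2\eta\langle\bar{\tilde{\y}}_t - \bar{\y}_t,\, \bar{\y}_t - \y_t^*\rangle + \eta^2\|\bar{\tilde{\y}}_t - \bar{\y}_t\|^2.
\end{equation*}
The first-order optimality of the proximal step~\eqref{Eq: DA_updating_y} gives $\langle\tilde{\y}_i - \y_{i,t} - \alpha\d_{i,t},\, \y - \tilde{\y}_i\rangle \ge 0$ for every $\y\in\mathcal{Y}$; averaging over $i$ with $\y = \y_t^*$ converts the cross term into an inner product involving $\alpha\bd_t$, plus inner-consensus residuals bounded by $\|\tilde{\y}_t - 1\otimes\bar{\y}_t\|^2$. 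Splitting $\bd_t = \nabla_\y F(\bar{\x}_t,\bar{\y}_t) + (\bd_t - \nabla_\y F(\bar{\x}_t,\bar{\y}_t))$ and invoking $\mu$-strong concavity of $F(\bar{\x}_t,\cdot)$, which implies $\langle\nabla_\y F(\bar{\x}_t,\bar{\y}_t),\, \y_t^* - \bar{\y}_t\rangle \ge \tfrac{\mu}{2}\|\bar{\y}_t - \y_t^*\|^2$, extracts the contraction. A final Young inequality relegates the tracking error into the coefficient $\frac{75\eta\alpha}{16\mu}$, while the leftover quadratic $\eta^2\|\bar{\tilde{\y}}_t - \bar{\y}_t\|^2$, after absorbing the cross terms, furnishes the negative coefficient $-3\eta/4$ on $\|\tilde{\y}_t - 1\otimes\bar{\y}_t\|^2$ — the term that later lets this lemma be combined with Lemma~\ref{Desceding J} into a Lyapunov descent.

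The main obstacle will be the constant bookkeeping: with essentially one free Young parameter I must simultaneously (i) extract exactly $(1 - \mu\eta\alpha/4)$ rather than some coarser contraction, (ii) preserve a strictly negative coefficient on $\|\tilde{\y}_t - 1\otimes\bar{\y}_t\|^2$, and (iii) keep the gradient-tracking coefficient proportional to $\eta\alpha/\mu$. I expect this requires two nested Young inequalities — one separating $\bd_t$ from the true gradient at rate $\sim 1/\mu$, another separating the inner consensus residual at rate $\sim\eta$ — and leverages the step-size conditions $\alpha \le 1/(4L_f)$ and the bounds on $\eta$ from Theorem~\ref{Thm: Prox-GT-SRVR}. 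The peculiar constants $75/16$ and $17/2$ are artifacts of this tuning and carry no deeper meaning.
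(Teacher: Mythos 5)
Your proposal follows essentially the same route as the paper's proof: Young's inequality with parameter $\Theta(\mu\eta\alpha)$ to separate the drift of the moving target $\y^*(\bar{\x}_t)\to\y^*(\bar{\x}_{t+1})$ (handled via Lemma~\ref{Lemma: Lip_w2} and the averaged $\x$-update), followed by expanding $\|\bar{\y}_{t+1}-\y_t^*\|^2$, invoking the proximal optimality condition of~\eqref{Eq: DA_updating_y} averaged over agents with $\y=\y_t^*$, and using $\mu$-strong concavity plus a second Young step to extract the $(1-\mu\eta\alpha/4)$ contraction, the $-3\eta/4$ consensus term, and the $\tfrac{75\eta\alpha}{16\mu}$ tracking-error coefficient. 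The decomposition, key lemmas, and constant bookkeeping all match the paper's argument.
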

		
		\begin{proof}[Proof Sketch of Lemma~\ref{Error Bound on omega}]
			
			Similar to \cite{qiu2020single}, Lemma B.2,  B.3 and due to the optimality condition for the constrained optimization on ${\y}$ and the $\mu$-strongly concavity, we have
			\begin{align}\label{2_1}
			&\left\|\bar{{\y}}_{t+1}-{{\y}}_{t}^{*}\right\|^{2}
			  \leq \frac{4 {\eta}^2}{\mu}\left\|\nabla_{{\y}} F\left(\bar{{\x}}_t, \bar{{\y}}_{t}\right)-{\bd_t}\right\|^{2} \notag\\&+\left(1-\frac{{\eta}^2 \mu}{2}\right)\left\|\bar{{\y}}_{t}-{{\y}}_{t}^{*}\right\|^{2}-\frac{3 {\eta}}{4}\left\|\tilde{{\y}}_{t}-1 \otimes \bar{{\y}}_{t}\right\|^{2}.
			\end{align}
			
			Furthermore, we have
			\begin{align}\label{2_2}
			&\left\|\bar{{\y}}_{t+1}-{{\y}}_{t+1}^{*}\right\|^{2} \leq\left(1+\frac{\mu {\eta}^2}{4}\right)\left\|\bar{{\y}}_{t+1}-{{\y}}_{t}^{*}\right\|^{2}\notag\\&-\left(1+\frac{4}{\mu {\eta}^2}\right) L_{{y}}^{2} \nu^2 \left\|\frac{1}{m} \sum_i \tilde{{\x}}_{i,t}-\bar{{\x}}_t\right\|^{2}.
			\end{align}

			From triangle inequality and the definition of $Q({\x})$, we have:
			\begin{align}
			&Q\left(\bar{{\x}}_{t+1}\right)
			\leq Q\left(\bar{{\x}}_{t}\right)+\frac{\nu}{m} \sum_{i} \frac{1}{2 \beta}\left\|\nabla J\left(\bar{{\x}}_{t}\right)-\nabla_{{\x}} F (\bar{\bm{{\x}}_t},\bar{\bm{{\y}}_t})\right\|^{2} \notag\\&+\frac{\nu}{m} \sum_{i} \frac{\beta}{2}\left\|\tilde{{\x}}_{i,t}-\bar{{\x}}_{t}\right\|^{2}
			+{\nu}\frac{1}{2 \beta}\left\|\nabla_{{\x}} F (\bar{\bm{{\x}}_t},\bar{\bm{{\y}}_t})-\bar{\p}_{t}\right\|^{2} \notag\\&+\frac{\nu}{m} \sum_{i} \frac{\beta}{2}\left\|\tilde{{\x}}_{i,t}-\bar{{\x}}_{t}\right\|^{2}
			+\frac{\nu \tau}{m} \frac{1}{2 \beta} \sum_{i}\left\|\bar{{\x}}_{t}-{\x}_{i,t}\right\|^{2} \notag\\&+\frac{\nu \tau}{m} \sum_{i} \frac{\beta}{2}\left\|\tilde{{\x}}_{i,t}-\bar{{\x}}_{t}\right\|^{2}-\left(\frac{\nu \tau}{m}-\frac{\nu^{2} L_J}{2 m}\right)\left\|\tilde{{\x}}_{t}-1 \otimes \bar{{\x}}_{t}\right\|^{2}.\notag
			\end{align}
			After some rearrangements of the above inequality, we arrive at the result stated in Lemma \ref{Error Bound on omega}.

		\end{proof}
			
	By telescoping the combined results of previous lemmas from $0$ to $T+1$ and after some rearrangements, we arrive at the following results:
		
		\begin{lem}\label{Lem: Descending_Q}
			Under Assumption \ref{Assump: obj} and condition $\eta\le 1/2L_f$, the following inequality holds for \algns/\algplus:
			\begin{align}\label{eqn_step3}
		&
		Q({\bar{{\x}}}_{T\!+\!1}) \!-\! Q({\bar{{\x}}}_{0}) \!+\! \frac{4\nu L_F^2}{ \beta \mu\eta^2 }\big[\|{\bar{{\y}}}_{T\!+\!1} \!- \!{{\y}}_{T\!+\!1}^*\|^2  \!- \!\|{{\y}}_0^* \!-\! {\bar{{\y}}}_0\|^2\big] \notag\\
		\!\le\!
		& \frac{4 \nu L_F^2}{ \beta \mu{\eta\alpha} } \big\{  -\frac{3 {\eta}}{4}\left\|\tilde{{\y}}_{t}-1 \otimes \bar{{\y}}_{t}\right\|^{2}  +\frac{17L_{{y}}^{2}\nu^2 }{2 \mu m {\eta\alpha}}\left\|\tilde{{\x}}_{t}-1 \otimes \bar{{\x}}_{t}\right\|^{2}  \big\} \notag\\& + \frac{75 {\eta\alpha}}{16 \mu}\frac{2}{m}\|\nabla_{{{\y}}} F({{\x}}_{t},{{\y}}_{t}) - \bd_{t}\|^2 \notag\\&+\frac{\nu}{2 \beta } \frac{2}{m}\|\nabla_{{{\x}}} F({{\x}}_{t},{{\y}}_{t}) - \bp_{t}\|^2   \notag\\&
		+\frac{\nu \tau}{2 \beta m} \left\|{\x}_t-1 \otimes \bar{{\x}}_{t}\right\|^{2}
		-\big(\frac{\nu \tau}{m}-\frac{\nu^{2} L_J}{2 m}-\frac{\nu \beta}{m}-\frac{\nu \tau \beta}{2 m}\big)\notag\\& \cdot \left\|\tilde{{\x}}_{t}\!-\!1\! \otimes \bar{{\x}}_{t}\right\|^{2}
		\!+\!\big[   \frac{\nu}{ \beta }  \frac{ L_F^2}{m}\!+\! \frac{4\nu L_F^2}{\beta \mu{\eta\alpha}} \frac{75 {\eta\alpha}}{16 \mu}\frac{2L_F^2}{m}  \big]\notag\\&  \cdot \sum_{i=1}^{m}[\|{\bar{{\x}}}_t - {{\x}}_{i,t}\|^2 + \|{\bar{{\y}}}_t - {{\y}}_{i,t}\|^2 ]\notag\\& -\frac{\nu L_F^2}{2\beta}\left\|\bar{{\y}}_{t}-{{\y}}_{t}^{*}\right\|^{2}.
			\end{align}
		\end{lem}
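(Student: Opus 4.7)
My strategy is to build a single descent inequality for the Lyapunov quantity
\[
V_t \;\triangleq\; Q(\bar{\x}_t) \;+\; \frac{4\nu L_F^2}{\beta\mu\eta^2}\,\|\bar{\y}_t-\y^*_t\|^2,
\]
by linearly combining Lemma~\ref{Desceding J} with a suitable multiple of Lemma~\ref{Error Bound on omega}. The multiplier $\frac{4\nu L_F^2}{\beta\mu\eta^2}$ (essentially $\frac{4\nu L_F^2}{\beta\mu\eta\alpha}$ up to the step-size relation $\eta\le 1/(2L_f)$ used in the hypothesis) is picked precisely so that the positive $\tfrac{\nu L_F^2}{2\beta}\|\y^*_t-\bar{\y}_t\|^2$ coming from the descent inequality is absorbed by the contraction factor $(1-\tfrac{\mu\eta\alpha}{4})$ on the $\|\bar{\y}_t-\y^*_t\|^2$ term in Lemma~\ref{Error Bound on omega}, even leaving a negative residual $-\tfrac{\nu L_F^2}{2\beta}\|\bar{\y}_t-\y^*_t\|^2$ of useful magnitude that appears at the end of the claimed bound.

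First, I would add the two inequalities term by term, which immediately produces the Frank--Wolfe residual $-\bigl(\tfrac{\nu\tau}{m}-\tfrac{\nu^2 L_J}{2m}-\tfrac{\nu\beta}{m}-\tfrac{\nu\tau\beta}{2m}\bigr)\|\tilde{\x}_t-\1\otimes\bar{\x}_t\|^2$, the outer consensus term $\tfrac{\nu\tau}{2\beta m}\|\x_t-\1\otimes\bar{\x}_t\|^2$, the inner Frank--Wolfe residual $-\tfrac{3\eta}{4}\|\tilde{\y}_t-\1\otimes\bar{\y}_t\|^2$, and the cross contribution $\tfrac{17 L_{\y}^2\nu^2}{2\mu\eta\alpha m}\|\tilde{\x}_t-\1\otimes\bar{\x}_t\|^2$ scaled by the multiplier, all matching the statement. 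The telescoping piece $V_{T+1}-V_0$ on the left hand side then follows because the statement is really a one-step descent inequality in $V_t$ (the $T+1$ on the left is a harmless re-indexing of the same $t$-indexed quantities on the right).

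The remaining work is to rewrite the gradient-error terms that Lemma~\ref{Desceding J} and Lemma~\ref{Error Bound on omega} state at the averaged iterate $(\bar{\x}_t,\bar{\y}_t)$ as errors at the local iterates plus consensus contributions. For each coordinate I would use Young's inequality,
\[
\|\nabla_{\x} F(\bar{\x}_t,\bar{\y}_t)-\bar{\p}_t\|^2 \;\le\; 2\bigl\|\nabla_{\x} F(\bar{\x}_t,\bar{\y}_t)-\tfrac{1}{m}\!\sum_{i}\!\nabla_{\x} F_i(\x_{i,t},\y_{i,t})\bigr\|^2 + \tfrac{2}{m}\|\nabla_{\x}F_t-\bp_t\|^2,
\]
and an analogous split for $\bd_t$. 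The first piece is bounded using Assumption~\ref{Assump: Individual Lipschitz} and Jensen by $2L_f^2\bigl[\mathcal{E}(\x_t)+\mathcal{E}(\y_t)\bigr]$, which after multiplying by the appropriate prefactors yields exactly the bracket $\bigl[\tfrac{\nu}{\beta}\tfrac{L_F^2}{m}+\tfrac{4\nu L_F^2}{\beta\mu\eta\alpha}\cdot\tfrac{75\eta\alpha}{16\mu}\tfrac{2L_F^2}{m}\bigr]\sum_i\bigl[\|\bar{\x}_t-\x_{i,t}\|^2+\|\bar{\y}_t-\y_{i,t}\|^2\bigr]$ in the stated inequality; the second piece yields the $\tfrac{75\eta\alpha}{16\mu}\tfrac{2}{m}\|\nabla_{\y}F_t-\bd_t\|^2$ and $\tfrac{\nu}{2\beta}\tfrac{2}{m}\|\nabla_{\x}F_t-\bp_t\|^2$ terms verbatim.

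\textbf{Main obstacle.} The conceptual content is modest once the potential function is identified; the real difficulty is the coefficient bookkeeping. I need to verify that the condition $\eta\le 1/(2L_f)$ is exactly what is needed to convert the $\tfrac{4\nu L_F^2}{\beta\mu\eta^2}$ prefactor into $\tfrac{4\nu L_F^2}{\beta\mu\eta\alpha}$ when multiplying Lemma~\ref{Error Bound on omega}, and that the residual $-\tfrac{\nu L_F^2}{2\beta}\|\bar{\y}_t-\y^*_t\|^2$ indeed survives with the correct sign after the contraction $(1-\tfrac{\mu\eta\alpha}{4})$ is applied (this is where the factor of $4$ in the multiplier pays off: it produces a $-\tfrac{\nu L_F^2}{\beta}$ residual which, after cancelling the $+\tfrac{\nu L_F^2}{2\beta}$ from Lemma~\ref{Desceding J}, leaves exactly the stated $-\tfrac{\nu L_F^2}{2\beta}$ coefficient). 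A secondary subtlety is that the $\|\tilde{\x}_t-\1\otimes\bar{\x}_t\|^2$ term picks up contributions from both lemmas, so the negative Frank--Wolfe coefficient in the claimed bound is what remains after subtracting the positive $\tfrac{17 L_{\y}^2\nu^2}{2\mu\eta\alpha m}$ contribution; the step-size conditions in Theorem~\ref{Thm: Prox-GT-SRVR} on $\nu$ guarantee this residual remains usable, but at the level of this lemma I only need to keep the terms in their raw form as written.
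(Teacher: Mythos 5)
Your proposal is correct and follows essentially the same route as the paper: the paper likewise adds Lemma~\ref{Desceding J} to Lemma~\ref{Error Bound on omega} scaled by the multiplier $\tfrac{4\nu L_F^2}{\beta\mu\eta\alpha}$, uses the contraction $(1-\tfrac{\mu\eta\alpha}{4})$ to cancel the $+\tfrac{\nu L_F^2}{2\beta}\|\bar{\y}_t-\y_t^*\|^2$ term and leave the $-\tfrac{\nu L_F^2}{2\beta}$ residual, and then applies exactly your Young's-inequality split of $\|\nabla_{\x}F(\bar{\x}_t,\bar{\y}_t)-\bar{\p}_t\|^2$ (and its $\y$ analogue) into a consensus-error piece bounded via Lipschitzness and a tracking-error piece, before telescoping. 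Your reading of the $\eta^2$ versus $\eta\alpha$ prefactor and of the statement as a one-step descent inequality matches the paper's own (slightly inconsistent) bookkeeping.
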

	
		Next, we bound the iterates contraction of $\|{{\x}}_t-1 \otimes {\bar{{\x}}}_t\|^2 $ and $\|{{\y}}_t-1 \otimes {\bar{{\y}}}_t\|^2 $ in (\ref{eqn_step3}). 
		\begin{lem}[Iterates Contraction]\label{Lem: Contraction}
			The following contraction properties of the iterates hold:
			\begin{align} \label{4_1}
			\|{{\x}}_t-1 \otimes {\bar{{\x}}}_t\|^2 & \le (1+c_1)\lambda^2\|{{\x}}_{t-1} -1 \otimes {\bar{{\x}}}_{t-1} \|^2 \notag\\
			&+ (1+\frac{1}{c_1}) \nu^2\|\tilde{{\x}}_{t-1}- {{\x}}_{t-1}\|^2, \notag\\
			\|{{\y}}_t-1 \otimes {\bar{{\y}}}_t\|^2 & \le (1+c_2)\lambda^2\|{{\y}}_{t-1} -1 \otimes {\bar{{\y}}}_{t-1} \|^2 \notag\\
			&+ (1+\frac{1}{c_2}) {\eta}^2\|\tilde{{\y}}_{t-1}- {{\y}}_{t-1}\|^2,
			\end{align}
			where $c_1$ and $c_2$ are arbitrary positive constants.
			Additionally, we have
			\begin{align} \label{4_2}
			\|{{\x}}_t-{{\x}}_{t-1}\|^2
			& \le
			8\Ec({{\x}}_{t-1}) + 2\nu^2 \|\tilde{{\x}}_{t-1}-{\x}_{t-1}\|^2, \notag\\
			\|{{\y}}_t-{{\y}}_{t-1}\|^2
			& \le
			8\Ec({{\y}}_{t-1}) + 2\eta^2 \|\tilde{{\y}}_{t-1}-{\y}_{t-1}\|^2 .
			\end{align}
			
		\end{lem}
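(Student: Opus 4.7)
The plan is to derive both inequalities by rewriting the $\x$- and $\y$-updates in stacked form, isolating the consensus-orthogonal component of each update, and then invoking two elementary facts: (i) the doubly-stochastic $\M$ preserves the consensus subspace and contracts its orthogonal complement with ratio $\lambda$, and (ii) the centered projector $I-\tfrac{1}{m}\1\1^{\top}$ has operator norm~$1$.

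For the first pair of inequalities, I would stack~\eqref{Eq: consenus_updating_x} as $\x_t = \M\x_{t-1} + \nu(\tilde{\x}_{t-1}-\x_{t-1})$. Averaging and using $\1^{\top}\M=\1^{\top}$ yields $\bar{\x}_t = \bar{\x}_{t-1} + \nu(\bar{\tilde{\x}}_{t-1}-\bar{\x}_{t-1})$, so subtracting gives
\[
\x_t - \1\otimes\bar{\x}_t \;=\; \M\bigl(\x_{t-1} - \1\otimes\bar{\x}_{t-1}\bigr) + \nu\bigl(I-\tfrac{1}{m}\1\1^{\top}\bigr)\bigl(\tilde{\x}_{t-1}-\x_{t-1}\bigr).
\]
Squaring and applying Young's inequality with parameter $c_1>0$, then using $\|\M\v\|^2\le\lambda^2\|\v\|^2$ for any $\v$ orthogonal to $\1$ together with $\|(I-\tfrac{1}{m}\1\1^{\top})\u\|^2\le\|\u\|^2$, immediately yields the first contraction bound. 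The $\y$-case is identical after replacing $\nu$ by $\eta$ and $c_1$ by $c_2$.

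For the movement bound, I would start from $\x_t-\x_{t-1} = (\M-I)\x_{t-1}+\nu(\tilde{\x}_{t-1}-\x_{t-1})$. Using $(\M-I)(\1\otimes\bar{\x}_{t-1})=\0$, the first term equals $(\M-I)(\x_{t-1}-\1\otimes\bar{\x}_{t-1})$. Then $(a+b)^2\le 2a^2+2b^2$ together with $\|\M-I\|_{\mathrm{op}}\le 2$ (its eigenvalues lie in $[-2,0]$) yields a factor of $2\cdot 2^2=8$ on the consensus-error term and $2\nu^2$ on the perturbation term, which matches the stated inequality under the paper's normalization of $\Ec(\cdot)$. The $\y$-case goes through verbatim after $\nu\to\eta$.

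I do not expect any genuine obstacle; the entire lemma reduces to bookkeeping around the mixing matrix, Young's inequality, and spectral-norm bounds. The only subtle point is stripping off the consensus component before invoking the $\lambda$-contraction of $\M$, and noticing that the correction term in the centered recursion is automatically consensus-orthogonal because $\bar{\x}_t$ already absorbs its mean.
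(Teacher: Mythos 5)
Your proposal is correct and follows essentially the same route as the paper: center the stacked update, apply Young's inequality with parameter $c_1$ (resp.\ $c_2$), use the $\lambda$-contraction of $\M$ on the consensus-orthogonal component, and for the movement bound use $(\M-\I)(\1\otimes\bar{\x}_{t-1})=\0$ together with $\|\M-\I\|\le 2$ to obtain the factor $8$. The only wrinkle — that $\|\x_{t-1}-\1\otimes\bar{\x}_{t-1}\|^2$ equals $m\,\Ec(\x_{t-1})$ rather than $\Ec(\x_{t-1})$ under the paper's stated normalization — is present in the paper's own derivation as well, so your treatment matches theirs.
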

%
%
%
	%
		Next, we bound the gradient tracking errors $\sum_{t = 0}^{T}\|\bd_{t} - \nabla_{{{\x}}} F_{t}\|^2 $ and $ 	\sum_{t = 0}^{T}\|\bp_{t} - \nabla_{{{\y}}} F_{t}\|^2 $ in (\ref{eqn_step3}).
		
		\begin{lem}[Error of Gradient Estimator]\label{Lem: SRVR err}
			Under Assumption \ref{Assump: Individual Lipschitz},  we have the following error bounds for the gradient trackers:
			\begin{align}
			&\sum_{t = 0}^{T}\|\bd_{t} - \nabla_{{{\x}}} F_{t}\|^2  \notag\\ \le &
			\sum_{t = 1}^{T} \Eb\|\bd_{(n_t-1)q}-\nabla_{{{\x}}} F({{\x}}_{(n_t-1)q}, 
			{{\y}}_{(n_t-1)q} )\|^2
			\notag\\&+ L_f^2\big(\|{{\x}}_{t} - {{\x}}_{t-1}\|^2 + \|{{\y}}_{t} - {{\y}}_{t-1}\|^2\big), \\
			&\sum_{t = 0}^{T}\|\bp_{t} - \nabla_{{{\y}}} F_{t}\|^2 \notag\\\le&
			\sum_{t = 1}^{T}\Eb\|\bp_{(n_t-1)q}-\nabla_{{{\y}}} F({{\x}}_{(n_t-1)q} ,
			{{\y}}_{(n_t-1)q} )\|^2
			\notag\\&+ L_f^2\big(\|{{\x}}_{t} - {{\x}}_{t-1}\|^2 + \|{{\y}}_{t} - {{\y}}_{t-1}\|^2\big),
			\end{align}
			where $n_t$ is the largest positive integer satisfing $(n_t-1)q\le t$.
		\end{lem}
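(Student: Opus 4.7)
The plan is to exploit the SRVR recursive structure of the local stochastic estimators $\v_{i,t}$ and $\u_{i,t}$, combined with the doubly-stochastic consensus property of $\M$, to reduce the claim to a one-step variance recursion that can then be telescoped across each inner epoch of length $q$. The two inequalities are completely symmetric in the roles of $\x/\y$ and $\p/\d$, so it suffices to carry out the argument once (say, for the $\x$-side).

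The first step is a tracker-estimator identity. Summing the gradient-tracking update \eqref{Eq: Tracking_updating} over $i$ and using the doubly-stochastic property $\mathbf{1}^{\top}\M = \mathbf{1}^{\top}$ yields $\bp_t - \bp_{t-1} = \bar{\v}_t - \bar{\v}_{t-1}$, and the matching initialization $\bp_0 = \bar{\v}_0$ gives the invariant $\bp_t = \bar{\v}_t$ for all $t$ (and likewise $\bd_t = \bar{\u}_t$). This reduces the left-hand side of the lemma to a cumulative bound on the averaged SRVR estimator error, which no longer requires any consensus reasoning.

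The second step is a one-step variance recursion. For any $t$ with $\mathrm{mod}(t,q)\neq 0$, the SRVR update \eqref{Eq: Updateing_v} writes $\bar{\v}_t - \bar{\v}_{t-1}$ as a mini-batch average of gradient differences whose conditional mean given the natural filtration $\Fc_{t-1}$ coincides with the corresponding true-gradient difference. Therefore the decomposition $\bar{\v}_t - \nabla_{\x} F_t = (\bar{\v}_{t-1} - \nabla_{\x} F_{t-1}) + E_t$, where $E_t$ is a mean-zero SRVR noise, gives, after expanding the square and taking conditional expectation,
\begin{align}
\Eb\|\bar{\v}_t - \nabla_{\x} F_t\|^2 \le \Eb\|\bar{\v}_{t-1} - \nabla_{\x} F_{t-1}\|^2 + \Eb\|E_t\|^2.
\end{align}
Assumption~\ref{Assump: Individual Lipschitz} applied to each sampled gradient difference in $E_t$ bounds $\Eb\|E_t\|^2$ by $L_f^2(\|\x_t - \x_{t-1}\|^2 + \|\y_t - \y_{t-1}\|^2)$, with the mini-batch normalization absorbed into the constants via the parameter choice $|\Sc_{i,t}| = \lceil\sqrt{n}\rceil$.

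Finally, I would unroll the one-step recursion from each $t$ back to its most recent checkpoint $(n_t-1)q$ — at which the estimator is freshly initialized to a (full or sufficiently large mini-batch) gradient, giving exactly the first term on the right-hand side of the lemma — and then sum over $t = 0,\ldots,T$ to obtain the cumulative inequality as stated. The main technical obstacle is the filtration/measurability bookkeeping in the second step: I must verify that across agents $i$, the mini-batches $\{\Sc_{i,t}\}$ are conditionally independent of $\Fc_{t-1}$ so that the cross terms in the averaged squared norm vanish in expectation and the variance scales correctly in $|\Sc_{i,t}|$ and $m$, which is also what is needed to match the constants when this lemma is substituted into Lemma~\ref{Lem: Descending_Q} in the overall convergence argument.
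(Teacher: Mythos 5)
Your proposal follows essentially the same route as the paper's proof: decompose the averaged SRVR error as the previous error plus a conditionally mean-zero increment, kill the cross term by the martingale property, bound the increment variance via Assumption~\ref{Assump: Individual Lipschitz}, and telescope within each epoch before summing over $t$ (your explicit first step, the tracking invariant $\bar{\p}_t=\bar{\v}_t$, is left implicit in the paper but is exactly what justifies working with the averaged estimator). The only imprecision is your remark that the mini-batch normalization is ``absorbed into the constants'': the factor $L_f^2/q$ in the increment-variance bound is not absorbed but is precisely what cancels the up-to-$q$-fold double counting when the unrolled recursions are summed over $t=0,\ldots,T$, which is how the paper arrives at the clean $L_f^2$ coefficient in the final inequality.
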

		
		\begin{proof}[Proof Sketch of Lemma~\ref{Lem: SRVR err}]
		Define
			\begin{align}
			{A_{i,t}}=&{\bd_{i,t} \!-\! \nabla_{{{\x}}} F_{i,t}};\
			B_{i,t}= \frac{1}{|\Sc_{i,t}|}\!\!\sum_{j \in \Sc_{i,t}}\!\! \! \nabla_{{{\x}}} f_{i,t}({{\x}}_{i,t},{{\y}}_{i,t})\!\notag\\&-\! \nabla_{{{\x}}}  f_{i,t}({{\x}}_{i,t\!-\!1},{{\y}}_{i,t\!-\!1})
			\!+ \!\nabla_{{{\x}}} F_{i,t\!-\!1}  \!-\! \nabla_{{{\x}}} F_{i,t}.
			\end{align}
			Note that $\Eb_t[B_{i,t}] = 0$, where the expectation is taken over the randomness of data sampling at the $t$-th iteration. Thus,
			\begin{align}
			\Eb_t\|A_{i,t}\|^2
			=
			\|A_{i,t-1}\|^2 + \Eb_t\|B_{i,t}\|^2.
			\end{align}

			Also, with $|\Sc_{i,t}| = q$, we have
			\begin{align}\label{4_3}
			&
			\Eb_t\|B_{i,t}\|^2
			\le
			\frac{L_f^2}{q}\big(\|{{\x}}_{i,t} \!-\! {{\x}}_{i,t-1}\|^2 + \|{{\y}}_{i,t} \!-\! {{\y}}_{i,t-1}\|^2\big).
			\end{align}
			
			Taking full expectation and telescoping (\ref{4_3}) over $t$ from $(n_t-1)q + 1$ to $t$, where $t\le n_t q - 1$, we have
			$	\Eb\|A_{t}\|^2
			\le
			\Eb\|A_{(n_t-1)q}\|^2 +
			\sum_{r = (n_t-1)q + 1}^{t}
			\frac{L_f^2}{q} \Eb\big(\|{{\x}}_{r} - {{\x}}_{r-1}\|^2 + \|{{\y}}_{r} - {{\y}}_{r-1}\|^2\big).\notag
			$
			Thus, 
			$
			\sum_{k= 0}^{t}
			\Eb\|A_{k}\|^2
			\leq
			\sum_{r = 0}^{t}\|A_{(n_r-1)q}\|^2\!+
			\sum_{r = 1}^{t}
			L_f^2\big(\|{{\x}}_{r} - {{\x}}_{r-1}\|^2 + \|{{\y}}_{r} - {{\y}}_{r-1}\|^2\big).\notag
			$
			We have similar result while $ {A_{i,t}}={\bp_{i,t} \!-\! \nabla_{{{\y}}} F_{i,t}}$.
			This completes the proof of of Lemma. \ref{Lem: SRVR err}.
		\end{proof}

\subsection{ Proof Sketch of Theorem~\ref{Thm: Prox-GT-SRVR}} 
\begin{proof}
Following the defined potential function $\mathfrak{p}$ and the result of Lemma \ref{Desceding J}-\ref{Lem: SRVR err}, we have
		\begin{align}
		&\Eb \mathfrak{p}_{T+1} \!\!-\! \mathfrak{p}_{0}
		\!\le\!{\nu {L_f}^2}{2} \!\sum_{t\!=\!0}^{T}\! \left\|\bar{{\y}}_{t}-{{\y}}_{t}^{*}\right\|^{2} \notag\\
		&-C_1 \!\sum_{t\!=\!0}^{T}\! \sum_{i=1}^{m}\|{\bar{{\x}}}_t - {{\x}}_{i,t}\|^2-
		\!\!C_2\!\sum_{t\!=\!0}^{T}\! \left\|\tilde{{\x}}_{t} -1 \otimes \bar{{\x}}_{t}\right\|^{2} \notag\\
		&- C_3 \!\sum_{t\!=\!0}^{T}\! \sum_{i=1}^{m}[ \|{\bar{{\y}}}_t \!-\!{{\y}}_{i,t}\|^2 ]
		\!-\!C_4 \!\sum_{t\!=\!0}^{T}\! \left\|\widetilde{{\y}}_{t} -1 \otimes \bar{{\y}}_{t}\right\|^{2}  ,
		\end{align}
		where 
		\begin{align}
		C_1=&\big[ 1-8L_f^2\big( \frac{75 {\eta\alpha}}{16 \mu}\frac{2}{m} + \frac{\nu}{2 \beta } \frac{2}{m} \big)-\frac{\nu \tau}{2 \beta m}\notag\\&- (1+c_1)\lambda^2-  \frac{\nu}{ \beta }  \frac{ L_f^2}{m}   -  \frac{4 \nu L_f^2}{ \beta \mu{\eta\alpha} } \frac{75 {\eta\alpha}}{16 \mu}\frac{2L_f^2}{m}   \big], \notag\\
		C_2=&\big(-2\nu^2L_f^2\big( \frac{75 {\eta\alpha}}{16 \mu}\frac{2}{m} + \frac{\nu}{2 \beta } \frac{2}{m} \big)- (1+\frac{1}{c_1})\nu^2 \notag\\& -\frac{17L_{{\y}}^{2}\nu^2 }{2 \mu m {\eta\alpha}}+\frac{\nu \tau}{m} -\frac{\nu^{2} L_J}{2 m}-\frac{\nu \beta}{m}-\frac{\nu \tau \beta}{2 m}\big),\notag\\
		C_3=&\big[ 1\!-\!8L_f^2\big( \frac{75 {\eta\alpha}}{16 \mu}\frac{2}{m} \!+\! \frac{\nu}{2 \beta } \frac{2}{m} \big)\!-\! (1\!+\!c_2)\lambda^2 \notag\\&-\!  \frac{\nu}{ \beta }  \frac{ L_f^2}{m}\!-\!  \frac{4 \nu L_f^2}{ \beta \mu{\eta\alpha} } \frac{75 {\eta\alpha}}{16 \mu}\frac{2L_f^2}{m}   \big] ,\notag\\
		C_4=& \frac{4 \nu L_f^2}{ \beta \mu{\eta\alpha} }\frac{3 {\eta}}{4} \notag\\& - (1+\frac{1}{c_2})\eta^2  -2{\eta\alpha}L_f^2\big( \frac{75 {\eta\alpha}}{16 \mu}\frac{2}{m} + \frac{\nu}{2 \beta } \frac{2}{m} \big).\notag
		\end{align}	

  Suppose that $\beta\leq \min \Big\{ \frac{\tau}{12}  ,\frac{1}{3} \Big\}$,

$
\!\!\!\!\!	\alpha\leq \frac{1}{4L_f}$ hold and let $c_1= \frac{1-\lambda^2}{1+\lambda^2}$, if step-sizes satisfy Thm.~\ref{Thm: Prox-GT-SRVR}
to ensure $C_1,C_2,C_3,C_4\geq0$. We can conclude that
		\begin{align}
		&\frac{1}{T+1}	\sum_{t=0}^{T}  \mathfrak{M}_t	\le
		\frac{\Eb [\mathfrak{p}_{0} - Q^*] }{\min\{C_1,C_2,\nu L_f^2/2\}(T+1)}.
		\end{align}
	This completes the proof Theorem~\ref{Thm: Prox-GT-SRVR}.
\end{proof}

\subsection{ Proof Sketch of Theorem~\ref{Thm: Prox-GT-SRVR+}} 

\begin{proof}
	For \algplus, we have 
	\begin{align}
	&\Eb\|\bd_{(n_t-1)q}- \nabla_{{{\x}}} F_{(n_t-1)q}\|^2 \notag\\&= \Eb\|\bp_{(n_t-1)q}- \nabla_{{{\y}}} F_{(n_t-1)q}\|^2=\frac{I_{\left(\mathcal{N}_{s}<M\right)}}{\mathcal{N}_{s}} \sigma^{2}.
	\end{align}
	  Recall that $\mathcal{N}_s =\min\{ c_{\gamma} \sigma^2(\gamma^{(k)})^{-1}, c_{\epsilon} \sigma^2\epsilon^{-1} ,M\}  $, we have
	  \begin{align} \label{34}
	  \frac{I_{( \mathcal{N}_s <M)}}{ \mathcal{N}_s}  &  \leq \max\{ \frac{{\gamma^{(k)}}}{c_{\gamma}\sigma^2}, \frac{\epsilon}{c_{\epsilon} \sigma^2}  \}  \leq \frac{{\gamma^{(k)}}}{  c_{\gamma}\sigma^2}+\frac{\epsilon}{ c_{\epsilon}\sigma^2}.
	  \end{align}
	Since $\gamma_{t+1}=\frac{1}{q} \sum_{i=\left(n_{t}-1\right) q}^{t}\left\| \tilde{{\x}}_{t}-1 \otimes \bar{{\x}}_{t} \right\|^{2}$.
	Plugging \eqref{34} to Lemma \ref{Lem: Descending_Q}, we have the following result,
	with additional parameter  setting
$$
	c_{\gamma} \geq  (\frac{75 {\eta\alpha}}{8 \mu}\frac{1}{m} +\frac{\nu}{ \beta } \frac{1}{m} ) \frac{\nu\tau}{12}.$$
For \algplusns, following the defined potential function $\mathfrak{p}$ and the result of Lemma \ref{Desceding J}-\ref{Lem: SRVR err}, with $\mathfrak{p}_{T+1} \ge Q^*$, we reach the conclusion:
	\begin{align}
&\frac{1}{(T+1)}\sum_{t=0}^{T} \mathfrak{M}_t 
\le  
 ( \frac{75 {\eta\alpha}}{16 \mu}\frac{2}{m} +\frac{\nu}{2 \beta } \frac{2}{m}  ) \frac{ \epsilon}{c_{\epsilon} }\notag\\
&+\frac{\Eb [\mathfrak{p}_{0} - \mathfrak{p}_{T+1}] }{(T+1)\min\{ C_1,C_2',\nu L_f^2/2\}},
\end{align}
where $C_1, C_2'\geq 0$. This completes the proof Theorem~\ref{Thm: Prox-GT-SRVR+}.
 
\end{proof}

\newpage
\onecolumn

\section{ Further experiments and additional results} \label{exp}
In the followings, we provide the detailed settings for our experiments:

\smallskip

\textbf{1) AUC Maximization Model and Datasets:}

We apply the following AUC maximization problem with a given dataset $\left\{\mathbf{a}_{ij}, b_{ij}\right\}_{j=1}^{n}$ where $\mathbf{a}_{ij}$ denotes a feature vector and $b_{ij} \in\{-1,+1\}$ indicates the corresponding label. With function $h_{\boldsymbol{\x}}$ of a classification model parameterized by $\boldsymbol{\x_i} \in \mathcal{X}$, the AUC is defined as
\begin{align} \label{eqn_1}
\max _{\boldsymbol{\x_i} \in \mathcal{X}} \frac{1}{e^{+} e^{-}} \sum_{b_{ij}=+1, b_{ij}=-1} \mathbb{I}_{\left\{h_{\boldsymbol{\x}_i}\left(\mathbf{a}_{ij}\right) \geq h_{\boldsymbol{\x}_i}\left(\mathbf{a}_{jj}\right)\right\}},
\end{align}
where $e^{+}\left(e^{-}\right)$ indicates the number of positive (negative) samples and $\mathbb{I}$ denotes the indicator function. 
The above optimization problem has the following equivalent minimax formulation:
$$
\begin{aligned}
	&\min _{\boldsymbol{\x}_i, c_1, c_2} \max _{y_i}  \frac{1}{m} \sum_{i=1}^{m}F_i(\boldsymbol{\x}_i, c_1, c_2, \lambda)\\\notag:=& \frac{1}{m} \sum_{i=1}^{m}\left\{(1-\tau)\left(h_{\boldsymbol{\x}_i}\left(\mathbf{a}_{ij}\right)-c_1\right)^{2} \mathbb{I}_{\left\{b_{ij}=1\right\}}-\tau(1-\tau) y_i^{2}+\tau\left(h_{\boldsymbol{\x}_i}\left(\mathbf{a}_{ij}\right)-c_2\right)^{2} \mathbb{I}_{\left\{b_{ij}=-1\right\}}\right.\\
	&\left.+2(1+y_i) \tau h_{\boldsymbol{\x_i}}\left(\mathbf{a}_{ij}\right) \mathbb{I}_{\left\{b_{ij}=-1\right\}}-2(1+y_i)(1-\tau) h_{\boldsymbol{\x}_i}\left(\mathbf{a}_{ij}\right) \mathbb{I}_{\left\{b_{ij}=1\right\}}\right\},
\end{aligned}
$$
where $\tau:=e^{+} /\left(e^{+}+e^{-}\right)$is the ratio of positive data. 

We test the convergence performance of our algorithms using the ``a9a" dataset from LIBSVM repository, which is publicly available at \cite{chang2011libsvm} and `MNIST"\cite{lecun2010mnist}.

%

%
%
%

\smallskip

\textbf{2) Decentralizednetworks:}
We use a five-node multi-agent system, with the communication graph $\mathcal{G}$ being generated by the Erd$\ddot{\text{o}}$s-R$\grave{\text{e}}$nyi graph, where the edge connectivity probability is $p_c=0.6.$ 
The network consensus matrix is chosen as $\W = \I - \frac{2}{3\lambda_{\text{max}}(\mathbf{L})} \mathbf{L},$ where $\mathbf{L}$ is the Laplacian matrix of $\mathcal{G}$, and $\lambda_{\text{max}}(\mathbf{L})$ denotes the largest eigenvalue of $\mathbf{L}$.
The generated topology is shown in Figure~\ref{fig: topos}.




\begin{figure}[h]
	\centering
	\includegraphics[width=0.3\linewidth]{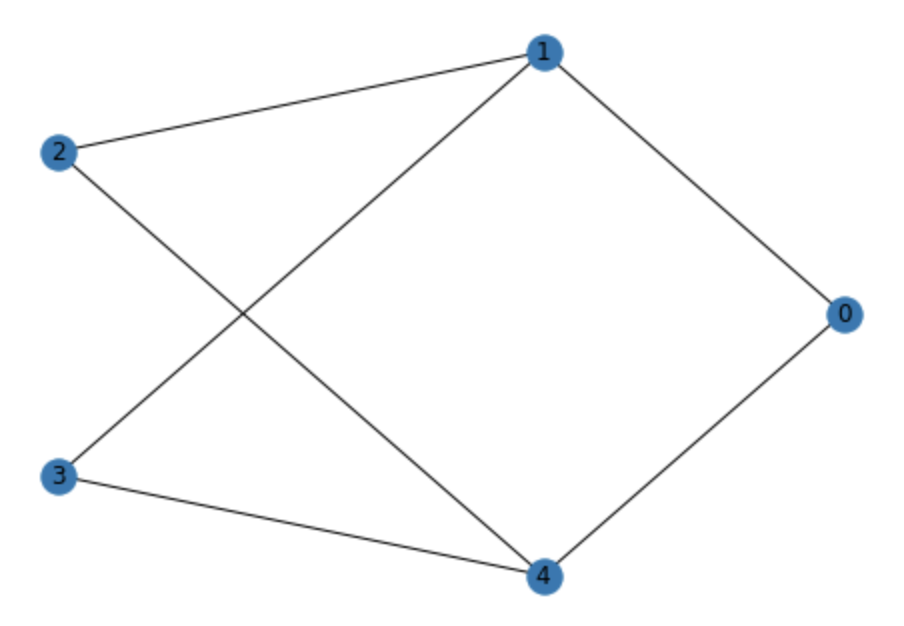}
	\caption{Network topology}
	\label{fig: topos}
\end{figure}

\subsection{Algorithms comparison}
In this subsection, we provide an additional experiment on the algorithms' comparison.
We run all algorithms for solving optimization problem over AUC maximization problem under a9a dataset and mnist dataset.
In this experiment, we initialized the parameters from the normal distribution for all the algorithms and fixed learning rates as $\gamma=10^{-1}, \eta=10^{-1}$.
From Figure~\ref{img_auc}, we observe our proposed algorithms  \algns/\algplus enjoy low sample and communication complexities on solving AUC maximization problem under both ``a9a'' dataset and ``MNIST'' dataset.
%


\begin{figure}[htbp]
	\centering
	\subfigure[Algorithms comparison on ``a9a'' dataset.]{
		\begin{minipage}[t]{0.22\linewidth}
			\centering
			\includegraphics[width=1\textwidth]{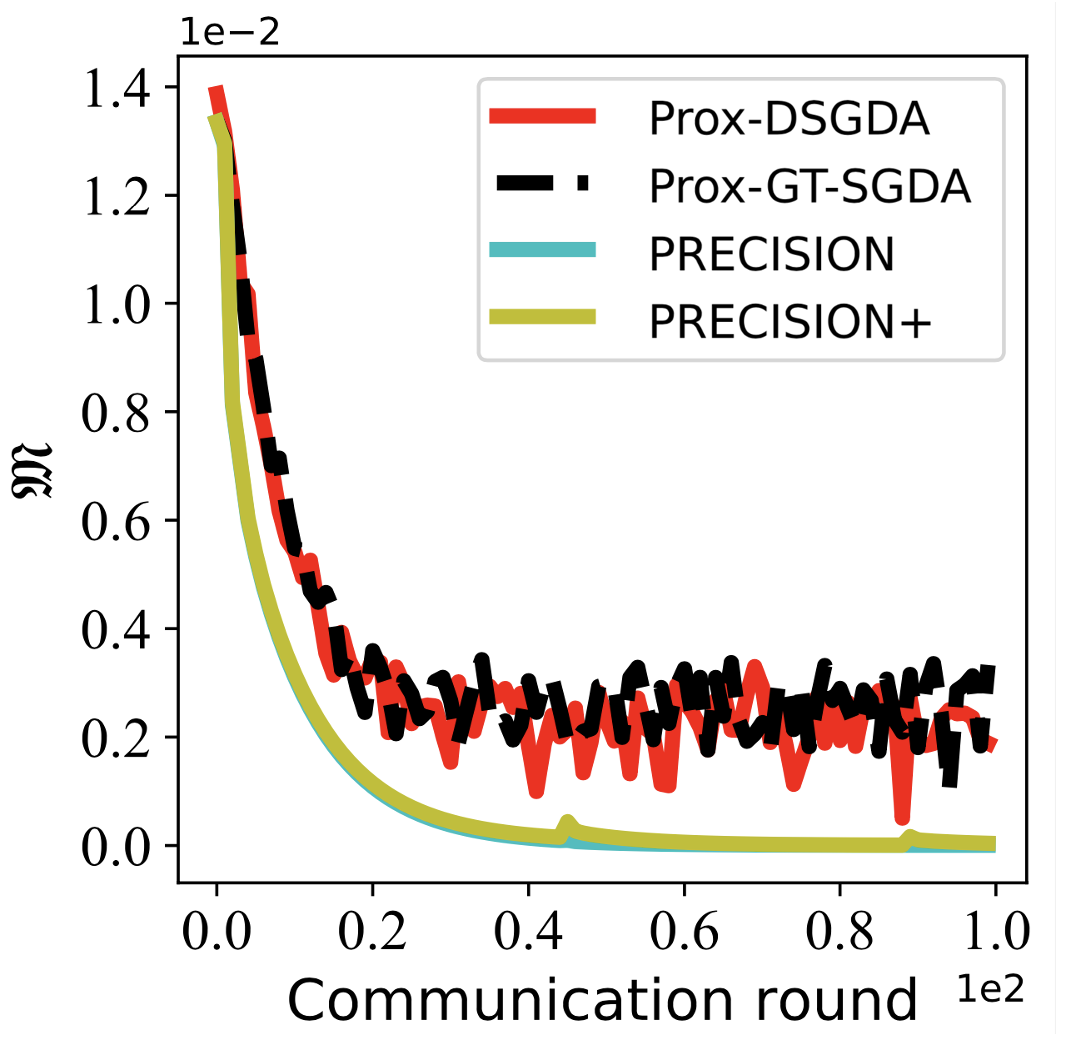} 
		\end{minipage}
		\begin{minipage}[t]{0.22\linewidth}
			\centering
			\includegraphics[width=1\textwidth]{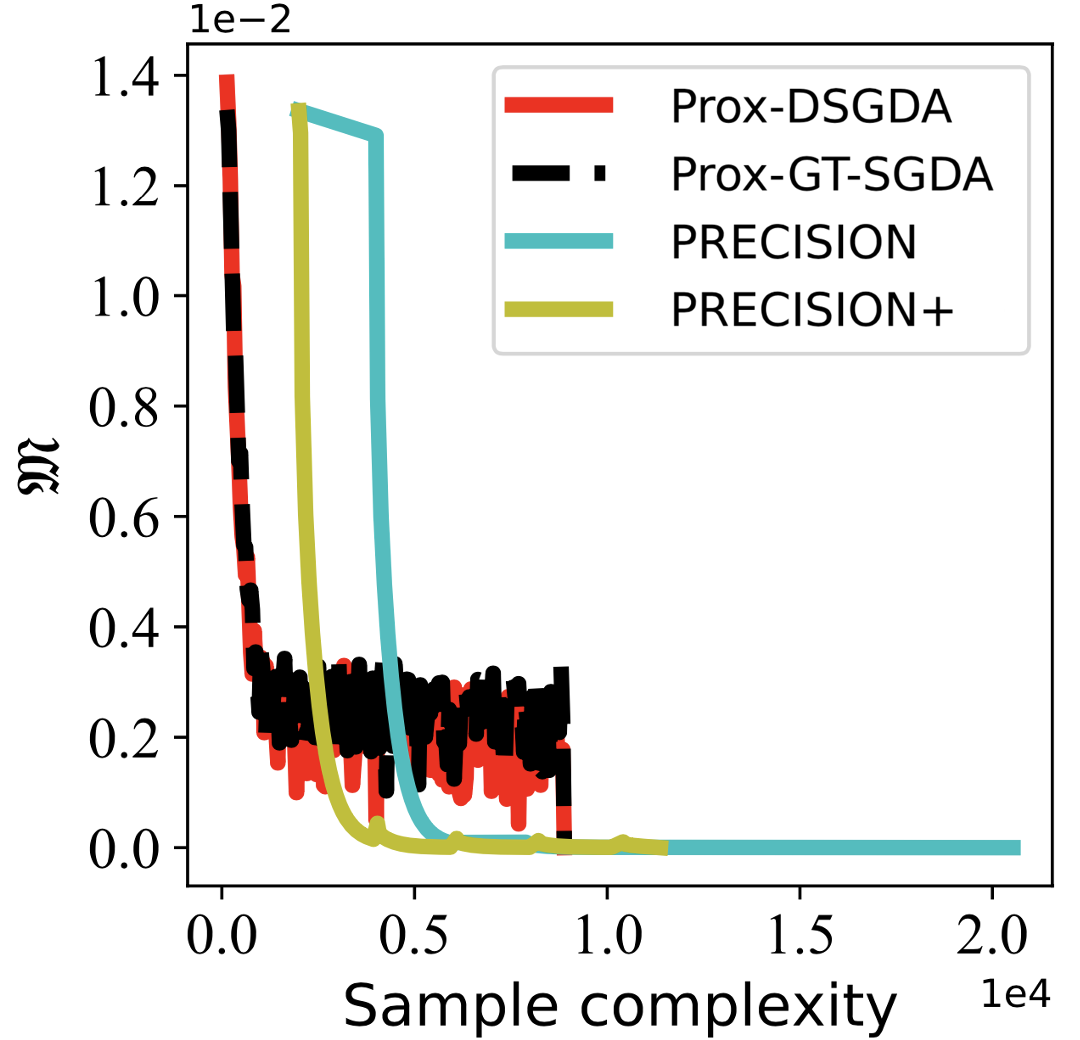} 
		\end{minipage}
		\label{imgauc_regression}
	}	     
	\centering
	\subfigure[Algorithms comparison on ``MNIST'' dataset.]{
		\begin{minipage}[t]{0.22\linewidth}
			\centering
			\includegraphics[width=1\textwidth]{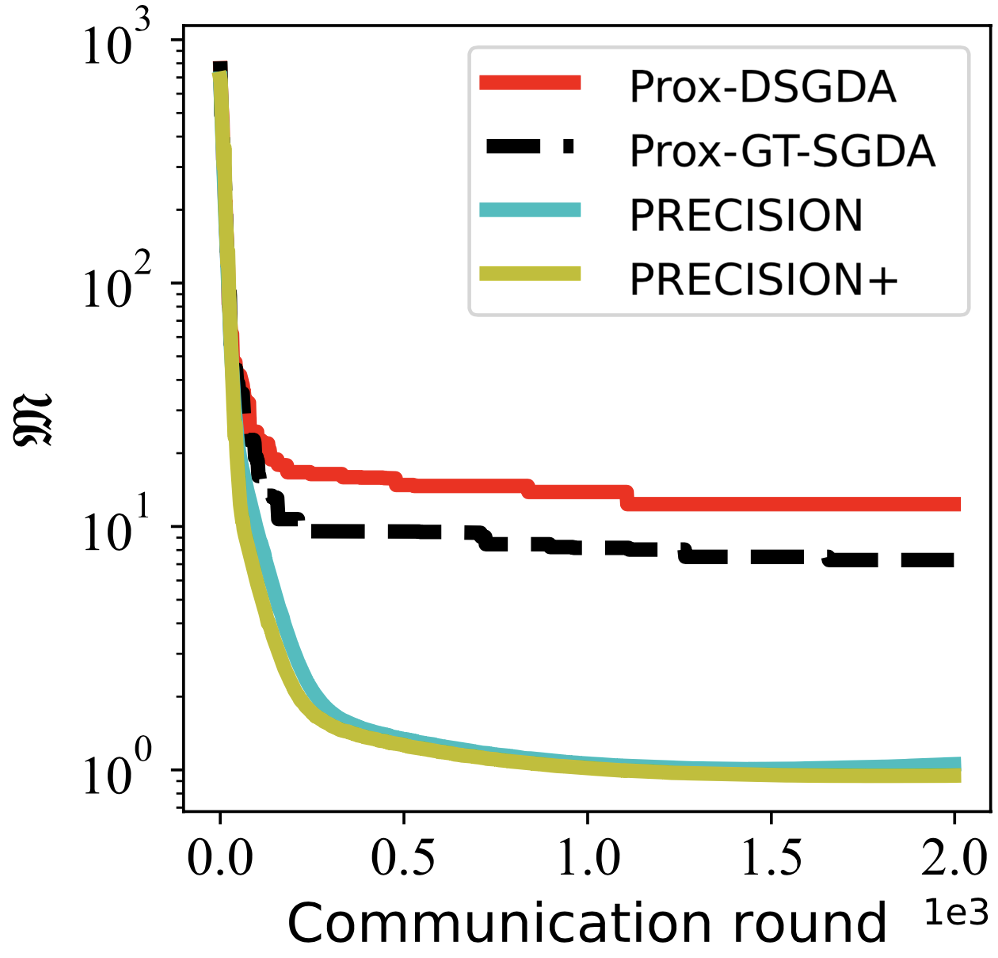}
		\end{minipage}
		\begin{minipage}[t]{0.225\linewidth}
			\centering
			\includegraphics[width=1\textwidth]{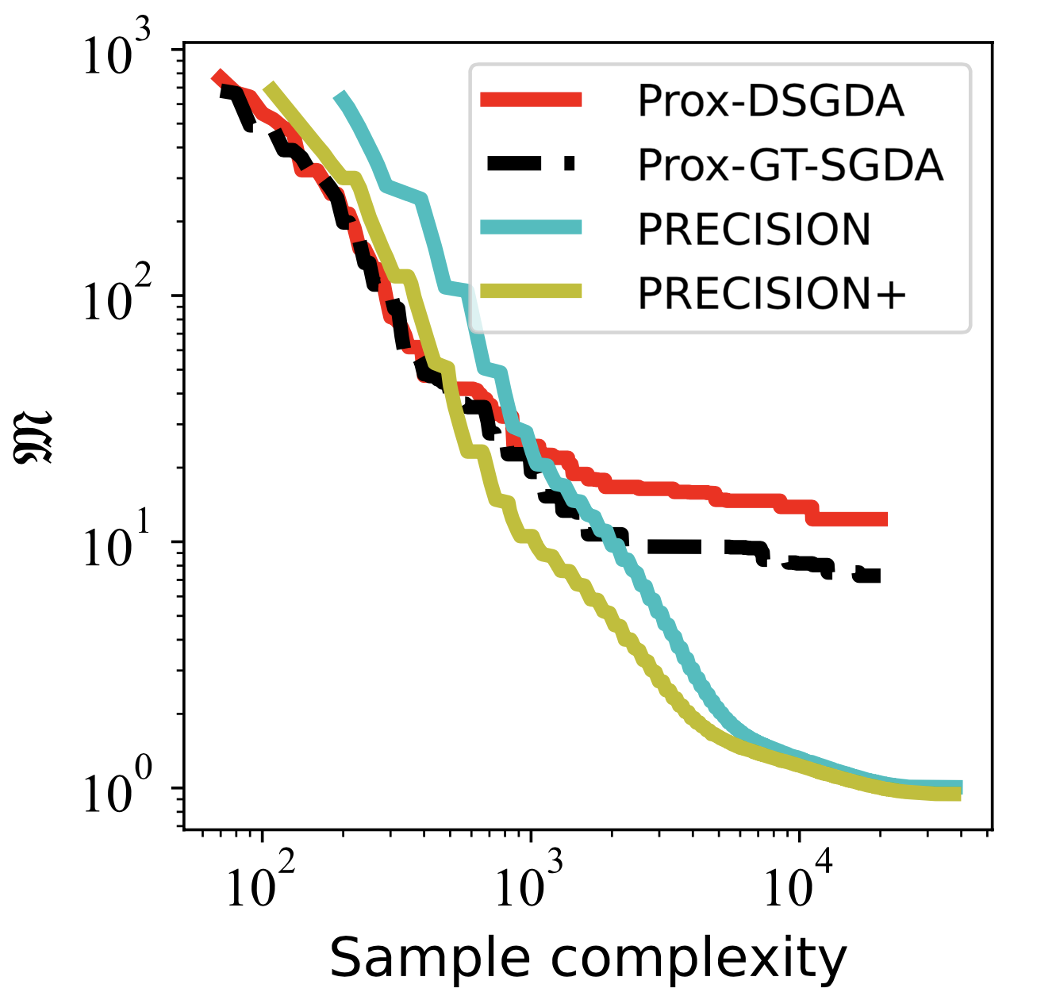}
		\end{minipage}
		\centering
		\label{imgauc_mnist}
	}	\caption{ Algorithms Comparision on AUC maximization problem .}
	\label{img_auc}
\end{figure}

\subsection{Learning rate setting}
We use a 5-node multi-agent system with a generated topology as shown in Figure~\ref{fig: topos}.
In this experiment, we choose the datasize $n=2000$, mini-batch size $q=\lceil\sqrt{n}\rceil$.  
Figs.~\ref {img23} illustrate the convergence metric $\mathfrak{M}$ performance of \alg with different learning rates $\gamma$ and $\eta$. We fix a relatively small learning rate $\gamma=10^{-1}$ while comparing $\eta$; and set $\eta=10^{-1}$ while comparing $\gamma$.  
In this experiment, we observe that methods with a smaller learning rate have a smaller slope in the figure, which leads to a slower convergence.

\begin{figure}[htbp]
	\centering
	\subfigure[Step-size comparison on Regression.]{
		\begin{minipage}[t]{0.215\linewidth}
			\centering
			\includegraphics[width=1\textwidth]{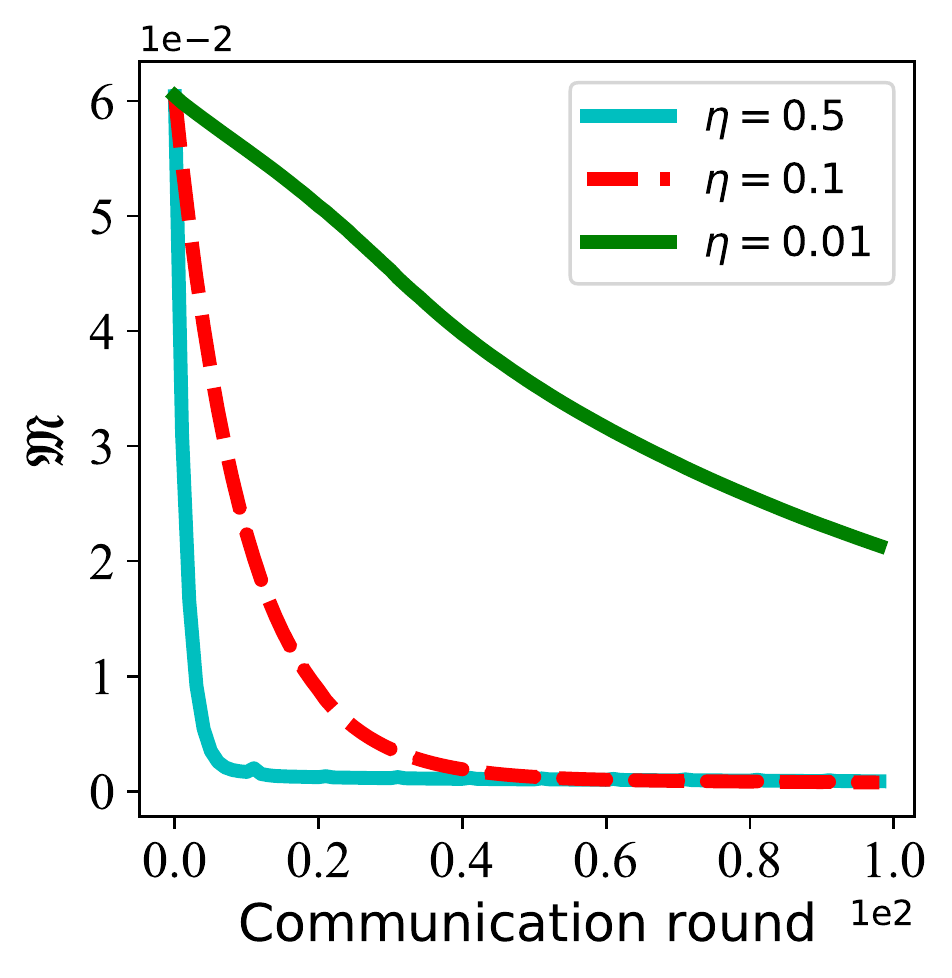} 
		\end{minipage}
		\begin{minipage}[t]{0.23\linewidth}
			\centering
			\includegraphics[width=1\textwidth]{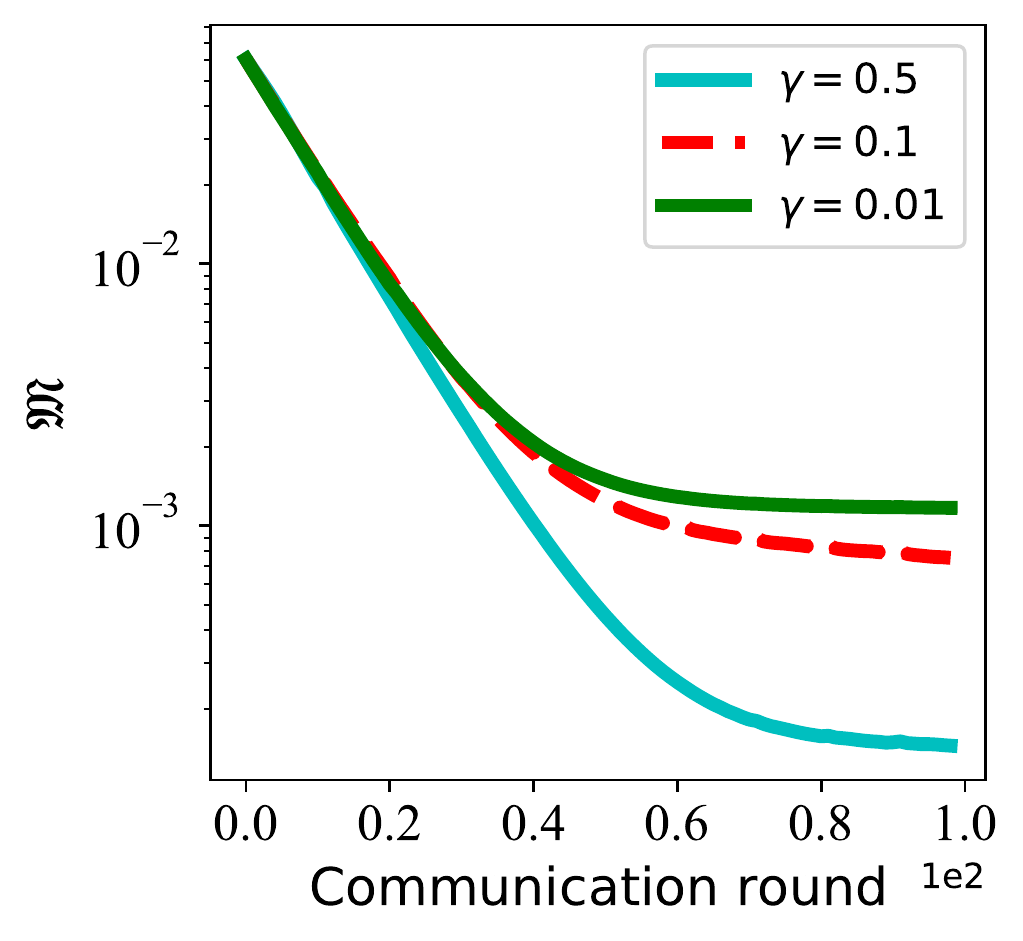} 
		\end{minipage}%
		\label{img23a}
	}	     
	\centering
	\subfigure[Step-size comparison on AUC maximization.]{
		\begin{minipage}[t]{0.22\linewidth}
			\centering
			\includegraphics[width=1\textwidth]{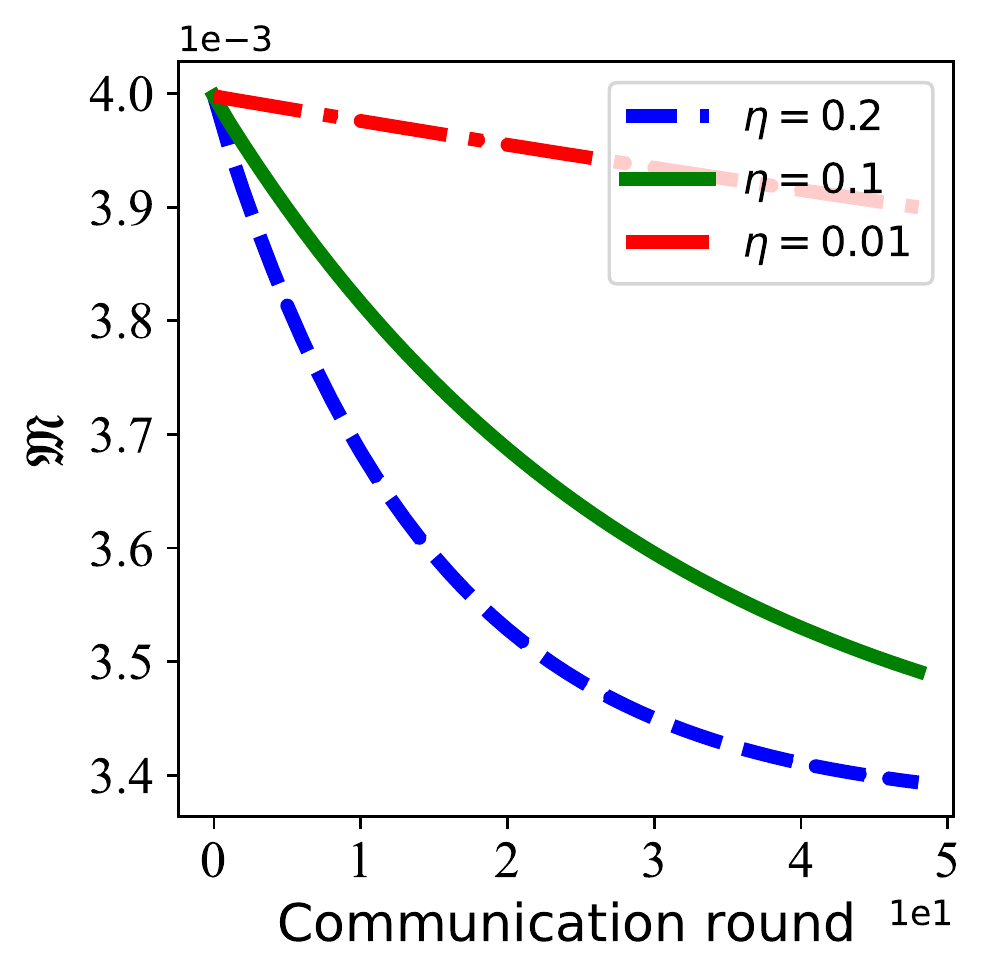}
		\end{minipage}
		\begin{minipage}[t]{0.23\linewidth}
			\centering
			\includegraphics[width=1\textwidth]{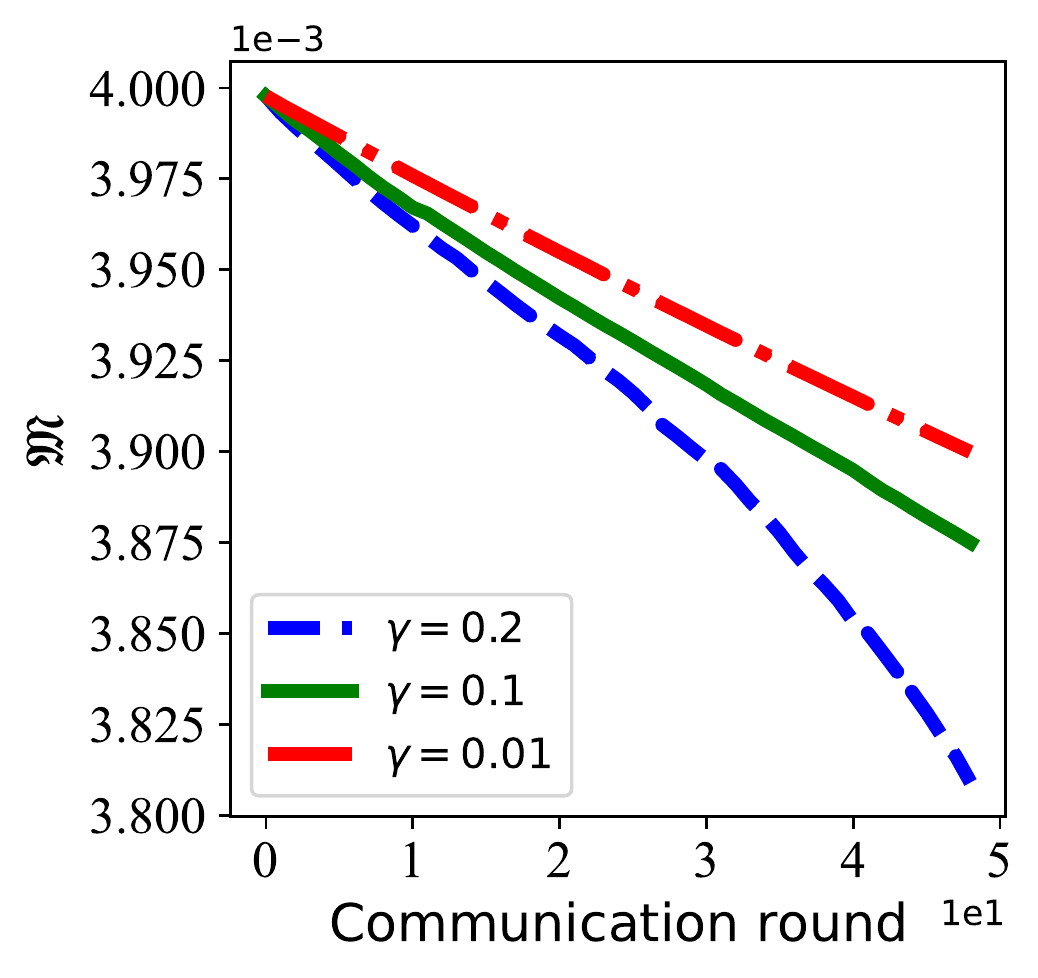}
		\end{minipage}
		\centering
		\label{img23b}
	}	\caption{ Algorithm(\alg) performance  with different step-size.}
\label{img23}
\end{figure}

\subsection{Topology setting}

\begin{figure}[htbp]
	\centering
	\subfigure[Topology sparsity $p_c=0.3$.]{
		\begin{minipage}[t]{0.2\linewidth}
			\centering
			\includegraphics[width=1\textwidth]{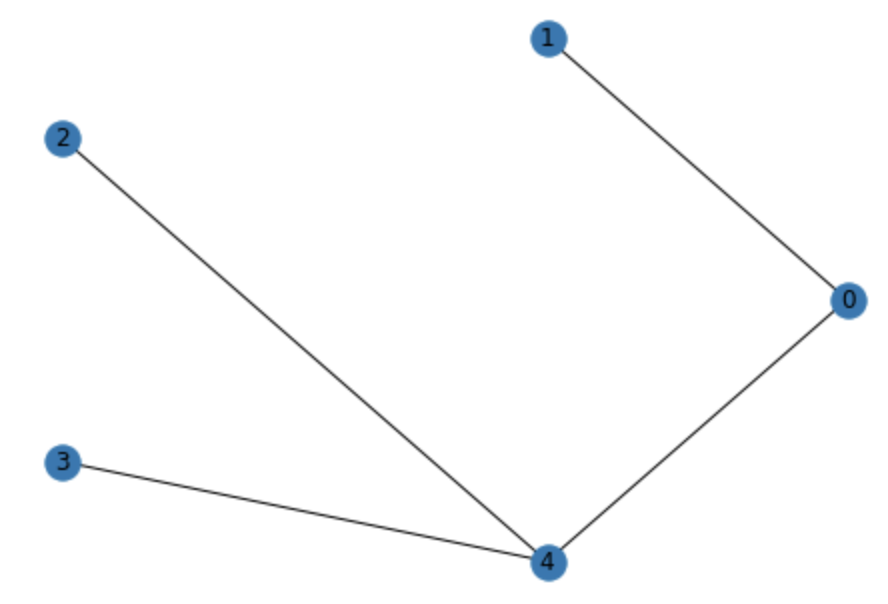} 
		\end{minipage}
	}	     
	\centering
	\subfigure[Topology sparsity $p_c=0.6$.]{
		\begin{minipage}[t]{0.2\linewidth}
	\centering
	\includegraphics[width=1\textwidth]{Figure/p_c_6e1.png}
\end{minipage}
\centering
	}
\subfigure[Topology sparsity $p_c=0.9$.]{
	\begin{minipage}[t]{0.2\linewidth}
		\centering
		\includegraphics[width=1\textwidth]{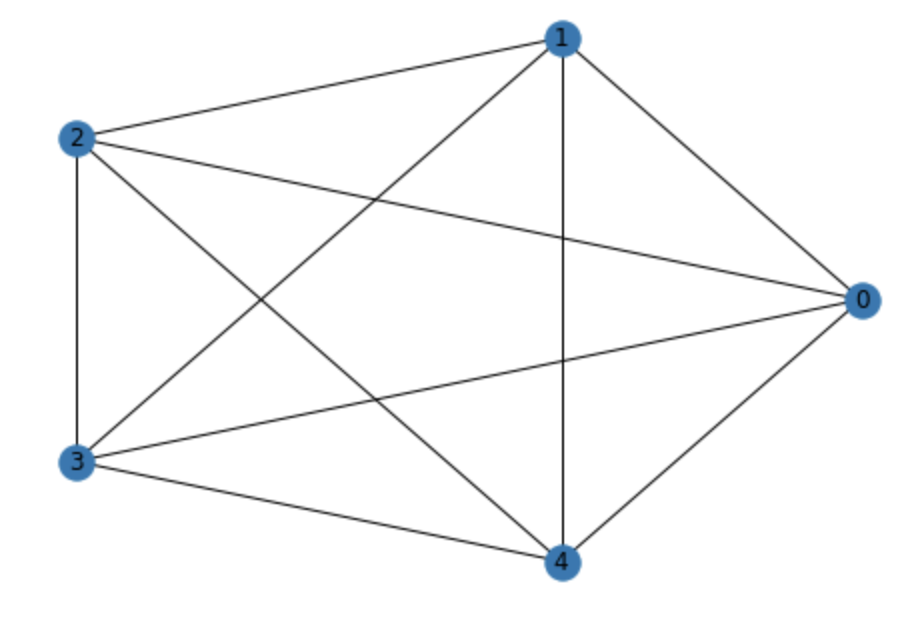}
	\end{minipage}
	\centering
}	
\subfigure[Topology sparsity $p_c=0.5$ with 20 nodes.]{
		\begin{minipage}[t]{0.3\linewidth}
			\centering
			\includegraphics[width=.6\textwidth]{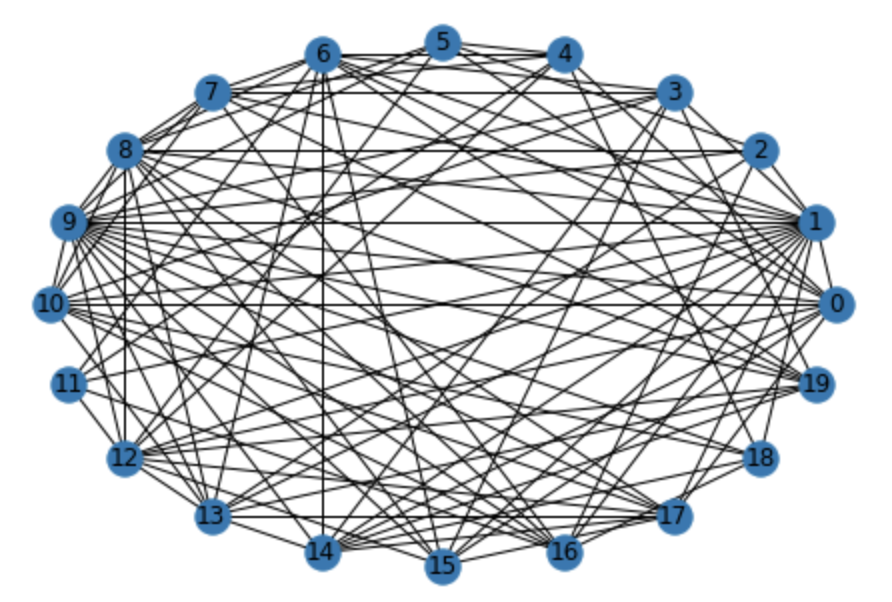}
		\end{minipage}
		\centering
		\label{img7}
	}
\caption{Topology.}
\label{img5}
\end{figure}

We use a 5-node multi-agent system and experiment on three different topologies. 
The generated topology with different sparsity are shown in Fig.~\ref{img5}. 
The datasize for each agent is $n=100$ and we set the constant learning rate $\gamma=0.1$, $\eta=0.1$ and mini-batch size $q=\lceil\sqrt{n}\rceil$.
We observe that the convergence metric $\mathfrak{M}$ is insensitive to the network topology. 
The subplot in Fig.~\ref{img4a} and Fig.~\ref{img4b} show that $\mathfrak{M}$  slightly increase as $p_c$ decreases.

\subsection{Node setting}

We test the following experiments on different multi-agent systems. 
The generated topology with a 20-node system are shown in Figs.~\ref{img7}. 
The constant learning rate $\gamma=0.1$, $\eta=0.1$ and mini-batch size $q=\lceil\sqrt{n}\rceil$.
We compare our proposed algorithm  \algns/\algplus with two baseline algorithms Prox-GT-SGDA and Prox-DSGDA in terms of  the convergence metric in (\ref{Eq: metric1}).
We observe similar results as shown in Section~\ref*{Section: experiment}. 
Thus, we can conclude that our proposed algorithms  \algns/\algplus enjoy low sample and communication complexities in general.

\begin{figure}[htbp]
	\centering
		\subfigure[Topology sparsity comparison on Regression]{
		\begin{minipage}[t]{0.23\linewidth}
			\centering
			\includegraphics[width=1\textwidth]{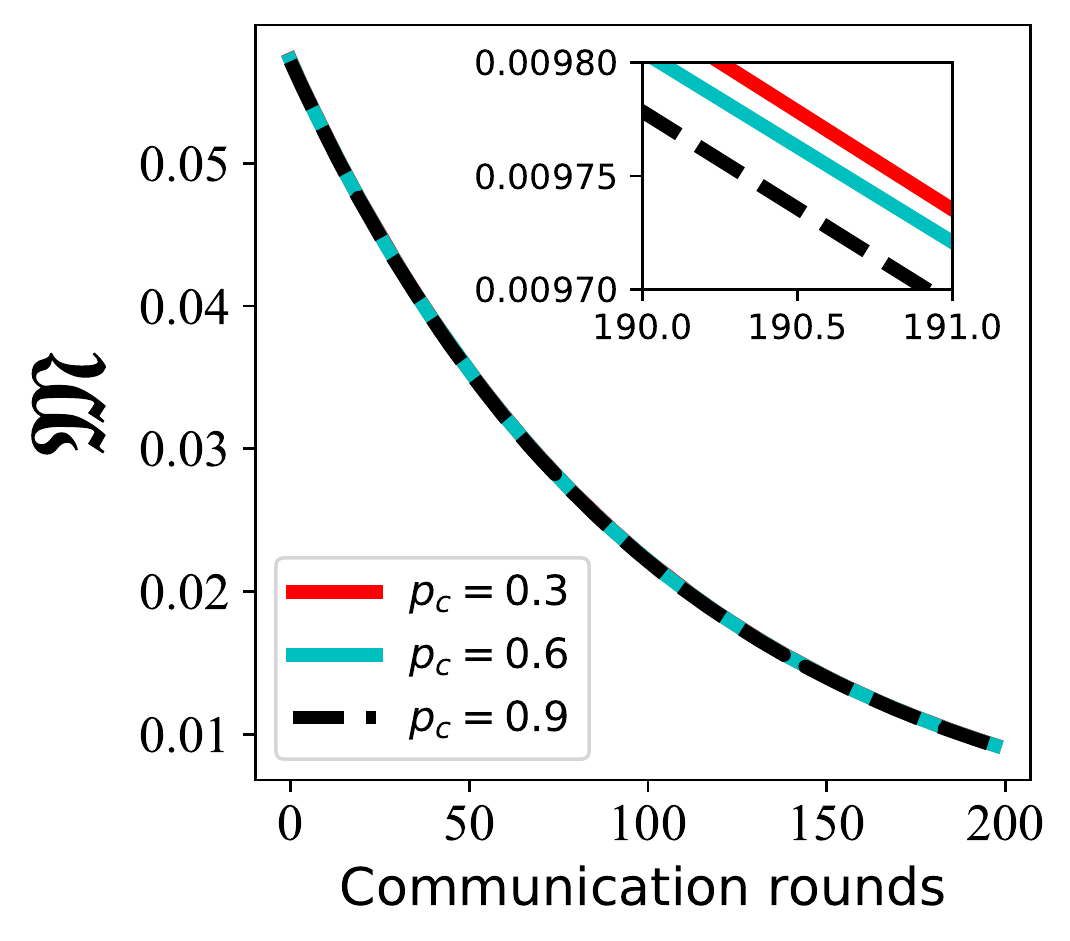} 
		\end{minipage}
		\label{img4a}
	}	     
	\centering
	\subfigure[Topology sparsity comparison on AUC maximization]{
		\begin{minipage}[t]{0.25\linewidth}
			\centering
			\includegraphics[width=1\textwidth]{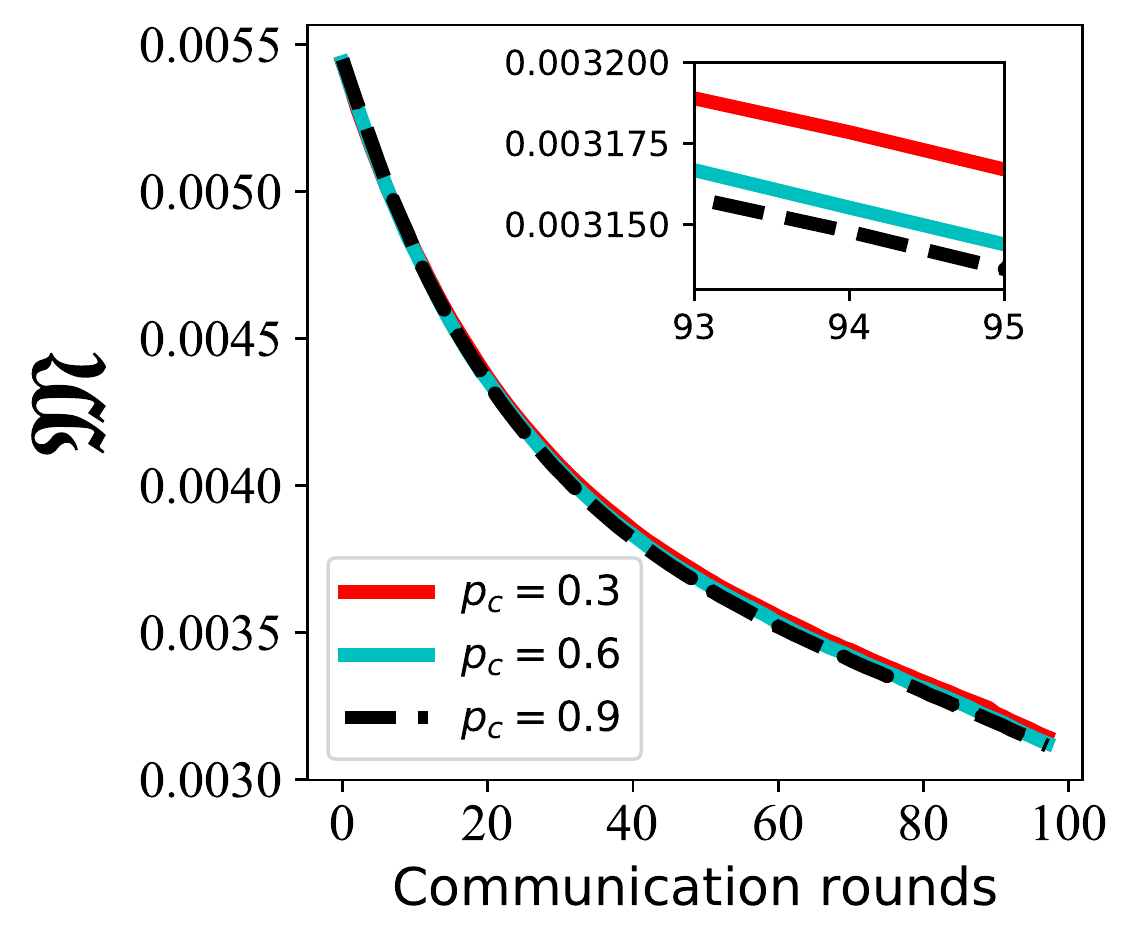}
		\end{minipage}
		\centering
		\label{img4b}
	}	 
	\centering
	\subfigure[Algorithms Comparision with 20nodes.]{
		\begin{minipage}[t]{0.22\linewidth}
			\centering
			\includegraphics[width=1\textwidth]{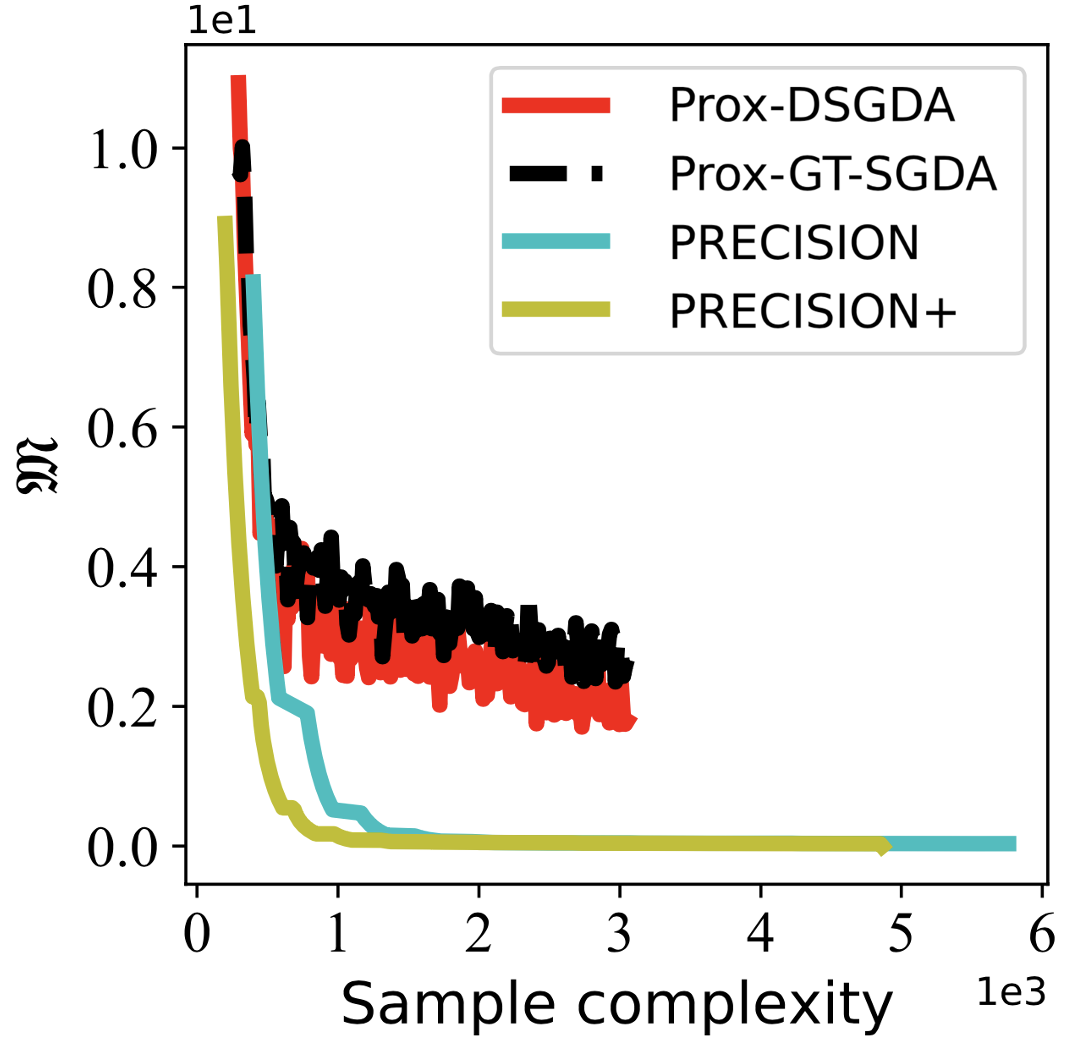}
		\end{minipage}
		\begin{minipage}[t]{0.212\linewidth}
			\centering
			\includegraphics[width=1\textwidth]{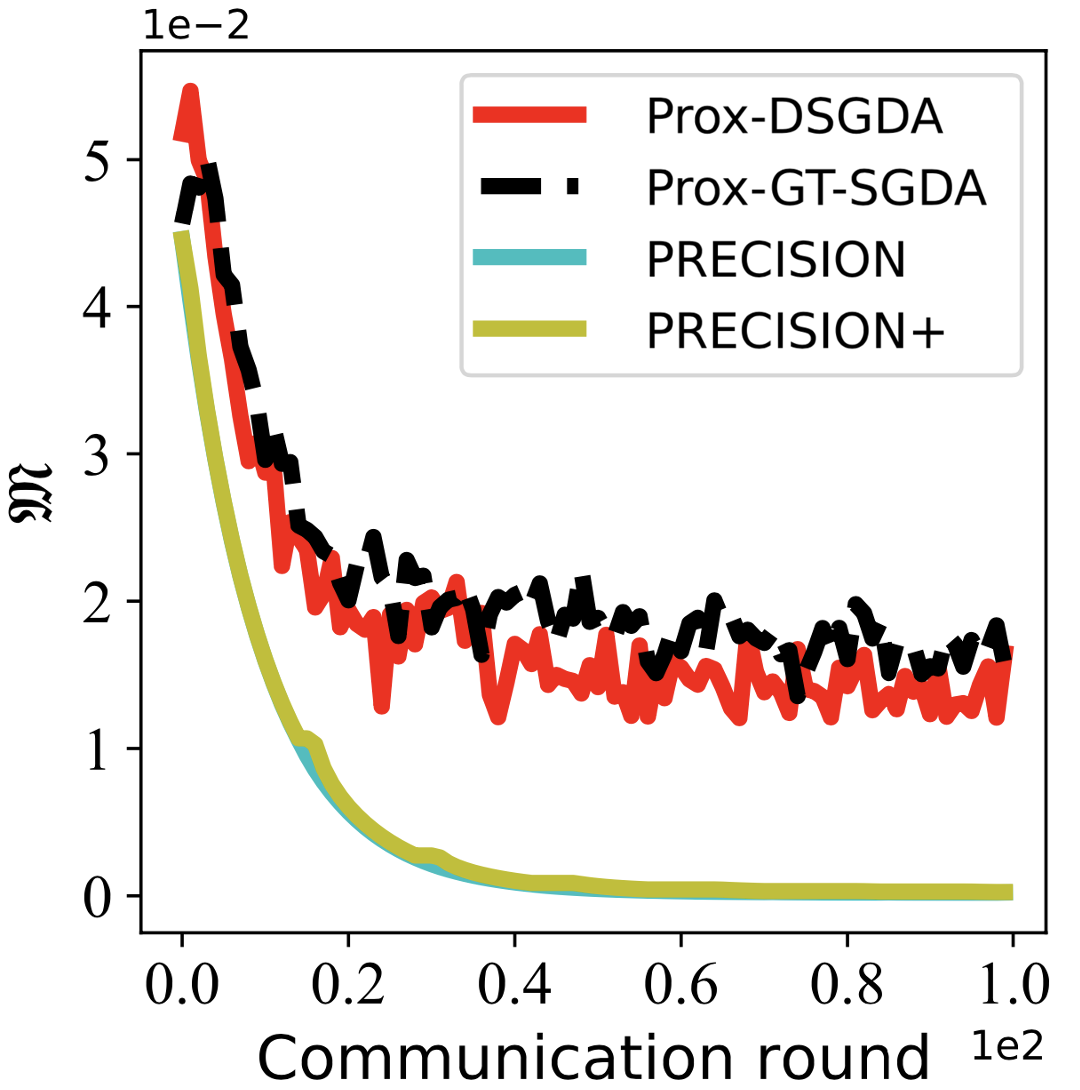}
		\end{minipage}
		\centering
		\label{imgauc_20nodes}
	}	\caption{ Algorithms Comparision.}
	\label{img_auc_nodes}
\end{figure}

\section{Proof of Lemmas}

Before diving in our theoretical analysis, we first define the following notations:
\begin{itemize}
	\item $\xb_t = \frac{1}{m}\sum_{i=1}^{m} \x_{i,t}$ and $\x_t = [\x_{1,t}^{\top},\cdots, \x_{m,t}^{\top}]^{\top}$ for any vector $\x$;
	\item $\nabla_{{{\x}}} F_t = [\nabla_{{{\x}}} F({{\x}}_{1,t}, {{\y}}_{1,t})^\top,\cdots,\nabla_{{{\x}}} F({{\x}}_{m,t}, {{\y}}_{m,t})^\top]^{\top} $;
	\item $\nabla_{{{\y}}} F_t = [\nabla_{{{\y}}} F({{\x}}_{1,t}, {{\y}}_{1,t})^\top,\cdots,\nabla_{{{\y}}} F({{\x}}_{m,t}, {{\y}}_{m,t})^\top]^{\top} $;
	\item $\Ec(\x_t) = \frac{1}{m}\sum_{i=1}^{m} \|\x_{i,t} - \xb_t\|^2$ for any vector $\x$.
\end{itemize}

\subsection{ Proof of Lemma \ref{Desceding J}:}
Our first step is to show the descent property of \alg algorithm on the function $Q(\cdot)$ as shown in Lemma \ref{Desceding J}.


\begin{proof}
Let $J({{\x}}_t) = \max_{{{\y}}} F({{\x}}_t,{{\y}})$. According to the algorithm update, we have:
\begin{align}
J({\bar{{\x}}}_{t+1}) - J({\bar{{\x}}}_{t}) 
&\stackrel{(a)}{\le}
\langle \nabla J({\bar{{\x}}}_t), {\bar{{\x}}}_{t+1} - {\bar{{\x}}}_{t} \rangle + \frac{L_J}{2}\|{\bar{{\x}}}_{t+1} - {\bar{{\x}}}_{t}\|^2 \notag\\
&
\leq\left\langle\nabla  J\left(\bar{{\x}}_{t}\right), {\nu} \left(\frac{1}{m} \sum_{i \in M} \tilde{{\x}}_{i,t}-\bar{{\x}}_{t}\right)\right\rangle+\frac{{\nu}^{2} L_J}{2}\left\|\frac{1}{m} \sum_{i} \tilde{{\x}}_{i,t}-\bar{{\x}}_{t}\right\|^{2} \notag\\
&
\leq {\nu} \frac{1}{m} \sum_{i}\left\langle\nabla  J\left(\bar{{\x}}_{t}\right), \tilde{{\x}}_{i,t}-\bar{{\x}}_{t}\right\rangle+\frac{{\nu}^{2} L_J}{2} \frac{1}{m}\left\|\tilde{{\x}}_{t}-1 \bar{{\x}}_{t}\right\|^{2} \notag\\
&
\leq {\nu} \frac{1}{m} \sum_{i}\left\langle\nabla  J\left(\bar{{\x}}_{t}\right)-\p_{i,t}-\tau\left(\bar{{\x}}_{t}-{\x}_{i,t}\right), \tilde{{\x}}_{i,t}-\bar{{\x}}_{t}\right\rangle+\frac{{\nu}^{2} L_J}{2} \frac{1}{m}\left\|\tilde{{\x}}_{t}-1 \bar{{\x}}_{t}\right\|^{2} \notag\\
&
-\frac{{\nu} \tau}{m}\left\|\tilde{{\x}}_{t}-1 \bar{{\x}}_{t}\right\|^{2}-h\left(\bar{{\x}}_{t+1}\right)+h\left(\bar{{\x}}_{t}\right) \notag\\
&
\leq \frac{{\nu}}{m} \sum_{i}\left\langle\nabla  J\left(\bar{{\x}}_{t}\right)-\p_{i,t}, \tilde{{\x}}_{i,t}-\bar{{\x}}_{t}\right\rangle+\frac{{\nu} \tau}{m} \sum_{i}\left\langle {\x}_{i,t}-\bar{{\x}}_{t}, \tilde{{\x}}_{i,t}-\bar{{\x}}_{t}\right\rangle \notag\\
&
+\frac{{\nu}^{2} L_J}{2 m}\left\|\tilde{{\x}}_{t}-1 \bar{{\x}}_{t}\right\|^{2}-\frac{{\nu} \tau}{m}\left\|\tilde{{\x}}_{t}-1 \bar{{\x}}_{t}\right\|^{2}-h\left(\bar{{\x}}_{t+1}\right)+h\left(\bar{{\x}}_{t}\right),
\end{align}
where (a) is because of Lipschitz continuous gradients of $J$.

\end{proof}

\begin{align}
Q\left(\bar{{\x}}_{t+1}\right) & \leq Q\left(\bar{{\x}}_{t}\right)+\frac{\nu}{m} \sum_{i}\left\langle\nabla J\left(\bar{{\x}}_{t}\right)-\p_{i,t}, \tilde{{\x}}_{i,t}-\bar{{\x}}_{t}\right\rangle+\frac{\nu \tau}{m} \sum_{i}\left\langle {\x}_{i,t}-\bar{{\x}}_{t}, \tilde{{\x}}_{i,t}-\bar{{\x}}_{t}\right\rangle \notag \\
&+\frac{\nu^{2} L_J}{2 m}\left\|\tilde{{\x}}_{t}-1 \bar{{\x}}_{t}\right\|^{2}-\frac{\nu \tau}{m}\left\|\tilde{{\x}}_{t}-1 \bar{{\x}}_{t}\right\|^{2} \notag \\
& \leq Q\left(\bar{{\x}}_{t}\right)+\frac{\nu}{m} \sum_{i}\left\langle\nabla J\left(\bar{{\x}}_{t}\right)-\nabla_{{\x}} F (\bar{\bm{{\x}}_t},\bar{\bm{{\y}}_t}), \tilde{{\x}}_{i,t}-\bar{{\x}}_{t}\right\rangle \notag \\
&+\frac{\nu}{m} \sum_{i}\left\langle\nabla_{{\x}} F (\bar{\bm{{\x}}_t},\bar{\bm{{\y}}_t})-\bar{\p}_{t}, \tilde{{\x}}_{i,t}-\bar{{\x}}_{t}\right\rangle\notag  \\
&+\frac{\nu \tau}{m} \sum_{i}\left\langle {\x}_{i,t}-\bar{{\x}}_{t}, \tilde{{\x}}_{i,t}-\bar{{\x}}_{t}\right\rangle+\frac{\nu^{2} L_J}{2 m}\left\|\tilde{{\x}}_{t}-1 \bar{{\x}}_{t}\right\|^{2}-\frac{\nu \tau}{m}\left\|\tilde{{\x}}_{t}-1 \bar{{\x}}_{t}\right\|^{2}
\notag\\&
\leq Q\left(\bar{{\x}}_{t}\right)+\frac{\nu}{m} \sum_{i} \frac{1}{2 \beta}\left\|\nabla J\left(\bar{{\x}}_{t}\right)-\nabla_{{\x}} F (\bar{\bm{{\x}}_t},\bar{\bm{{\y}}_t})\right\|^{2}+\frac{\nu}{m} \sum_{i} \frac{\beta}{2}\left\|\tilde{{\x}}_{i,t}-\bar{{\x}}_{t}\right\|^{2} \notag\\&
+{\nu}\frac{1}{2 \beta}\left\|\nabla_{{\x}} F (\bar{\bm{{\x}}_t},\bar{\bm{{\y}}_t})-\bar{\p}_{t}\right\|^{2}+\frac{\nu}{m} \sum_{i} \frac{\beta}{2}\left\|\tilde{{\x}}_{i,t}-\bar{{\x}}_{t}\right\|^{2} \notag\\&
+\frac{\nu \tau}{m} \frac{1}{2 \beta} \sum_{i}\left\|\bar{{\x}}_{t}-{\x}_{i,t}\right\|^{2}+\frac{\nu \tau}{m} \sum_{i} \frac{\beta}{2}\left\|\tilde{{\x}}_{i,t}-\bar{{\x}}_{t}\right\|^{2}-\left(\frac{\nu \tau}{m}-\frac{\nu^{2} L_J}{2 m}\right)\left\|\tilde{{\x}}_{t}-1 \bar{{\x}}_{t}\right\|^{2} \notag\\&
\leq Q\left(\bar{{\x}}_{t}\right)+\frac{\nu L_F^2}{2 \beta m} \sum_{i}\left\| {\y}_t^*- \bar{{\y}_t} \right\|^{2}+\frac{\nu}{2 \beta }  \left\|\nabla_{{\x}} F (\bar{\bm{{\x}}_t},\bar{\bm{{\y}}_t})-\bar{\p}_{t}\right\|^{2}\notag\\&
+\frac{\nu \tau}{2 \beta m}\left\|{\x}_t-1 \bar{{\x}}_{t}\right\|^{2}+\left(\frac{\beta \nu}{m}+\frac{\nu \tau \beta}{2 m}\right)\left\|\tilde{{\x}}_{t}-1 \bar{{\x}}_{t}\right\|^{2}-\left(\frac{\nu \tau}{m}-\frac{\nu^{2} L_J}{2 m}\right)\left\|\tilde{{\x}}_{t}-1 \bar{{\x}}_{t}\right\|^{2} \notag\\&
\leq Q\left(\bar{{\x}}_{t}\right)+\frac{\nu L_F^2}{2 \beta } \left\| {\y}_t^*- \bar{{\y}_t} \right\|^{2}+\frac{\nu}{2 \beta } \sum_{i} \left\|\nabla_{{\x}} F (\bar{\bm{{\x}}_t},\bar{\bm{{\y}}_t})-\bar{\p}_{t}\right\|^{2}\notag\\&
+\frac{\nu \tau}{2 \beta m} \left\|{\x}_t-1 \bar{{\x}}_{t}\right\|^{2}
-\left(\frac{\nu \tau}{m}-\frac{\nu^{2} L_J}{2 m}-\frac{\nu \beta}{m}-\frac{\nu \tau \beta}{2 m}\right)\left\|\tilde{{\x}}_{t}-1 \bar{{\x}}_{t}\right\|^{2},
 \end{align}

\subsection{Proof of Lemma \ref{Error Bound on omega}}

Note that in the RHS of Lemma \ref{Desceding J}, there is an error term $\|{{\y}}_t^* - {\bar{{\y}}}_t\|^2$. Here, Lemma \ref{Error Bound on omega} states the contraction property of this error term.
\begin{proof}
Recall that ${{\y}}^*_t = {{\y}}^*({\bar{{\x}}}_t) = \arg \max_{{{\y}}} F({\bar{{\x}}}_t,{{\y}})$. We have:
\begin{align}\label{22}
	\left\|\bar{{\y}}_{t+1}-{{\y}}_{t}^{*}\right\|^{2} &=\left\|\bar{{\y}}_{t}+{\eta}\left(\frac{1}{m}\sum_{i\in M} \widetilde{{\y}}_{i}-\bar{{\y}}_{t}\right)-{{\y}}_{t}^{*}\right\|^{2} \notag\\
	&=\left\|\bar{{\y}}_{t}-{{\y}}_{t}^{*}\right\|^{2}+2 {\eta}\left\langle\bar{{\y}}_{t}-{{\y}}_{t}^{*}, \frac{1}{m}\sum_{i\in M} \widetilde{{\y}}_{i}-\bar{{\y}}_{t}\right\rangle+{\eta}^{2}\left\|\frac{1}{m}\sum_{i\in M} \widetilde{{\y}}_{i}-\bar{{\y}}_{t}\right\|^{2}\notag\\&
	\leq\left\|\bar{{\y}}_{t}-{{\y}}_{t}^{*}\right\|^{2}+2 {\eta}\left\langle\bar{{\y}}_{t}-{{\y}}_{t}^{*}, \frac{1}{m}\sum_{i\in M} \widetilde{{\y}}_{i}-\bar{{\y}}_{t}\right\rangle+{\eta}^{2}\left\|\tilde{{\y}}_{t}-1 \bar{{\y}}_{t}\right\|^{2}.
\end{align}
From the projection operation, we have 
\begin{align}  
\tilde{{\y}}_{i}({{\y}}_{i,t}) \! =& {arg\,min}_{{{\y}}_i \in \mathcal{Y}}  \big\| {\y}_i- \big({\y}_{i,t} + {\alpha} \d_{i,t}\big)\big\|^2.
\end{align}

Due to the optimality condition for the constrained convex optimization, we have 
\begin{align}  
\left\langle\widetilde{{\y}}_{i}-\left({{\y}}_{i,t}+{\alpha}\d_{i,t}\right),{\y}-\widetilde{{\y}}_{i}\right\rangle \geq 0, \quad \forall{\y} \in \mathcal{Y}, i\in M.
\end{align}
Thus, we have
\begin{align} 
\langle -\d_{i,t}+{\alpha}^{-1}\left( \widetilde{{\y}}_{i}-{{\y}}_{i,t}\right),{\y}- \widetilde{{\y}}_{i} \rangle \geq 0, \forall{\y} \in \mathcal{Y}, i\in M. 
\end{align}

Moreover, we have
\begin{align}
F\left(\bar{{\x}}_t,{\y}\right)-F\left(\bar{{\x}}_t, \bar{{\y}}_{t}\right)-\left\langle\nabla_{{\y}} F\left(\bar{{\x}}_t, \bar{{\y}}_{t}\right),{\y}-\bar{{\y}}_{t}\right\rangle \leq-\frac{\mu}{2}\left\|{\y}-\bar{{\y}}_{t}\right\|^{2}
\end{align}
Rearranging the terms in the above inequality, we have
\begin{align}
F\left(\bar{{\x}}_t,{\y}\right)+\frac{\mu}{2}\left\|{\y}-\bar{{\y}}_{t}\right\|^{2} \leq & F\left(\bar{{\x}}_t, \bar{{\y}}_{t}\right)+\left\langle\nabla_{{\y}} F\left(\bar{{\x}}_t, \bar{{\y}}_{t}\right),{\y}-\bar{{\y}}_{t}\right\rangle
\notag\\
 \leq & F\left(\bar{{\x}}_t, \bar{{\y}}_{t}\right)+\frac{1}{{\alpha}}\left\langle\tilde{{\y}}_{t}-1 \bar{{\y}}_{t},{\y}-\frac{1}{m}\sum_{i\in M} \widetilde{{\y}}_{i}\right\rangle+\left\langle\nabla_{{\y}} F\left(\bar{{\x}}_t, \bar{{\y}}_{t}\right)-{\bd_t},{\y}-\frac{1}{m}\sum_{i\in M} \widetilde{{\y}}_{i}\right\rangle \notag \\
&+\left\langle\nabla_{{\y}} F\left(\bar{{\x}}_t, \bar{{\y}}_{t}\right), \tilde{{\y}}_{t}-1 \bar{{\y}}_{t}\right\rangle-\frac{1}{2 {\alpha}}\left\|\tilde{{\y}}_{t}-1 \bar{{\y}}_{t}\right\|^{2}+\frac{1}{2 {\alpha}}\left\|\tilde{{\y}}_{t}-1 \bar{{\y}}_{t}\right\|^{2}.
\end{align}

Since $F({\x},{\y})$ is gradient Lipschitz and due to the condition in this lemma
$$
{\alpha} \leq \frac{1}{2 L_{F}} \leq \frac{1}{L_{F}},
$$
we have

\begin{align}
-\frac{1}{2 {\alpha}}\left\|\tilde{{\y}}_{t}-1 \bar{{\y}}_{t}\right\|^{2} & \leq-\frac{L_{F}}{2}\left\|\tilde{{\y}}_{t}-1 \bar{{\y}}_{t}\right\|^{2}  \notag\\
& \leq F\left({\x}_t, \tilde{{\y}}_{t}\right)-F\left(\bar{{\x}}_t, \bar{{\y}}_{t}\right)-\left\langle\nabla_{{\y}} F\left(\bar{{\x}}_t, \bar{{\y}}_{t}\right), \tilde{{\y}}_{t}-1 \bar{{\y}}_{t}\right\rangle.
\end{align}

\begin{align}
F\left(\bar{{\x}}_t,{\y}\right)+\frac{\mu}{2}\left\|{\y}-\bar{{\y}}_{t}\right\|^{2} \leq & F\left({\x}_t, \tilde{{\y}}_{t}\right)+\frac{1}{{\alpha}}\left\langle\tilde{{\y}}_{t}-1 \bar{{\y}}_{t},{\y}-\tilde{{\y}}_{t}\right\rangle \notag \\
&+\left\langle\nabla_{{\y}} F\left(\bar{{\x}}_t, \bar{{\y}}_{t}\right)-{\bd_t},{\y}-\tilde{{\y}}_{t}\right\rangle+\frac{1}{2 {\alpha}}\left\|\tilde{{\y}}_{t}-1 \bar{{\y}}_{t}\right\|^{2}.
\end{align}
Note that in the last inequality, we have
\begin{align}
&\frac{1}{{\alpha}}\left\langle\tilde{{\y}}_{t}-1 \bar{{\y}}_{t},{\y}-\tilde{{\y}}_{t}\right\rangle+\frac{1}{2 {\alpha}}\left\|\tilde{{\y}}_{t}-1 \bar{{\y}}_{t}\right\|^{2} \notag \\
&=\frac{1}{{\alpha}}\left\langle\tilde{{\y}}_{t}-1 \bar{{\y}}_{t}, \bar{{\y}}_{t}-\tilde{{\y}}_{t}\right\rangle+\frac{1}{{\alpha}}\left\langle\tilde{{\y}}_{t}-1 \bar{{\y}}_{t},{\y}-\bar{{\y}}_{t}\right\rangle+\frac{1}{2 {\alpha}}\left\|\tilde{{\y}}_{t}-1 \bar{{\y}}_{t}\right\|^{2}  \notag\\
\quad&=\frac{1}{{\alpha}}\left\langle\tilde{{\y}}_{t}-1 \bar{{\y}}_{t},{\y}-\bar{{\y}}_{t}\right\rangle-\frac{1}{2 {\alpha}}\left\|\tilde{{\y}}_{t}-1 \bar{{\y}}_{t}\right\|^{2},
\end{align}
which thus leads to
\begin{align}
F\left(\bar{{\x}}_t,{\y}\right)+\frac{\mu}{2}\left\|{\y}-\bar{{\y}}_{t}\right\|^{2} &\leq F\left(\bar{{\x}}_t, \tilde{{\y}}_{t}\right)+\frac{1}{{\alpha}}\left\langle\tilde{{\y}}_{t}-1 \bar{{\y}}_{t},{\y}-\bar{{\y}}_{t}\right\rangle  \notag\\&
+\left\langle\nabla_{{\y}} F\left(\bar{{\x}}_t, \bar{{\y}}_{t}\right)-{\bd_t},{\y}-\tilde{{\y}}_{t}\right\rangle-\frac{1}{2 {\alpha}}\left\|\tilde{{\y}}_{t}-1 \bar{{\y}}_{t}\right\|^{2}.
\end{align}
We let ${\y}={{\y}}_{t}^{*}$ and obtain
\begin{align}
F\left(\bar{{\x}}_t,{\y}_t^*\right)+\frac{\mu}{2}\left\|{{\y}}_{t}^{*}-\bar{{\y}}_{t}\right\|^{2} \leq & F\left(\bar{{\x}}_t, \tilde{{\y}}_{t}\right)+\frac{1}{{\alpha}}\left\langle\tilde{{\y}}_{t}-1 \bar{{\y}}_{t}, {{\y}}_{t}^{*}-\bar{{\y}}_{t}\right\rangle \notag \\
&+\left\langle\nabla_{{\y}} F\left(\bar{{\x}}_t, \bar{{\y}}_{t}\right)-{\bd_t}, {{\y}}_{t}^{*}-\tilde{{\y}}_{t}\right\rangle-\frac{1}{2 {\alpha}}\left\|\tilde{{\y}}_{t}-1 \bar{{\y}}_{t}\right\|^{2},
\end{align}
which further yields

\begin{align}
\frac{\mu}{2} \| {{\y}}_{t}^{*} &-\bar{{\y}}_{t}\left\|^{2}+\frac{1}{2 {\alpha}}\right\| \tilde{{\y}}_{t}-1 \bar{{\y}}_{t} \|^{2} \notag \\
& \leq \frac{1}{{\alpha}}\left\langle\tilde{{\y}}_{t}-1 \bar{{\y}}_{t}, {{\y}}_{t}^{*}-\bar{{\y}}_{t}\right\rangle+\left\langle\nabla_{{\y}} F\left(\bar{{\x}}_t, \bar{{\y}}_{t}\right)-{\bd_t}, {{\y}}_{t}^{*}-\tilde{{\y}}_{t}\right\rangle.
\end{align}

$F\left(\bar{{\x}}_t, {{\y}}_{t}^{*}\right) \geq F\left(\bar{{\x}}_t, \tilde{{\y}}_{t}\right)$ is due to strong concavity and ${{\y}}_{t}^{*}=\operatorname{argmax}_{{\y} \in \mathcal{Y}} F\left(\bar{{\x}}_t,{\y}\right) .$ In addition,
for the last term of the above inequality, we further bound it as follows
\begin{align}
&\left\langle\nabla_{{\y}} F\left(\bar{{\x}}_t, \bar{{\y}}_{t}\right)-{\bd_t}, {{\y}}_{t}^{*}-\tilde{{\y}}_{t}\right\rangle  \notag\\
&\leq\frac{2}{\mu}\left\|\nabla_{{\y}} F\left(\bar{{\x}}_t, \bar{{\y}}_{t}\right)-{\bd_t}\right\|^{2}+\frac{\mu}{4}\left\|{{\y}}_{t}^{*}-\bar{{\y}}_{t}\right\|^{2}+\frac{\mu}{4}\left\|\tilde{{\y}}_{t}-1 \bar{{\y}}_{t}\right\|^{2}.
\end{align}

Then, we have
\begin{align} 
&2 {\eta}\left\langle\tilde{{\y}}_{t}-1 \bar{{\y}}_{t}, \bar{{\y}}_{t}-{{\y}}_{t}^{*}\right\rangle& \notag\\
&\quad \leq-\frac{{\eta}\alpha \mu}{2}\left\|\bar{{\y}}_{t}-{{\y}}_{t}^{*}\right\|^{2}-\frac{2 {\alpha}-{\eta}\alpha \mu}{2}\left\|\tilde{{\y}}_{t}-1 \bar{{\y}}_{t}\right\|^{2}+\frac{4 {\eta}\alpha}{\mu}\left\|\nabla_{{\y}} F\left(\bar{{\x}}_t, \bar{{\y}}_{t}\right)-{\bd_t}\right\|^{2},
 \end{align}

which gives the upper bound of the second term on the right-hand side of \ref{22}. Then, we have
\begin{align}
\left\|\bar{{\y}}_{t+1}-{{\y}}_{t}^{*}\right\|^{2} \leq \frac{2-{\eta}\alpha \mu}{2}\left\|\bar{{\y}}_{t}-{{\y}}_{t}^{*}\right\|^{2}-\frac{2 {\eta}-{\eta}\alpha \mu-2 {\eta}^{2}}{2}\left\|\tilde{{\y}}_{t}-1 \bar{{\y}}_{t}\right\|^{2}+\frac{4 {\eta}\alpha}{\mu}\left\|\nabla_{{\y}} F\left(\bar{{\x}}_t, \bar{{\y}}_{t}\right)-{\bd_t}\right\|^{2}..
 \end{align}
Thus, according to the condition of this lemma that ${\eta} \leq 1 / 8$ and ${\alpha} \leq\left(4 L_{F}\right)^{-1} \leq(4 \mu)^{-1}$ by the fact $L_{F} \geq \mu>0$, we have
\begin{align}
-\frac{2 {\eta}-{\eta}\alpha \mu-2 {\eta}^2}{2} \leq-\frac{3 {\eta}}{4},
 \end{align}
which eventually leads to
\begin{align}
\left\|\bar{{\y}}_{t+1}-{{\y}}_{t}^{*}\right\|^{2} \leq\left(1-\frac{{\eta}\alpha \mu}{2}\right)\left\|\bar{{\y}}_{t}-{{\y}}_{t}^{*}\right\|^{2}-\frac{3 {\eta}}{4}\left\|\tilde{{\y}}_{t}-1 \bar{{\y}}_{t}\right\|^{2}+\frac{4 {\eta}\alpha}{\mu}\left\|\nabla_{{\y}} F\left(\bar{{\x}}_t, \bar{{\y}}_{t}\right)-{\bd_t}\right\|^{2}.
 \end{align}

Denoting ${\y}^{*}\left({\x}_t\right)$ and ${\y}^{*}\left({\x}_{t+1}\right)$ as ${{\y}}_{t}^{*}$ and ${{\y}}_{t+1}^{*}$ for abbreviation, we start the proof by decomposing the term $\left\|\bar{{\y}}_{t+1}-{{\y}}_{t+1}^{*}\right\|^{2}$ as follows
 \begin{align}
 \left\|\bar{{\y}}_{t+1}-{{\y}}_{t+1}^{*}\right\|^{2} &=\left\|\bar{{\y}}_{t+1}-{{\y}}_{t}^{*}+{{\y}}_{t}^{*}-{{\y}}_{t+1}^{*}\right\|^{2} \notag \\
 & \leq\left(1+\frac{\mu {\eta\alpha}}{4}\right)\left\|\bar{{\y}}_{t+1}-{{\y}}_{t}^{*}\right\|^{2}+\left(1+\frac{4}{\mu {\eta\alpha}}\right)\left\|{{\y}}_{t}^{*}-{{\y}}_{t+1}^{*}\right\|^{2} \notag \\
 & \leq\left(1+\frac{\mu {\eta\alpha}}{4}\right)\left\|\bar{{\y}}_{t+1}-{{\y}}_{t}^{*}\right\|^{2}+\left(1+\frac{4}{\mu {\eta\alpha}}\right) L_{{\y}}^{2}\left\|\bar{{\x}}_{t+1}-\bar{{\x}}_t\right\|^{2}.
 \end{align}
 
Next, plugging the updating rule $\bar{{\x}}_{t+1}=\bar{{\x}}_t+\nu \left(\frac{1}{m} \sum_i \tilde{{\x}}_{i,t}-\bar{{\x}}_t\right)$ into the above inequality, we obtain
  \begin{align}
 \left\|\bar{{\y}}_{t+1}-{{\y}}_{t+1}^{*}\right\|^{2} \leq\left(1+\frac{\mu {\eta\alpha}}{4}\right)\left\|\bar{{\y}}_{t+1}-{{\y}}_{t}^{*}\right\|^{2}+\left(1+\frac{4}{\mu {\eta\alpha}}\right) L_{{\y}}^{2} \nu^2 \left\|\frac{1}{m} \sum_i \tilde{{\x}}_{i,t}-\bar{{\x}}_t\right\|^{2}.
  \end{align}
 Furthermore, we have
 \begin{align}
& \left\|\bar{{\y}}_{t+1}-{{\y}}_{t}^{*}\right\|^{2} \notag\\&
 \quad \leq\left(1-\frac{{\eta\alpha} \mu}{2}\right)\left\|\bar{{\y}}_{t}-{{\y}}_{t}^{*}\right\|^{2}-\frac{3 {\eta}}{4}\left\|\tilde{{\y}}_{t}-1 \bar{{\y}}_{t}\right\|^{2}+\frac{4 {\eta\alpha}}{\mu}\left\|\nabla_{{\y}} F\left(\bar{{\x}}_t, \bar{{\y}}_{t}\right)-{\bd_t}\right\|^{2}.
 \end{align}
 According to the conditions $0<{\alpha} \leq\left(4 L_{F}\right)^{-1}, 0<{\eta} \leq 1 / 8$ and due to $L_{F} \geq \mu>0$, we have
  \begin{align}
 {\alpha} \leq \frac{1}{4 L_{F}} \leq \frac{1}{4 \mu}, \quad \text { and } \quad {\eta}\alpha \leq \frac{1}{32 \mu},
  \end{align}
 
which yield
\begin{align}
\left(1+\frac{\mu {\eta\alpha}}{4}\right)\left(1-\frac{\mu {\eta\alpha}}{2}\right)=1-\frac{\mu {\eta\alpha}}{2}+\frac{\mu {\eta\alpha}}{4}-\frac{\mu^{2} {\eta}^{2} {\alpha}^{2}}{4} \leq 1-\frac{\mu {\eta\alpha}}{4} \\
-\left(1+\frac{\mu {\eta\alpha}}{4}\right) \frac{3 {\eta}}{4} \leq-\frac{3 {\eta}}{4}, \quad \frac{4 {\eta\alpha}}{\mu}\left(1+\frac{\mu {\eta\alpha}}{4}\right)=\frac{4 {\eta\alpha}}{\mu}+{\eta}^{2} {\alpha}^{2}<\frac{75 {\eta}\alpha}{16 \mu} \\
\text { and }\left(1+\frac{4}{\mu {\eta\alpha}}\right) L_{{\y}}^{2} \nu^{2} \leq \frac{129}{32} \frac{L_{{\y}}^{2} \nu }{\mu {\eta\alpha}}<\frac{17L_{{\y}}^{2} \nu^2 }{2\mu {\eta\alpha}}
\end{align}
We eventually obtain

\begin{align}
\left\|\bar{{\y}}_{t+1}-{{\y}}_{t+1}^{*}\right\|^{2} \leq &\left(1-\frac{\mu {\eta\alpha}}{4}\right)\left\|\bar{{\y}}_{t}-{{\y}}_{t}^{*}\right\|^{2}-\frac{3 {\eta}}{4}\left\|\tilde{{\y}}_{t}-1 \bar{{\y}}_{t}\right\|^{2}\notag \\
&+\frac{75 {\eta\alpha}}{16 \mu}\left\|{\bd_t}-\nabla_{{\y}} F\left(\bar{{\x}}_t, \bar{{\y}}_{t}\right)\right\|^{2}+\frac{17 L_{{\y}}^{2}\nu^2 }{2\mu {\eta\alpha}m}\left\|\tilde{{\x}}_{t}-1 \bar{{\x}}_{t}\right\|^{2}.
\end{align}

which completes the proof.
\end{proof}

\subsection{ Proof of Lemma \ref{Lem: Descending_Q}}
Next, by combining the results from Lemmas~\ref{Desceding J}-\ref{Error Bound on omega}, we have the descent result shown in Lemma \ref{Lem: Descending_Q}.

\begin{proof}
From Lemmas~\ref{Desceding J}-\ref{Error Bound on omega}, we have
\begin{align}
&
Q({\bar{{\x}}}_{t+1})  - Q({\bar{{\x}}}_{t}) + \frac{4 \nu L_F^2}{ \beta \mu{\eta\alpha} }\big[\|{\bar{{\y}}}_{t+1} - {{\y}}_{t+1}^*\|^2  - \|{{\y}}_t^* - {\bar{{\y}}}_t\|^2\big] \notag\\
\le
&
 \frac{4 \nu L_F^2}{ \beta \mu{\eta\alpha} } \big[  \left(-\frac{\mu {\eta\alpha}}{4}\right)\left\|\bar{{\y}}_{t}-{{\y}}_{t}^{*}\right\|^{2}-\frac{3 {\eta}}{4}\left\|\tilde{{\y}}_{t}-1 \bar{{\y}}_{t}\right\|^{2}+\frac{75 {\eta\alpha}}{16 \mu}\left\|{\bd_t}-\nabla_{{\y}} F\left(\bar{{\x}}_t, \bar{{\y}}_{t}\right)\right\|^{2}\notag \\
 &+\frac{17L_{{\y}}^{2}\nu^2 }{2 \mu m {\eta\alpha}}\left\|\tilde{{\x}}_{t}-1 \bar{{\x}}_{t}\right\|^{2}  \big] +\frac{\nu L_F^2}{2\beta}\left\|\bar{{\y}}_{t}-{{\y}}_{t}^{*}\right\|^{2}+\frac{\nu}{2 \beta }  \left\|\nabla_{{\x}} F (\bar{\bm{{\x}}_t},\bar{\bm{{\y}}_t})-\bar{\p}_{t}\right\|^{2}\notag\\&
+\frac{\nu \tau}{2 \beta m} \left\|{\x}_t-1 \bar{{\x}}_{t}\right\|^{2}
-\left(\frac{\nu \tau}{m}-\frac{\nu^{2} L_J}{2 m}-\frac{\nu \beta}{m}-\frac{\nu \tau \beta}{2 m}\right)\left\|\tilde{{\x}}_{t}-1 \bar{{\x}}_{t}\right\|^{2}\notag\\
=
&
\frac{4 \nu L_F^2}{ \beta \mu{\eta\alpha} } \big[  -\frac{3 {\eta}}{4}\left\|\tilde{{\y}}_{t}-1 \bar{{\y}}_{t}\right\|^{2}+\frac{75 {\eta\alpha}}{16 \mu}\left\|{\bd_t}-\nabla_{{\y}} F\left(\bar{{\x}}_t, \bar{{\y}}_{t}\right)\right\|^{2}\notag \\
&+\frac{17L_{{\y}}^{2}\nu^2 }{2 \mu m {\eta\alpha}}\left\|\tilde{{\x}}_{t}-1 \bar{{\x}}_{t}\right\|^{2}  \big] -\frac{\nu L_F^2}{2\beta}\left\|\bar{{\y}}_{t}-{{\y}}_{t}^{*}\right\|^{2}+\frac{\nu}{2 \beta }  \left\|\nabla_{{\x}} F (\bar{\bm{{\x}}_t},\bar{\bm{{\y}}_t})-\bar{\p}_{t}\right\|^{2}\notag\\&
+\frac{\nu \tau}{2 \beta m} \left\|{\x}_t-1 \bar{{\x}}_{t}\right\|^{2}
-\left(\frac{\nu \tau}{m}-\frac{\nu^{2} L_J}{2 m}-\frac{\nu \beta}{m}-\frac{\nu \tau \beta}{2 m}\right)\left\|\tilde{{\x}}_{t}-1 \bar{{\x}}_{t}\right\|^{2}.
\end{align}

Note that
\begin{align}
&
\|\nabla_{{{\x}}} F({\bar{{\x}}}_t,{\bar{{\y}}}_t) - \bp_{t}\|^2 \notag\\
=
&
\|\nabla_{{{\x}}} F({\bar{{\x}}}_t,{\bar{{\y}}}_t) - \frac{1}{m}\sum_{i=1}^{m}\nabla_{{{\x}}} F_i({{\x}}_{i,t},{{\y}}_{i,t}) + \frac{1}{m}\sum_{i=1}^{m}\nabla_{{{\x}}} F_i({{\x}}_{i,t},{{\y}}_{i,t}) - \bp_{t}\|^2\notag\\
\le
&
2\|\nabla_{{{\x}}} F({\bar{{\x}}}_t,{\bar{{\y}}}_t) - \frac{1}{m}\sum_{i=1}^{m}\nabla_{{{\x}}} F_i({{\x}}_{i,t},{{\y}}_{i,t})\|^2 + 2\|\frac{1}{m}\sum_{i=1}^{m}\nabla_{{{\x}}} F_i({{\x}}_{i,t},{{\y}}_{i,t}) - \bp_{t}\|^2\notag\\
\le
&
\frac{2}{m}\sum_{i=1}^{m}\|\nabla_{{{\x}}} F({\bar{{\x}}}_t,{\bar{{\y}}}_t) - \nabla_{{{\x}}} F_i({{\x}}_{i,t},{{\y}}_{i,t})\|^2 + 2\|\frac{1}{m}\sum_{i=1}^{m}\nabla_{{{\x}}} F_i({{\x}}_{i,t},{{\y}}_{i,t}) - \bp_{t}\|^2\notag\\
\le
&
\frac{2L_F^2}{m}\sum_{i=1}^{m}[\|{\bar{{\x}}}_t - {{\x}}_{i,t}\|^2 + \|{\bar{{\y}}}_t - {{\y}}_{i,t}\|^2 ] + 2\|\frac{1}{m}\sum_{i=1}^{m}\nabla_{{{\x}}} F_i({{\x}}_{i,t},{{\y}}_{i,t}) - \bp_{t}\|^2.
\end{align}
Similarly, we have:
\begin{align}
\|\nabla_{{{\y}}} F({\bar{{\x}}}_t,{\bar{{\y}}}_t) - \bd_{t}\|^2
\le
\frac{2L_F^2}{m}\sum_{i=1}^{m}[\|{\bar{{\x}}}_t - {{\x}}_{i,t}\|^2 + &\|{\bar{{\y}}}_t - {{\y}}_{i,t}\|^2 ] \notag\\
&+ 2\|\frac{1}{m}\sum_{i=1}^{m}\nabla_{{{\y}}} F_i({{\x}}_{i,t},{{\y}}_{i,t}) - \bd_{t}\|^2.
\end{align}

Thus, we have 
\begin{align}
&Q({\bar{{\x}}}_{t+1})  - Q({\bar{{\x}}}_{t}) +  \frac{4 \nu L_F^2}{ \beta \mu{\eta\alpha} }\big[\|{\bar{{\y}}}_{t+1} - {{\y}}_{t+1}^*\|^2  - \|{{\y}}_t^* - {\bar{{\y}}}_t\|^2\big] \notag\\
\le
&
 \frac{4 \nu L_F^2}{ \beta \mu{\eta\alpha} } \big\{  -\frac{3 {\eta}}{4}\left\|\tilde{{\y}}_{t}-1 \bar{{\y}}_{t}\right\|^{2}+\frac{75 {\eta\alpha}}{16 \mu}\big[ \frac{2L_F^2}{m}\sum_{i=1}^{m}(\|{\bar{{\x}}}_t - {{\x}}_{i,t}\|^2 + \|{\bar{{\y}}}_t - {{\y}}_{i,t}\|^2 )+ 2\|\frac{1}{m}\sum_{i=1}^{m}\nabla_{{{\y}}} F_i({{\x}}_{i,t},{{\y}}_{i,t}) - \bd_{t}\|^2  \big]  \notag\\&+\frac{17L_{{\y}}^{2}\nu^2 }{2 \mu m {\eta\alpha}}\left\|\tilde{{\x}}_{t}-1 \bar{{\x}}_{t}\right\|^{2}  \big\}+\frac{\nu}{2 \beta }  \big\{ \frac{2L_F^2}{m}\sum_{i=1}^{m}[\|{\bar{{\x}}}_t - {{\x}}_{i,t}\|^2 + \|{\bar{{\y}}}_t - {{\y}}_{i,t}\|^2 ] + 2\|\frac{1}{m}\sum_{i=1}^{m}\nabla_{{{\x}}} F_i({{\x}}_{i,t},{{\y}}_{i,t}) - \bp_{t}\|^2   \big\}\notag\\&
+\frac{\nu \tau}{2 \beta m} \left\|{\x}_t-1 \bar{{\x}}_{t}\right\|^{2}
-\left(\frac{\nu \tau}{m}-\frac{\nu^{2} L_J}{2 m}-\frac{\nu \beta}{m}-\frac{\nu \tau \beta}{2 m}\right)\left\|\tilde{{\x}}_{t}-1 \bar{{\x}}_{t}\right\|^{2}  -\frac{\nu L_F^2}{2\beta}\left\|\bar{{\y}}_{t}-{{\y}}_{t}^{*}\right\|^{2}\notag\\
\stackrel{(a)}{\le}
&
 \frac{4 \nu L_F^2}{ \beta \mu{\eta\alpha} } \big\{  -\frac{3 {\eta}}{4}\left\|\tilde{{\y}}_{t}-1 \bar{{\y}}_{t}\right\|^{2}+\frac{75 {\eta\alpha}}{16 \mu}\big[ \frac{2L_F^2}{m}\sum_{i=1}^{m}(\|{\bar{{\x}}}_t - {{\x}}_{i,t}\|^2 + \|{\bar{{\y}}}_t - {{\y}}_{i,t}\|^2 )+ \frac{2}{m}\|\nabla_{{{\y}}} F({{\x}}_{t},{{\y}}_{t}) - \bd_{t}\|^2  \big]  \notag\\&+\frac{17L_{{\y}}^{2}\nu^2 }{2 \mu m {\eta\alpha}}\left\|\tilde{{\x}}_{t}-1 \bar{{\x}}_{t}\right\|^{2}  \big\}+\frac{\nu}{2 \beta }  \big\{ \frac{2L_F^2}{m}\sum_{i=1}^{m}[\|{\bar{{\x}}}_t - {{\x}}_{i,t}\|^2 + \|{\bar{{\y}}}_t - {{\y}}_{i,t}\|^2 ] + \frac{2}{m}\|\nabla_{{{\x}}} F({{\x}}_{t},{{\y}}_{t}) - \bp_{t}\|^2   \big\}\notag\\&
+\frac{\nu \tau}{2 \beta m} \left\|{\x}_t-1 \bar{{\x}}_{t}\right\|^{2}
-\left(\frac{\nu \tau}{m}-\frac{\nu^{2} L_J}{2 m}-\frac{\nu \beta}{m}-\frac{\nu \tau \beta}{2 m}\right)\left\|\tilde{{\x}}_{t}-1 \bar{{\x}}_{t}\right\|^{2} -\frac{\nu L_F^2}{2\beta}\left\|\bar{{\y}}_{t}-{{\y}}_{t}^{*}\right\|^{2}\notag\\
=
&
 \frac{4 \nu L_F^2}{ \beta \mu{\eta\alpha} } \big\{  -\frac{3 {\eta}}{4}\left\|\tilde{{\y}}_{t}-1 \bar{{\y}}_{t}\right\|^{2} + \frac{75 {\eta\alpha}}{16 \mu}\frac{2}{m}\|\nabla_{{{\y}}} F({{\x}}_{t},{{\y}}_{t}) - \bd_{t}\|^2   \notag\\&+\frac{17L_{{\y}}^{2}\nu^2 }{2 \mu m {\eta\alpha}}\left\|\tilde{{\x}}_{t}-1 \bar{{\x}}_{t}\right\|^{2}  \big\} +\frac{\nu}{2 \beta } \frac{2}{m}\|\nabla_{{{\x}}} F({{\x}}_{t},{{\y}}_{t}) - \bp_{t}\|^2   \notag\\&
+\frac{\nu \tau}{2 \beta m} \left\|{\x}_t-1 \bar{{\x}}_{t}\right\|^{2}
-\left(\frac{\nu \tau}{m}-\frac{\nu^{2} L_J}{2 m}-\frac{\nu \beta}{m}-\frac{\nu \tau \beta}{2 m}\right)\left\|\tilde{{\x}}_{t}-1 \bar{{\x}}_{t}\right\|^{2}\notag\\
&+\big[   \frac{\nu}{ \beta }  \frac{ L_F^2}{m}+ \frac{4\nu L_F^2}{\beta \mu{\eta\alpha}} \frac{75 {\eta\alpha}}{16 \mu}\frac{2L_F^2}{m}   \big]  \sum_{i=1}^{m}[\|{\bar{{\x}}}_t - {{\x}}_{i,t}\|^2 + \|{\bar{{\y}}}_t - {{\y}}_{i,t}\|^2 ] -\frac{\nu L_F^2}{2\beta}\left\|\bar{{\y}}_{t}-{{\y}}_{t}^{*}\right\|^{2},
\end{align}
where (a) due to $\|\frac{1}{m} \sum_{i=1}^{m} \x_{i,t} -\bx_t\|^2 \le \frac{1}{m} \sum_{i=1}^{m}\| \x_{i,t} -\bx_t\|^2$.

Telescoping the above inequality, we have the stated result.
\end{proof}

\subsection{Proof of Lemma \ref{Lem: Contraction}}
Next, we prove the contraction of iterations in the following lemma, which is useful in analyzing the decentralized gradient tracking algorithms.

\begin{proof}
First for the iterates ${{\x}}_t$, we have the following contraction:
\begin{align}
\|\Mt{{\x}}_{t} -1 {\bar{{\x}}}_{t} \|^2 = \|\Mt({{\x}}_{t} -1 {\bar{{\x}}}_{t}) \|^2 \le \lambda^2\|{{\x}}_{t} -1 {\bar{{\x}}}_{t}\|^2.
\end{align}
This is because ${{\x}}_{t} -1 {{\x}}_{t}$ is orthogonal to $\1,$ which is the eigenvector corresponding to the largest eigenvalue of $\Mt,$ and $\lambda = \max\{|\lambda_2|,|\lambda_m|\}.$
Hence,
\begin{align}
&
\|{{\x}}_t-1 {\bar{{\x}}}_t\|^2
=
\|\Mt{{\x}}_{t-1} +\nu(\tilde{{\x}}_{t-1}-{\x}_{t-1})-1 [{\bar{{\x}}}_{t-1} +\nu(\frac{1}{m}\sum_i \tilde{{\x}_i }- {\x}_{t-1})]\|^2 \notag\\
&
\le
(1+c_1)\lambda^2\|{{\x}}_{t-1} -1 {\bar{{\x}}}_{t-1} \|^2 + (1+\frac{1}{c_1}) \nu^2\|\tilde{{\x}}_{t-1}- {{\x}}_{t-1}\|^2.  
\end{align}
For ${{\y}}_t$, we have
\begin{align}
\|{{\y}}_t-1 {\bar{{\y}}}_t\|^2 & \le (1+c_2)\lambda^2\|{{\y}}_{t-1} -1 {\bar{{\y}}}_{t-1} \|^2 + (1+\frac{1}{c_2}) {\eta}^2\|\tilde{{\y}}_{t-1}- {\bar{{\y}}}_{t-1}\|^2.
\end{align}

According to the update, we have 
\begin{align}
&
\|{{\x}}_t-{{\x}}_{t-1}\|^2 
= 
\|\Mt{{\x}}_{t-1} +\nu(\tilde{{\x}}_{t-1}-{\x}_{t-1})- {{\x}}_{t-1}\|^2 \notag\\
=
&
\|(\Mt -\I){{\x}}_{t-1} +\nu(\tilde{{\x}}_{t-1}-{\x}_{t-1}) \|^2
\le 2\|(\Mt -\I){{\x}}_{t-1} \|^2 + 2\nu^2 \|\tilde{{\x}}_{t-1}-{\x}_{t-1}\|^2 \notag\\
=
&
2\|(\Mt -\I)({{\x}}_{t-1} - 1 {\bar{{\x}}}_{t-1}) \|^2 + 2\nu^2 \|\tilde{{\x}}_{t-1}-{\x}_{t-1}\|^2 \notag\\
\stackrel{}{\le}
&
8\|({{\x}}_{t-1} - 1 {\bar{{\x}}}_{t-1}) \|^2 + 2\nu^2 \|\tilde{{\x}}_{t-1}-{\x}_{t-1}\|^2 \notag\\
\le
&8\Ec({{\x}}_{t-1}) + 2\nu^2 \|\tilde{{\x}}_{t-1}-{\x}_{t-1}\|^2 
\end{align}
and also 
\begin{align}
\|{{\y}}_t-{{\y}}_{t-1}\|^2 
\le
8\Ec({{\y}}_{t-1}) + 2\eta^2 \|\tilde{{\y}}_{t-1}-{\y}_{t-1}\|^2 
\end{align}
\end{proof}

\begin{lem}[Differential Bound on Estimator]\label{Lem: Differential Bound GT-GDA}
Under Assumption \ref{Assump: obj}, the following inequalities hold:
\begin{align}
&\sum_{t = 1}^{T}
\Eb\|\v_t - \v_{t-1}\|^2
 \le \sum_{t = 1}^{T} 3L_F^2\Eb\|{{\x}}_{t-1}-{{\x}}_{t}\|^2 + 3L_F^2\Eb\|{{\y}}_{t-1} - {{\y}}_{t}\|^2, \\
&\sum_{t = 1}^{T}
\Eb\|\u_t - \u_{t-1}\|^2
 \le \sum_{t = 1}^{T} 3L_F^2\Eb\|{{\x}}_{t-1}-{{\x}}_{t}\|^2 + 3L_F^2\Eb\|{{\y}}_{t-1} - {{\y}}_{t}\|^2 .
\end{align}
\end{lem}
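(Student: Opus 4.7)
The plan is to prove both estimates in parallel because the $\v$- and $\u$-arguments are completely symmetric (swap $\nabla_{\x}$ with $\nabla_{\y}$ everywhere). The key observation is that the estimator update in \eqref{Eq: Updating_uv} has two branches according to $\text{mod}(t,q)$, so I would split $\sum_{t=1}^{T}\Eb\|\v_t-\v_{t-1}\|^2$ into the ``stochastic recursive'' steps and the ``full local gradient'' reset steps, and bound each branch separately before summing.

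For iterations with $\text{mod}(t,q)\neq 0$, the increment $\v_{i,t}-\v_{i,t-1}$ is exactly the mini-batch SRVR correction $\tfrac{1}{|\Sc_{i,t}|}\sum_{j\in\Sc_{i,t}}[\nabla_{\x} f_{ij}(\x_{i,t},\y_{i,t})-\nabla_{\x} f_{ij}(\x_{i,t-1},\y_{i,t-1})]$. I would apply Jensen's inequality to move the squared norm inside the average, then invoke the per-sample Lipschitz bound from Assumption~\ref{Assump: Individual Lipschitz} on each gradient difference. After summing over agents, this branch contributes at most $L_f^2\big(\Eb\|\x_t-\x_{t-1}\|^2+\Eb\|\y_t-\y_{t-1}\|^2\big)$ per iteration, already consistent with the claimed form.

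For iterations with $\text{mod}(t,q)=0$, the estimator resets to an exact local gradient while $\v_{i,t-1}$ is still a stochastic recursive estimate from the preceding epoch, so I would use the three-term decomposition
$\v_{i,t}-\v_{i,t-1}=[\v_{i,t}-\nabla_{\x} F_{i,t}]+[\nabla_{\x} F_{i,t}-\nabla_{\x} F_{i,t-1}]+[\nabla_{\x} F_{i,t-1}-\v_{i,t-1}]$
together with $(a+b+c)^2\le 3(a^2+b^2+c^2)$. The first bracket vanishes at full-gradient steps, the middle bracket is controlled by Lipschitz smoothness and produces the stated factor $3L_F^2$, and the last bracket is precisely the SRVR gradient-tracking error at step $t-1$, which I would dispatch via Lemma~\ref{Lem: SRVR err}. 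The right-hand side of that lemma is itself a sum of iterate differences $\|\x_r-\x_{r-1}\|^2+\|\y_r-\y_{r-1}\|^2$, so after telescoping over $t$ those leftover pieces merge back into the same iterate-difference terms on the right of the claim, up to a universal constant.

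The main obstacle will be the $\text{mod}(t,q)=0$ branch: because $\v_{i,t-1}$ is stochastic and only unbiased in conditional expectation, the remainder brackets do not vanish pointwise and must be absorbed in a delicate bookkeeping step. The care required is to show that the SRVR-error contributions from the (comparatively rare) reset steps can be charged entirely to the iterate-difference telescope of Lemma~\ref{Lem: SRVR err}, so that no residual variance or initialization term $\Eb\|\v_{(n_t-1)q}-\nabla_{\x}F_{(n_t-1)q}\|^2$ survives in the final bound. The symmetric argument for $\u$ requires no new ideas, only the analogous Lipschitz and SRVR-error inequalities in the $\y$-coordinate.
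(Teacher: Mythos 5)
Your route is correct in substance but genuinely different from the paper's. The paper does not split by iteration type at all: it applies the three-term decomposition $\v_t-\v_{t-1}=(\v_t-\nabla_{\x}F_t)+(\nabla_{\x}F_t-\nabla_{\x}F_{t-1})+(\nabla_{\x}F_{t-1}-\v_{t-1})$ uniformly to every $t$, invokes $\|a+b+c\|^2\le 3(\|a\|^2+\|b\|^2+\|c\|^2)$, and then simply drops the two estimator-error brackets, keeping only the Lipschitz middle term --- a step that is literally valid only when $\v_t$ equals the exact local full gradient at every $t$ (the GT-GDA setting the lemma's label refers to), and is otherwise implicit. Your branch-split is arguably tighter and more honest for the $\mathsf{PRECISION}$ estimator: on the $\text{mod}(t,q)\neq 0$ steps the increment is exactly the mini-batch correction in \eqref{Eq: Updating_uv}, so Jensen plus Assumption~\ref{Assump: Individual Lipschitz} gives the iterate-difference bound with constant $1$ rather than $3$ and no estimator error ever appears; the three-term trick is only needed at reset steps, where the first bracket genuinely vanishes. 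What each approach buys: the paper gets the clean constant $3L_F^2$ in one line at the cost of silently discarding nonzero terms; you get a rigorous bound at the cost of a residual $\Eb\|\nabla_{\x}F_{t-1}-\v_{t-1}\|^2$ at each reset step. One caveat you should make explicit: charging that residual to the within-epoch telescope of Lemma~\ref{Lem: SRVR err} contributes an extra $O(L_f^2/q)\sum_{r}\big(\Eb\|\x_r-\x_{r-1}\|^2+\Eb\|\y_r-\y_{r-1}\|^2\big)$, so your final constant is $3+O(1/q)$ rather than exactly $3$; either accept the marginally larger constant (harmless everywhere the lemma is used) or observe, as the paper implicitly does, that the stated constant is attained verbatim only in the exact-full-gradient case where both error brackets are identically zero.
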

\begin{proof}
For $\|\v_t - \v_{t-1}\|^2$, we have 
\begin{align}
&
\Eb\|\v_t - \v_{t-1}\|^2 = \Eb\|\v_t -\nabla_{{{\x}}}\F_t + \nabla_{{{\x}}}\F_t - \nabla_{{{\x}}}\F_{t-1} + \nabla_{{{\x}}}\F_{t-1} - \v_{t-1}\|^2  \notag\\
\le
& 3\Eb\|\v_t - \nabla_{{{\x}}} F_{t}\|^2  + 3 \Eb\|\nabla_{{{\x}}} F_{t}  - \nabla_{{{\x}}} F_{t-1}\|^2 + 3\Eb\|\nabla_{{{\x}}} F_{t-1} - \v_{t-1}\|^2 \notag\\
\le
& 3L_F \Eb\|{{\x}}_{t-1}-{{\x}}_{t}\|^2 + 3L_F^2\Eb\|{{\y}}_{t-1} - {{\y}}_{t}\|^2 .
\end{align}
Thus, we have: $\sum_{t = 1}^{T}
\Eb\|\v_t - \v_{t-1}\|^2
 \le \sum_{t = 1}^{T} 3L_F^2\Eb\|{{\x}}_{t-1}-{{\x}}_{t}\|^2 + 3L_F^2\Eb\|{{\y}}_{t-1} - {{\y}}_{t}\|^2$, and similarly, 
$
\sum_{t = 1}^{T}
\Eb\|\u_t - \u_{t-1}\|^2
 \le \sum_{t = 1}^{T} 3L_F^2\Eb\|{{\x}}_{t-1}-{{\x}}_{t}\|^2 + 3L_F^2\Eb\|{{\y}}_{t-1} - {{\y}}_{t}\|^2 $.
\end{proof}

\subsection{Proof of Lemma \ref{Lem: SRVR err}}

Next, we bound the error of the gradient estimators as the follows:
\begin{proof}
	From the algorithm update, we have:
	\begin{align}
		&
		\|\underbrace{\bd_{i,t} \!-\! \nabla_{{\bm{x}}} F_{i,t}}_{A_{i,t}}\|^2 
		\!=\!
		\| \bd_{i,t-1}  
		\!+\!  \frac{1}{|\Sc_{i,t}|}\!\! \sum_{j \in \Sc_{i,t}}\!\! \!\nabla_{{\bm{x}}}  f_{i,j}({\bm{x}}_{i,t},{{\y}}_{i,t})
		\!-\! \nabla_{{\bm{x}}}  f_{i,j}({\bm{x}}_{i,t-1},{{\y}}_{i,t-1})
		\!-\! \nabla_{{\bm{x}}} F_{i,t} \|^2 \notag\\
		\!=\!
		&
		\|
		\underbrace{\bd_{i,t-1} \!-\! \nabla_{{\bm{x}}} F_{i,t-1}}_{A_{i,t-1}} 
		\!+\!
		\underbrace{ \frac{1}{|\Sc_{i,t}|}\!\!\sum_{j \in \Sc_{i,t}}\!\! \! \nabla_{{\bm{x}}} f_{i,t}({\bm{x}}_{i,t},{{\y}}_{i,t})\!-\! \nabla_{{\bm{x}}}  f_{i,t}({\bm{x}}_{i,t\!-\!1},{{\y}}_{i,t\!-\!1})
			\!+ \!\nabla_{{\bm{x}}} F_{i,t\!-\!1}  \!-\! \nabla_{{\bm{x}}} F_{i,t}}_{B_{i,t}} \|^2 \notag \\
		=
		&
		\|A_{i,t-1}\|^2 + \|B_{i,t}\|^2 
		+ 2 \langle  A_{i,t-1}, B_{i,t}\rangle.
	\end{align}
	Note that $\Eb_t[B_{i,t}] = 0$, where the expectation is taken over the randomness in $t$th iteration. Thus,
	\begin{align}
		\Eb_t\|A_{i,t}\|^2
		=
		\|A_{i,t-1}\|^2 + \Eb_t\|B_{i,t}\|^2.
	\end{align}
	
	Also, with $|\Sc_{i,t}| = q$, we have 
	\begin{align}\label{eq69}
		&
		\Eb_t\|B_{i,t}\|^2 
		\!=\!
		\Eb_t\|\frac{1}{|\Sc_{i,t}|}\sum_{j \in \Sc_{i,t}} \!\!\nabla_{{\bm{x}}} f_{i,j}({\bm{x}}_{i,t},{{\y}}_{i,t}) \!- \!\nabla_{{\bm{x}}}  f_{i,j}({\bm{x}}_{i,t-1},{{\y}}_{i,t-1})
		\!-\!  \nabla_{{\bm{x}}} F_{i,t} \!+\! \nabla_{{\bm{x}}} F_{i,t-1}  \|^2\notag\\
		\le
		&
		\frac{1}{|\Sc_{i,t}|^2}\sum_{j \in \Sc_{i,t}}\!\!
		\Eb_t\| \nabla_{{\bm{x}}} f_{i,j}({\bm{x}}_{i,t},{{\y}}_{i,t}) \!-\! \nabla_{{\bm{x}}}  f_{i,j}({\bm{x}}_{i,t-1},{{\y}}_{i,t-1})
		\!- \! \nabla_{{\bm{x}}} F_{i,t} + \nabla_{{\bm{x}}} F_{i,t-1}  \|^2
		\notag\\
		\le
		&
		\frac{L_f^2}{q}\big(\|{\bm{x}}_{i,t} - {\bm{x}}_{i,t-1}\|^2 + \|{{\y}}_{i,t} - {{\y}}_{i,t-1}\|^2\big).
	\end{align}
	
	Taking full expectation and telescoping (\ref{eq69}) over $t$ from $(n_t-1)q + 1$ to $t$, where $t\le n_t q - 1$, we have
	\begin{align}
		\Eb\|A_{t}\|^2
		&
		\le 
		\Eb\|A_{t-1}\|^2 + 
		\frac{L_f^2}{q} \Eb\big(\|{\bm{x}}_{t} - {\bm{x}}_{t-1}\|^2 + \|{{\y}}_{t} - {{\y}}_{t-1}\|^2\big) \notag\\
		&
		\le 
		\Eb\|A_{(n_t-1)q}\|^2 + 
		\sum_{r = (n_t-1)q + 1}^{t}
		\frac{L_f^2}{q} \Eb\big(\|{\bm{x}}_{r} - {\bm{x}}_{r-1}\|^2 + \|{{\y}}_{r} - {{\y}}_{r-1}\|^2\big).
	\end{align}
	
	Thus, we have:
	\begin{align}
		&
		\sum_{k= 0}^{t}
		\Eb\|A_{k}\|^2 = 
		\sum_{k= 0}^{q-1} \Eb\|A_{k}\|^2
		+\cdots
		+
		\sum_{k=(n_t-1)q}^{t} \Eb\|A_{k}\|^2 \notag\\
		\le 
		&
		q\|A_{0}\|^2 + 
		\sum_{k= 1}^{q-1}\sum_{r= 1}^{k}
		\frac{L_f^2}{q}\big(\|{\bm{x}}_{r} - {\bm{x}}_{r-1}\|^2 + \|{{\y}}_{r} - {{\y}}_{r-1}\|^2\big) \notag\\
		&+\cdots\notag\\
		&+
		\big(t-(n_t-1)q\big)\|A_{(n_t-1)q}\|^2 
		+ \!\!\!\!\!\!\!\!
		\sum_{k= (n_t-1)q+1}^{t}
		\sum_{r= (n_t-1)q + 1}^{k}\!\!\!\!
		\frac{L_f^2}{q}\big(\|{\bm{x}}_{r} - {\bm{x}}_{r-1}\|^2 + \|{{\y}}_{r} - {{\y}}_{r-1}\|^2\big) \notag\\
		\le
		&
		q\|A_{0}\|^2 + 
		\sum_{r = 1}^{q-1}\sum_{k = r}^{q-1}
		\frac{L_f^2}{q}\big(\|{\bm{x}}_{r} - {\bm{x}}_{r-1}\|^2 + \|{{\y}}_{r} - {{\y}}_{r-1}\|^2\big) \notag\\
		&+\cdots\notag\\
		&+
		\big(t-(n_t-1)q\big)\|A_{(n_t-1)q}\|^2 
		+ \!\!\!\!\!\!\!\!
		\sum_{r= (n_t-1)q + 1}^{t}
		\sum_{k= r}^{t}
		\!
		\frac{L_f^2}{q}\big(\|{\bm{x}}_{r} - {\bm{x}}_{r-1}\|^2 + \|{{\y}}_{r} - {{\y}}_{r-1}\|^2\big) \notag\\
		\le
		&
		q\|A_{0}\|^2 + 
		\sum_{r = 1}^{q-1}
		L_f^2\big(\|{\bm{x}}_{r} - {\bm{x}}_{r-1}\|^2 + \|{{\y}}_{r} - {{\y}}_{r-1}\|^2\big) \notag\\
		&+\cdots\notag\\
		&+
		\big(t-(n_t-1)q\big)\|A_{(n_t-1)q}\|^2 
		+ \!\!\!\!\!\!\!\!
		\sum_{r= (n_t-1)q + 1}^{t}\!\!\!
		L_f^2\big(\|{\bm{x}}_{r} - {\bm{x}}_{r-1}\|^2 + \|{{\y}}_{r} - {{\y}}_{r-1}\|^2\big) \notag\\
		=
		&
		\sum_{r = 0}^{t}\|A_{(n_r-1)q}\|^2  + 
		\sum_{r = 1}^{t}
		L_f^2\big(\|{\bm{x}}_{r} - {\bm{x}}_{r-1}\|^2 + \|{{\y}}_{r} - {{\y}}_{r-1}\|^2\big).
	\end{align}
	
	Thus, we have:
	\begin{align}
		\sum_{t \!=\! 0}^{T}\|\bd_{t}\! -\! \nabla_{{\bm{x}}} F_{t}\|^2 \!\le \!
		\sum_{t \!=\! 0}^{T} \Eb\|\bd_{(n_t\!-\!1)q}\!\!-\!\nabla_{{\bm{x}}} F_{(n_t\!-\!1)q} )\|^2 \!\!\!+ \!
		\sum_{t \!=\! 1}^{T}L_f^2\big(\|{\bm{x}}_{t} \!-\! {\bm{x}}_{t\!-\!1}\|^2 \!\!+ \!\|{{\y}}_{t} \!-\! {{\y}}_{t\!-\!1}\|^2\big)
	\end{align}
	Similarly, we have:
	\begin{align}
		\sum_{t \!=\! 0}^{T}\|\bp_{t}\! -\! \nabla_{{{\y}}} F_{t}\|^2 \!\le \!
		\sum_{t \!=\! 0}^{T} \Eb\|\bp_{(n_t\!-\!1)q}\!-\!\nabla_{{{\y}}} F_{(n_t\!-\!1)q} )\|^2 \!\!\!+ \!
		\sum_{t \!=\! 1}^{T}
		L_f^2\big(\|{\bm{x}}_{t} \!-\! {\bm{x}}_{t\!-\!1}\|^2 \!+ \!\|{{\y}}_{t} \!-\! {{\y}}_{t\!-\!1}\|^2\big).
	\end{align}
	This completes the proof.
\end{proof}

\section{Proof for Theorem \ref{Thm: Prox-GT-SRVR} and Theorem \ref{Thm: Prox-GT-SRVR+}}

With Lemmas \ref{Lemma: Lip_J}-\ref{Lem: Differential Bound GT-GDA} and the defined potential function,
we have:
\begin{align}
	&Q({\bar{{\x}}}_{T+1}) - Q({\bar{{\x}}}_{0}) + \frac{4\nu L_F^2}{\beta}\big[\|{\bar{{\y}}}_{T+1} - {{\y}}_{T+1}^*\|^2  - \|{{\y}}_0^* - {\bar{{\y}}}_0\|^2\big]\notag\\
	\le& \underbrace{ \frac{75 {\eta\alpha}}{16 \mu}\frac{2}{m}\!\sum_{t\!=\!0}^{T}\! \|\nabla_{{{\y}}} F({{\x}}_{t},{{\y}}_{t}) - \bd_{t}\|^2  +\frac{\nu}{2 \beta } \frac{2}{m}\!\sum_{t\!=\!0}^{T}\! \|\nabla_{{{\x}}} F({{\x}}_{t},{{\y}}_{t}) - \bp_{t}\|^2  }_{R_1}-\frac{\nu {L_F}^2}{2} \!\sum_{t\!=\!0}^{T}\! \left\|\bar{{\y}}_{t}-{{\y}}_{t}^{*}\right\|^{2} \notag\\&
	+\frac{\nu \tau}{2 \beta m} \!\sum_{t\!=\!0}^{T}\! \left\|{\x}_t-1 \bar{{\x}}_{t}\right\|^{2}
	+\big[   \frac{\nu}{ \beta }  \frac{ L_F^2}{m}+  \frac{4 \nu L_F^2}{ \beta \mu  \eta\alpha } \frac{75 {\eta\alpha}}{16 \mu}\frac{2L_F^2}{m}   \big]  \!\sum_{t\!=\!0}^{T}\! \sum_{i=1}^{m}[\|{\bar{{\x}}}_t - {{\x}}_{i,t}\|^2 + \|{\bar{{\y}}}_t - {{\y}}_{i,t}\|^2 ]\notag\\
	&-\left(-\frac{17L_{{\y}}^{2}\nu^2 }{2 \mu m {\eta\alpha}}+\frac{\nu \tau}{m}-\frac{\nu^{2} L_J}{2 m}-\frac{\nu \beta}{m}-\frac{\nu \tau \beta}{2 m}\right)\!\sum_{t\!=\!0}^{T}\! \left\|\tilde{{\x}}_{t}-1 \bar{{\x}}_{t}\right\|^{2}- \frac{4 \nu L_F^2}{ \beta \mu{\eta\alpha} }\frac{3 {\eta}}{4}\left\|\tilde{{\y}}_{t}-1 \bar{{\y}}_{t}\right\|^{2} ,
\end{align}

With the defined potential function $\mathfrak{p}$, we have 
\begin{align}
\Eb \mathfrak{p}_{T+1} - \mathfrak{p}_{0} 
&\le
 \underbrace{ \frac{75 {\eta\alpha}}{16 \mu}\frac{2}{m}\!\sum_{t\!=\!0}^{T}\! \|\nabla_{{{\y}}} F({\bm{x}}_{t},{{\y}}_{t}) - \bd_{t}\|^2  +\frac{\nu}{2 \beta } \frac{2}{m}\!\sum_{t\!=\!0}^{T}\! \|\nabla_{{\bm{x}}} F({\bm{x}}_{t},{{\y}}_{t}) - \bp_{t}\|^2  }_{R_1}-\frac{\nu {L_f}^2}{2} \!\sum_{t\!=\!0}^{T}\! \left\|\bar{{\y}}_{t}-{{\y}}_{t}^{*}\right\|^{2} \notag\\&
+\frac{\nu \tau}{2 \beta m} \!\sum_{t\!=\!0}^{T}\! \left\|\bm{x}_t-1 \bar{\bm{x}}_{t}\right\|^{2}
+\big[   \frac{\nu}{ \beta }  \frac{ L_f^2}{m}+  \frac{4 \nu L_f^2}{ \beta \mu{\eta\alpha} } \frac{75 {\eta\alpha}}{16 \mu}\frac{2L_f^2}{m}   \big]  \!\sum_{t\!=\!0}^{T}\! \sum_{i=1}^{m}[\|{\bar{\bm{x}}}_t - {\bm{x}}_{i,t}\|^2 + \|{\bar{{\y}}}_t - {{\y}}_{i,t}\|^2 ]\notag\\
&-\left(-\frac{17L_{{\y}}^{2}\nu^2 }{2 \mu m {\eta\alpha}}+\frac{\nu \tau}{m}-\frac{\nu^{2} L_J}{2 m}-\frac{\nu \beta}{m}-\frac{\nu \tau \beta}{2 m}\right)\!\sum_{t\!=\!0}^{T}\! \left\|\tilde{\bm{x}}_{t}-1 \bar{\bm{x}}_{t}\right\|^{2}-  \frac{4 \nu L_f^2}{ \beta \mu\eta\alpha }\frac{3 {\eta}}{4}\!\sum_{t\!=\!0}^{T}\! \left\|\widetilde{{\y}}_{t+1}-\bar{{\y}}_{t}\right\|^{2},
\end{align}
 
For the term $R_1$, we have 
\begin{align}
&
 \frac{75 {\eta\alpha}}{16 \mu}\frac{2}{m} \sum_{t=0}^{T}\Eb\|\nabla_{{{\y}}} F_t - \bd_{t}\|^2
+\frac{\nu}{2 \beta } \frac{2}{m}\sum_{t=0}^{T}\Eb\|\nabla_{{\bm{x}}} F_t - \bp_{t}\|^2\notag\\
\le
&
 \frac{75 {\eta\alpha}}{16 \mu}\frac{2}{m}\Eb\Big(\sum_{t = 0}^{T}\|\bd_{(n_t-1)q}- \nabla_{{\bm{x}}} F_{(n_t-1)q}\|^2  + 
 \sum_{t = 1}^{T}
 L_f^2\big(\|{\bm{x}}_{t} - {\bm{x}}_{t-1}\|^2 + \|{{\y}}_{t} - {{\y}}_{t-1}\|^2\big)\Big)\notag\\
 &
 +
\frac{\nu}{2 \beta } \frac{2}{m}\Eb\Big(\sum_{t = 0}^{T}\|\bp_{(n_t-1)q}- \nabla_{{{\y}}} F_{(n_t-1)q}\|^2  + 
 \sum_{t = 1}^{T}
 L_f^2\big(\|{\bm{x}}_{t} - {\bm{x}}_{t-1}\|^2 + \|{{\y}}_{t} - {{\y}}_{t-1}\|^2\big)\Big)\notag\\
 =
 &
 L_f^2\big( \frac{75 {\eta\alpha}}{16 \mu}\frac{2}{m} + \frac{\nu}{2 \beta } \frac{2}{m} \big)
\sum_{t = 1}^{T}\Eb\big(\|{\bm{x}}_{t} - {\bm{x}}_{t-1}\|^2 + \|{{\y}}_{t} - {{\y}}_{t-1}\|^2\big) \notag\\
&+
 \frac{75 {\eta\alpha}}{16 \mu}\frac{2}{m}\sum_{t = 0}^{T}\Eb\|\bd_{(n_t-1)q}- \nabla_{{\bm{x}}} F_{(n_t-1)q}\|^2  +\frac{\nu}{2 \beta } \frac{2}{m}\sum_{t = 0}^{T}\Eb\|\bp_{(n_t-1)q}- \nabla_{{{\y}}} F_{(n_t-1)q}\|^2.
\end{align}

Plugging the above results, we have 
\begin{align}
\Eb \mathfrak{p}_{T+1} - \mathfrak{p}_{0} 
&\le-\frac{\nu {L_f}^2}{2} \!\sum_{t\!=\!0}^{T}\! \left\|\bar{{\y}}_{t}-{{\y}}_{t}^{*}\right\|^{2}
+\frac{\nu \tau}{2 \beta m} \!\sum_{t\!=\!0}^{T}\! \left\|\bm{x}_t-1 \bar{\bm{x}}_{t}\right\|^{2}
\notag\\
&-\big[ 1- (1+c_1)\lambda^2-  \frac{\nu}{ \beta }  \frac{ L_f^2}{m}-  \frac{4 \nu L_f^2}{ \beta \mu{\eta\alpha} } \frac{75 {\eta\alpha}}{16 \mu}\frac{2L_f^2}{m}   \big]  \!\sum_{t\!=\!0}^{T}\! \sum_{i=1}^{m}[\|{\bar{\bm{x}}}_t - {\bm{x}}_{i,t}\|^2 ]
\notag\\
&-\big[ 1- (1+c_2)\lambda^2-  \frac{\nu}{ \beta }  \frac{ L_f^2}{m}-  \frac{4 \nu L_f^2}{ \beta \mu{\eta\alpha} } \frac{75 {\eta\alpha}}{16 \mu}\frac{2L_f^2}{m}   \big]  \!\sum_{t\!=\!0}^{T}\! \sum_{i=1}^{m}[ \|{\bar{{\y}}}_t - {{\y}}_{i,t}\|^2 ]
\notag\\
&-\left(- (1+\frac{1}{c_1})\nu^2  -\frac{17L_{{\y}}^{2}\nu^2 }{2 \mu m {\eta\alpha}}+\frac{\nu \tau}{m}-\frac{\nu^{2} L_J}{2 m}-\frac{\nu \beta}{m}-\frac{\nu \tau \beta}{2 m}\right)\!\sum_{t\!=\!0}^{T}\! \left\|\tilde{\bm{x}}_{t}-1 \bar{\bm{x}}_{t}\right\|^{2}\notag\\&
- [   \frac{4 \nu L_f^2}{ \beta \mu{\eta\alpha} }\frac{3 {\eta}}{4} - (1+\frac{1}{c_2})\eta^2   ]  \!\sum_{t\!=\!0}^{T}\! \left\|\widetilde{{\y}}_{t+1}-\bar{{\y}}_{t}\right\|^{2}  
\notag\\
&+
L_f^2\big( \frac{75 {\eta\alpha}}{16 \mu}\frac{2}{m} + \frac{\nu}{2 \beta } \frac{2}{m} \big)
\sum_{t = 1}^{T}\Eb\big(\|{\bm{x}}_{t} - {\bm{x}}_{t-1}\|^2 + \|{{\y}}_{t} - {{\y}}_{t-1}\|^2\big) \notag\\
&+
\frac{75 {\eta\alpha}}{16 \mu}\frac{2}{m}\sum_{t = 0}^{T}\Eb\|\bd_{(n_t-1)q}- \nabla_{{\bm{x}}} F_{(n_t-1)q}\|^2  +\frac{\nu}{2 \beta } \frac{2}{m}\sum_{t = 0}^{T}\Eb\|\bp_{(n_t-1)q}- \nabla_{{{\y}}} F_{(n_t-1)q}\|^2\notag\\
&=-\frac{\nu {L_f}^2}{2} \!\sum_{t\!=\!0}^{T}\! \left\|\bar{{\y}}_{t}-{{\y}}_{t}^{*}\right\|^{2}
\notag\\
&- \underbrace{\big[ 1-8L_f^2\big( \frac{75 {\eta\alpha}}{16 \mu}\frac{2}{m} + \frac{\nu}{2 \beta } \frac{2}{m} \big)-\frac{\nu \tau}{2 \beta m}- (1+c_1)\lambda^2-  \frac{\nu}{ \beta }  \frac{ L_f^2}{m}-  \frac{4 \nu L_f^2}{ \beta \mu\eta^2 } \frac{75 {\eta\alpha}}{16 \mu}\frac{2L_f^2}{m}   \big] }_{C_1} \!\sum_{t\!=\!0}^{T}\! \sum_{i=1}^{m}[\|{\bar{\bm{x}}}_t - {\bm{x}}_{i,t}\|^2 ]
\notag\\
&- \underbrace{\left(-2\nu^2L_f^2\big( \frac{75 {\eta\alpha}}{16 \mu}\frac{2}{m} + \frac{\nu}{2 \beta } \frac{2}{m} \big)- (1+\frac{1}{c_1})\nu^2  -\frac{17L_{{\y}}^{2}\nu^2 }{2 \mu m {\eta\alpha}}+\frac{\nu \tau}{m}-\frac{\nu^{2} L_J}{2 m}-\frac{\nu \beta}{m}-\frac{\nu \tau \beta}{2 m}\right)}_{C_2}\!\sum_{t\!=\!0}^{T}\! \left\|\tilde{\bm{x}}_{t}-1 \bar{\bm{x}}_{t}\right\|^{2}\notag\\
&- \underbrace{\big[ 1-8L_f^2\big( \frac{75 {\eta\alpha}}{16 \mu}\frac{2}{m} + \frac{\nu}{2 \beta } \frac{2}{m} \big)- (1+c_2)\lambda^2-  \frac{\nu}{ \beta }  \frac{ L_f^2}{m}-  \frac{4 \nu L_f^2}{ \beta \mu{\eta\alpha} } \frac{75 {\eta\alpha}}{16 \mu}\frac{2L_f^2}{m}   \big] }_{C_3} \!\sum_{t\!=\!0}^{T}\! \sum_{i=1}^{m}[ \|{\bar{{\y}}}_t - {{\y}}_{i,t}\|^2 ]
\notag\\
&
-  \underbrace{[   \frac{4 \nu L_f^2}{ \beta \mu{\eta\alpha} }\frac{3 {\eta}}{4} - (1+\frac{1}{c_2}){\eta^2}  -2{\eta\alpha}L_f^2\big( \frac{75 {\eta\alpha}}{16 \mu}\frac{2}{m} + \frac{\nu}{2 \beta } \frac{2}{m} \big) ] }_{C_4} \!\sum_{t\!=\!0}^{T}\! \left\|\widetilde{{\y}}_{t}-\bar{{\y}}_{t}\right\|^{2}  
\end{align}

For \alg, the outer loop calculates the full gradients. 
Thus, we have $\Eb\|\bd_{(n_t-1)q}- \nabla_{{\bm{x}}} F_{(n_t-1)q}\|^2 = \Eb\|\bp_{(n_t-1)q}- \nabla_{{{\y}}} F_{(n_t-1)q}\|^2 = 0$.

Choosing $c_1=c_2= \frac{1-\lambda^2}{1+\lambda^2}$, we have
\begin{align}
&C_1={\big[ 1-8L_f^2\big( \frac{75 {\eta\alpha}}{16 \mu}\frac{2}{m} + \frac{\nu}{2 \beta } \frac{2}{m} \big)-\frac{\nu \tau}{2 \beta m}- (1+c_1)\lambda^2-  \frac{\nu}{ \beta }  \frac{ L_f^2}{m}-  \frac{4 \nu L_f^2}{ \beta \mu{\eta\alpha} } \frac{75 {\eta\alpha}}{16 \mu}\frac{2L_f^2}{m}   \big] }\notag\\
&
\geq{\big[ 1-8L_f^2\big( \frac{75 {\eta\alpha}}{16 \mu}\frac{2}{m} + \frac{\nu}{2 \beta } \frac{2}{m} \big)-\frac{\nu \tau}{2 \beta m}- (1-c_1)-  \frac{\nu}{ \beta }  \frac{ L_f^2}{m}-  \frac{4 \nu L_f^2}{ \beta \mu{\eta\alpha} } \frac{75 {\eta\alpha}}{16 \mu}\frac{2L_f^2}{m}   \big] }\notag\\
&
\geq c_1- \frac{c_1}{5}- \frac{c_1}{5}- \frac{c_1}{5}-\frac{c_1}{5}-\frac{c_1}{5}=0
\end{align}
\begin{align}
C_2&={\left(-2\nu^2L_f^2\big( \frac{75 {\eta\alpha}}{16 \mu}\frac{2}{m} + \frac{\nu}{2 \beta } \frac{2}{m} \big)- (1+\frac{1}{c_1})\nu^2  -\frac{17L_{{\y}}^{2}\nu^2 }{2 \mu m {\eta\alpha}}+\frac{\nu \tau}{m}-\frac{\nu^{2} L_J}{2 m}-\frac{\nu \beta}{m}-\frac{\nu \tau \beta}{2 m}\right)}\notag\\&\geq -\frac{\nu \tau}{6m}-\frac{\nu \tau}{6m}-\frac{\nu \tau}{6m}+\frac{\nu \tau}{m}-\frac{\nu \tau}{6m}-\frac{\nu \tau}{12m}-\frac{\nu \tau}{6m}>0
\end{align}

\begin{align}
C_3=&{\big[ 1-8L_f^2\big( \frac{75 {\eta\alpha}}{16 \mu}\frac{2}{m} + \frac{\nu}{2 \beta } \frac{2}{m} \big)- (1+c_2)\lambda^2-  \frac{\nu}{ \beta }  \frac{ L_f^2}{m}-  \frac{4 \nu L_f^2}{ \beta \mu{\eta\alpha} } \frac{75 {\eta\alpha}}{16 \mu}\frac{2L_f^2}{m}   \big] }\notag\\&
\geq
 {\big[ 1-8L_f^2\big( \frac{75 {\eta\alpha}}{16 \mu}\frac{2}{m} + \frac{\nu}{2 \beta } \frac{2}{m} \big)- (1-c_2)-  \frac{\nu}{ \beta }  \frac{ L_f^2}{m}-  \frac{4 \nu L_f^2}{ \beta \mu{\eta\alpha} } \frac{75 {\eta\alpha}}{16 \mu}\frac{2L_f^2}{m}   \big] }\notag\\&
 \geq c_2-\frac{c_2}{5}-\frac{c_2}{5}-\frac{c_2}{5}-\frac{c_2}{5}-\frac{c_2}{5}=0
\end{align}

\begin{align}
C_4={[   \frac{4 \nu L_f^2}{ \beta \mu{\eta\alpha} }\frac{3 {\eta}}{4} - (1+\frac{1}{c_2})\eta^2  -2{\eta\alpha}L_f^2\big( \frac{75 {\eta\alpha}}{16 \mu}\frac{2}{m} + \frac{\nu}{2 \beta } \frac{2}{m} \big) ] }\geq   \frac{4 \nu L_f^2}{ \beta \mu{\eta\alpha} }\frac{3 {\eta}}{4} -   \frac{4 \nu L_f^2}{ \beta \mu{\eta\alpha} }\frac{3 {\eta}}{8}-  \frac{4 \nu L_f^2}{ \beta \mu{\eta\alpha} }\frac{3 {\eta}}{8} =0
\end{align}
With parameters 
\begin{align} 
&\eta\leq  \min\{ \frac{c_1 m \mu}{ 375\alpha L_f^2},  \frac{15L_f^2}{\beta \mu \alpha^2c_1} ,\frac{3c_1^2 m}{10(1+c_1)\mu \alpha}\}\notag\\
&\nu\leq\min\{ \frac{c_1 m\beta}{40L_f^2} ,\frac{2c_1 m\beta}{5\tau} ,\frac{2c_1\beta\mu^2 m}{375L_f^4} ,\frac{5 \tau}{3m c_1} , \frac{\tau}{6 m (1+1/c_1)}, \frac{3\mu\eta \alpha\tau}{17L_f^2} ,\frac{\tau}{3(L_f+\frac{L_f^2}{\mu}) } \} \notag\\
&\beta\leq \min\{ \frac{\tau}{12}  ,\frac{1}{3}\},
\end{align}


we have the stated result for \alg:
\begin{align}
\sum_{t=0}^{T} \Big(\E[ 
\left\|\tilde{\bm{x}}_{t}-1 \bar{\bm{x}_t}\right\|^{2}+\left\|\bm{x}_{t}-1 \bar{\bm{x}}_{t}\right\|^{2}]
+ \Eb\|{{\y}}_t^* - {\bar{{\y}}}_t\|^2 
\Big) 
\le 
\frac{\Eb [\mathfrak{p}_{0} - \mathfrak{p}_{T+1}] }{\min\{ C_1,C_2,\nu L_f^2/2\}} .
\end{align}

For \algplusns, we have that 
\begin{align}
&\Eb\|\bd_{(n_t-1)q}- \nabla_{{\bm{x}}} F_{(n_t-1)q}\|^2 = \Eb\|\bp_{(n_t-1)q}- \nabla_{{{\y}}} F_{(n_t-1)q}\|^2=\frac{I_{\left(\mathcal{N}_{s}<M\right)}}{\mathcal{N}_{s}} \sigma^{2}
\end{align}
	
Recall that $\mathcal{N}_s =\min\{ c_{\gamma} \sigma^2(\gamma^{(k)})^{-1}, c_{\epsilon} \sigma^2\epsilon^{-1} ,M\}  $. Then we have
\begin{align}  \label{eqn:84}
\frac{I_{( \mathcal{N}_s <M)}}{ \mathcal{N}_s}  & \leq \frac{1}{ \min\{ c_{\epsilon} \sigma^2(\epsilon)^{-1} ,c_{\gamma} \sigma^2({\gamma^{(k)}})^{-1}     \}} \notag\\
& = \max\{ \frac{{\gamma^{(k)}}}{c_{\gamma}\sigma^2}, \frac{\epsilon}{c_{\epsilon} \sigma^2}  \}  \leq \frac{{\gamma^{(k)}}}{  c_{\gamma}\sigma^2}+\frac{\epsilon}{ c_{\epsilon}\sigma^2}.
\end{align}
Thus, we have
\begin{align}
&Q({\bar{\bm{x}}}_{T+1}) - Q({\bar{\bm{x}}}_{0}) + \frac{4\nu L_F^2}{\beta}\big[\|{\bar{{\y}}}_{T+1} - {{\y}}_{T+1}^*\|^2  - \|{{\y}}_0^* - {\bar{{\y}}}_0\|^2\big]\notag\\
\le& \underbrace{ \frac{75 {\eta\alpha}}{16 \mu}\frac{2}{m}\!\sum_{t\!=\!0}^{T}\! \|\nabla_{{{\y}}} F({\bm{x}}_{t},{{\y}}_{t}) - \bd_{t}\|^2  +\frac{\nu}{2 \beta } \frac{2}{m}\!\sum_{t\!=\!0}^{T}\! \|\nabla_{{\bm{x}}} F({\bm{x}}_{t},{{\y}}_{t}) - \bp_{t}\|^2  }_{R_1}-\frac{\nu {L_F}^2}{2} \!\sum_{t\!=\!0}^{T}\! \left\|\bar{{\y}}_{t}-{{\y}}_{t}^{*}\right\|^{2} \notag\\&
+\frac{\nu \tau}{2 \beta m} \!\sum_{t\!=\!0}^{T}\! \left\|\bm{x}_t-1 \bar{\bm{x}}_{t}\right\|^{2}
+\big[   \frac{\nu}{ \beta }  \frac{ L_F^2}{m}+  \frac{4 \nu L_F^2}{ \beta \mu  \eta\alpha } \frac{75 {\eta\alpha}}{16 \mu}\frac{2L_F^2}{m}   \big]  \!\sum_{t\!=\!0}^{T}\! \sum_{i=1}^{m}[\|{\bar{\bm{x}}}_t - {\bm{x}}_{i,t}\|^2 + \|{\bar{{\y}}}_t - {{\y}}_{i,t}\|^2 ]\notag\\
&-\left(-\frac{17L_{{\y}}^{2}\nu^2 }{2 \mu m {\eta\alpha}}+\frac{\nu \tau}{m}-\frac{\nu^{2} L_J}{2 m}-\frac{\nu \beta}{m}-\frac{\nu \tau \beta}{2 m}\right)\!\sum_{t\!=\!0}^{T}\! \left\|\tilde{\bm{x}}_{t}-1 \bar{\bm{x}}_{t}\right\|^{2}- \frac{4 \nu L_F^2}{ \beta \mu{\eta\alpha} }\frac{3 {\eta}}{4}\left\|\tilde{{\y}}_{t}-1 \bar{{\y}}_{t}\right\|^{2} \notag\\
\leq& ( \frac{75 {\eta\alpha}}{16 \mu}\frac{2}{m} +\frac{\nu}{2 \beta } \frac{2}{m}  )  \sum_{t\!=\!0}^{T}  ( \frac{{\gamma^{(t)}}}{  c_{\gamma}}+\frac{\epsilon}{ c_{\epsilon}})  -\frac{\nu {L_F}^2}{2} \!\sum_{t\!=\!0}^{T}\! \left\|\bar{{\y}}_{t}-{{\y}}_{t}^{*}\right\|^{2} \notag\\&
+\frac{\nu \tau}{2 \beta m} \!\sum_{t\!=\!0}^{T}\! \left\|\bm{x}_t-1 \bar{\bm{x}}_{t}\right\|^{2}
+\big[   \frac{\nu}{ \beta }  \frac{ L_F^2}{m}+  \frac{4 \nu L_F^2}{ \beta \mu  \eta\alpha } \frac{75 {\eta\alpha}}{16 \mu}\frac{2L_F^2}{m}   \big]  \!\sum_{t\!=\!0}^{T}\! \sum_{i=1}^{m}[\|{\bar{\bm{x}}}_t - {\bm{x}}_{i,t}\|^2 + \|{\bar{{\y}}}_t - {{\y}}_{i,t}\|^2 ]\notag\\
&-\left(-\frac{17L_{{\y}}^{2}\nu^2 }{2 \mu m {\eta\alpha}}+\frac{\nu \tau}{m}-\frac{\nu^{2} L_J}{2 m}-\frac{\nu \beta}{m}-\frac{\nu \tau \beta}{2 m}\right)\!\sum_{t\!=\!0}^{T}\! \left\|\tilde{\bm{x}}_{t}-1 \bar{\bm{x}}_{t}\right\|^{2}- \frac{4 \nu L_F^2}{ \beta \mu{\eta\alpha} }\frac{3 {\eta}}{4}\left\|\tilde{{\y}}_{t}-1 \bar{{\y}}_{t}\right\|^{2} \notag\\&
\end{align}
Since $\gamma_{t+1}=\frac{1}{q} \sum_{i=\left(n_{k}-1\right) q}^{k}\left\| \tilde{\bm{x}}_{t}-1 \bar{\bm{x}}_{t} \right\|^{2}$.

\begin{align}
&\Eb \mathfrak{p}_{T+1} - \mathfrak{p}_{0} 
\le -\frac{\nu {L_f}^2}{2} \!\sum_{t\!=\!0}^{T}\! \left\|\bar{{\y}}_{t}-{{\y}}_{t}^{*}\right\|^{2}
\notag\\
&- \underbrace{\big[ 1-8L_f^2\big( \frac{75 {\eta\alpha}}{16 \mu}\frac{2}{m} + \frac{\nu}{2 \beta } \frac{2}{m} \big)-\frac{\nu \tau}{2 \beta m}- (1+c_1)\lambda^2-  \frac{\nu}{ \beta }  \frac{ L_f^2}{m}-  \frac{4 \nu L_f^2}{ \beta \mu\eta^2 } \frac{75 {\eta\alpha}}{16 \mu}\frac{2L_f^2}{m}   \big] }_{C_1} \!\sum_{t\!=\!0}^{T}\! \sum_{i=1}^{m}[\|{\bar{\bm{x}}}_t - {\bm{x}}_{i,t}\|^2 ]
\notag\\
&- \underbrace{  \left(   c_{\gamma} ( \frac{75 {\eta\alpha}}{16 \mu}\frac{2}{m} +\frac{\nu}{2 \beta } \frac{2}{m}  )-2\nu^2L_f^2\big( \frac{75 {\eta\alpha}}{16 \mu}\frac{2}{m} + \frac{\nu}{2 \beta } \frac{2}{m} \big)- (1+\frac{1}{c_1})\nu^2  -\frac{17L_{{\y}}^{2}\nu^2 }{2 \mu m {\eta\alpha}}+\frac{\nu \tau}{m}-\frac{\nu^{2} L_J}{2 m}-\frac{\nu \beta}{m}-\frac{\nu \tau \beta}{2 m}\right)}_{C_2}\!\sum_{t\!=\!0}^{T}\! \left\|\tilde{\bm{x}}_{t}-1 \bar{\bm{x}}_{t}\right\|^{2}\notag\\
&- \underbrace{\big[ 1-8L_f^2\big( \frac{75 {\eta\alpha}}{16 \mu}\frac{2}{m} + \frac{\nu}{2 \beta } \frac{2}{m} \big)- (1+c_2)\lambda^2-  \frac{\nu}{ \beta }  \frac{ L_f^2}{m}-  \frac{4 \nu L_f^2}{ \beta \mu{\eta\alpha} } \frac{75 {\eta\alpha}}{16 \mu}\frac{2L_f^2}{m}   \big] }_{C_3} \!\sum_{t\!=\!0}^{T}\! \sum_{i=1}^{m}[ \|{\bar{{\y}}}_t - {{\y}}_{i,t}\|^2 ]
\notag\\
&
-  \underbrace{[   \frac{4 \nu L_f^2}{ \beta \mu{\eta\alpha} }\frac{3 {\eta}}{4} - (1+\frac{1}{c_2}){\eta^2}  -2{\eta\alpha}L_f^2\big( \frac{75 {\eta\alpha}}{16 \mu}\frac{2}{m} + \frac{\nu}{2 \beta } \frac{2}{m} \big) ] }_{C_4} \!\sum_{t\!=\!0}^{T}\! \left\|\widetilde{{\y}}_{t}-\bar{{\y}}_{t}\right\|^{2}  
\end{align}

Choosing $c_1=c_2= \frac{2\lambda_{m}(M)}{\lambda}$, we have
\begin{align}
&C_1={\big[ 1-8L_f^2\big( \frac{75 {\eta\alpha}}{16 \mu}\frac{2}{m} + \frac{\nu}{2 \beta } \frac{2}{m} \big)-\frac{\nu \tau}{2 \beta m}- (1+c_1)\lambda^2-  \frac{\nu}{ \beta }  \frac{ L_f^2}{m}-  \frac{4 \nu L_f^2}{ \beta \mu{\eta\alpha} } \frac{75 {\eta\alpha}}{16 \mu}\frac{2L_f^2}{m}   \big] }\notag\\
&
\geq{\big[ 1-8L_f^2\big( \frac{75 {\eta\alpha}}{16 \mu}\frac{2}{m} + \frac{\nu}{2 \beta } \frac{2}{m} \big)-\frac{\nu \tau}{2 \beta m}- (1-c_1)-  \frac{\nu}{ \beta }  \frac{ L_f^2}{m}-  \frac{4 \nu L_f^2}{ \beta \mu{\eta\alpha} } \frac{75 {\eta\alpha}}{16 \mu}\frac{2L_f^2}{m}   \big] }\notag\\
&
\geq c_1- \frac{c_1}{5}- \frac{c_1}{5}- \frac{c_1}{5}-\frac{c_1}{5}-\frac{c_1}{5}=0
\end{align}
\begin{align}
C_2'&={\left( c_{\gamma} ( \frac{75 {\eta\alpha}}{16 \mu}\frac{2}{m} +\frac{\nu}{2 \beta } \frac{2}{m}  )-2\nu^2L_f^2\big( \frac{75 {\eta\alpha}}{16 \mu}\frac{2}{m} + \frac{\nu}{2 \beta } \frac{2}{m} \big)- (1+\frac{1}{c_1})\nu^2  -\frac{17L_{{\y}}^{2}\nu^2 }{2 \mu m {\eta\alpha}}+\frac{\nu \tau}{m}-\frac{\nu^{2} L_J}{2 m}-\frac{\nu \beta}{m}-\frac{\nu \tau \beta}{2 m}\right)}\notag\\&
\geq -\frac{\nu \tau}{12m}-\frac{\nu \tau}{6m}-\frac{\nu \tau}{6m}-\frac{\nu \tau}{6m}+\frac{\nu \tau}{m}-\frac{\nu \tau}{6m}-\frac{\nu \tau}{12m}-\frac{\nu \tau}{6m}=0
\end{align}

\begin{align}
C_3=&{\big[ 1-8L_f^2\big( \frac{75 {\eta\alpha}}{16 \mu}\frac{2}{m} + \frac{\nu}{2 \beta } \frac{2}{m} \big)- (1+c_2)\lambda^2-  \frac{\nu}{ \beta }  \frac{ L_f^2}{m}-  \frac{4 \nu L_f^2}{ \beta \mu{\eta\alpha} } \frac{75 {\eta\alpha}}{16 \mu}\frac{2L_f^2}{m}   \big] }\notag\\&
\geq
{\big[ 1-8L_f^2\big( \frac{75 {\eta\alpha}}{16 \mu}\frac{2}{m} + \frac{\nu}{2 \beta } \frac{2}{m} \big)- (1-c_2)-  \frac{\nu}{ \beta }  \frac{ L_f^2}{m}-  \frac{4 \nu L_f^2}{ \beta \mu{\eta\alpha} } \frac{75 {\eta\alpha}}{16 \mu}\frac{2L_f^2}{m}   \big] }\notag\\&
\geq c_2-\frac{c_2}{5}-\frac{c_2}{5}-\frac{c_2}{5}-\frac{c_2}{5}-\frac{c_2}{5}=0
\end{align}

\begin{align}
C_4={[   \frac{4 \nu L_f^2}{ \beta \mu{\eta\alpha} }\frac{3 {\eta}}{4} - (1+\frac{1}{c_2})\eta^2  -2{\eta\alpha}L_f^2\big( \frac{75 {\eta\alpha}}{16 \mu}\frac{2}{m} + \frac{\nu}{2 \beta } \frac{2}{m} \big) ] }\geq   \frac{4 \nu L_f^2}{ \beta \mu{\eta\alpha} }\frac{3 {\eta}}{4} -   \frac{4 \nu L_f^2}{ \beta \mu{\eta\alpha} }\frac{3 {\eta}}{8}-  \frac{4 \nu L_f^2}{ \beta \mu{\eta\alpha} }\frac{3 {\eta}}{8} =0
\end{align}
With parameters 
\begin{align} &
c_{\gamma} \geq  (\frac{75 {\eta\alpha}}{8 \mu}\frac{1}{m} +\frac{\nu}{ \beta } \frac{1}{m} ) \frac{\nu\tau}{12} \notag\\
&\eta\leq  \min\{ \frac{c_1 m \mu}{ 375\alpha L_f^2},  \frac{15L_f^2}{\beta \mu \alpha^2c_1} ,\frac{3c_1^2 m}{10(1+c_1)\mu \alpha}\}\notag\\
&\nu\leq\min\{ \frac{c_1 m\beta}{40L_f^2} ,\frac{2c_1 m\beta}{5\tau} ,\frac{2c_1\beta\mu^2 m}{375L_f^4} ,\frac{5 \tau}{3m c_1} , \frac{\tau}{6 m (1+1/c_1)}, \frac{3\mu\eta \alpha\tau}{17L_f^2} ,\frac{\tau}{3(L_f+\frac{L_f^2}{\mu}) } \} \notag\\
&\beta\leq \min\{ \frac{\tau}{12}  ,\frac{1}{3}\},
\end{align}

Thus, for \algplusns, we have the following convergence results:
\begin{align}
&\frac{1}{(T+1)}\sum_{t=0}^{T} \Big(\E[ 
\left\|\tilde{\bm{x}}_{t}-1 \bar{\bm{x}_t}\right\|^{2}+\left\|\bm{x}_{t}-1 \bar{\bm{x}}_{t}\right\|^{2}]
+ \Eb\|{{\y}}_t^* - {\bar{{\y}}}_t\|^2 
\Big) \notag\\
\le 
&\frac{\Eb [\mathfrak{p}_{0} - \mathfrak{p}_{T+1}] }{(T+1)\min\{ C_1,C_2',\nu L_f^2/2\}} + 
 ( \frac{75 {\eta\alpha}}{16 \mu}\frac{2}{m} +\frac{\nu}{2 \beta } \frac{2}{m}  ) \frac{ \epsilon}{c_{\epsilon} }.
\end{align}

With $\mathfrak{p}_{T+1} \ge Q^*$, we reach the conclusion.

\section{Supporting lemmas}

\begin{lem}\label{Lemma: Lip_w}
Under Assumption~\ref{Assump: obj}, ${\bm{y}}^*({\bm{x}}) = \arg\max_{{\bm{y}}} F({\bm{x}},{\bm{y}})$ is Lipschitz continuous, i.e., there exists a positive constant $L_{{\bm{y}}}$, such that
\begin{align}
\|{\bm{y}}^*({\bm{x}}) - {\bm{y}}^*({\bm{x}}^\prime)\| \le L_{{\bm{y}}} \|{\bm{x}} -{\bm{x}}^\prime\|,~~\forall {\bm{x}}, {\bm{x}}^\prime \in \Rb^d,
\end{align}
where the Lipschitz constant is $L_{{\bm{y}}} = L_f/\mu$.
\end{lem}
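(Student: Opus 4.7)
The plan is to derive the Lipschitz estimate by combining the first-order optimality conditions (in variational-inequality form) at the two maximizers with the $\mu$-strong concavity of $F(\x,\cdot)$ and the Lipschitz smoothness of $\nabla_\y F$ (from Assumptions~\ref{Assump: obj}(b) and \ref{Assump: Individual Lipschitz}, the latter lifted from each $f_{ij}$ to the average $F$).

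First, I would fix $\x,\x^\prime\in\R^{p_1}$ and abbreviate $\y^\star := \y^*(\x)$, $\y^{\star\prime} := \y^*(\x^\prime)$. Because $\y^\star$ and $\y^{\star\prime}$ maximize $F(\x,\cdot)$ and $F(\x^\prime,\cdot)$ respectively over the convex set $\Y$, the first-order optimality conditions read
\begin{align*}
\langle \nabla_\y F(\x,\y^\star),\, \y-\y^\star\rangle &\le 0 \quad \forall \y\in\Y,\\
\langle \nabla_\y F(\x^\prime,\y^{\star\prime}),\, \y-\y^{\star\prime}\rangle &\le 0 \quad \forall \y\in\Y.
\end{align*}
Setting $\y=\y^{\star\prime}$ in the first and $\y=\y^\star$ in the second, then adding, yields
\[
\langle \nabla_\y F(\x^\prime,\y^{\star\prime})-\nabla_\y F(\x,\y^\star),\, \y^{\star\prime}-\y^\star\rangle \ge 0.
\]

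Next, I would insert the cross term $\nabla_\y F(\x^\prime,\y^\star)$ and split the inner product as
\[
\langle \nabla_\y F(\x^\prime,\y^\star)-\nabla_\y F(\x,\y^\star),\, \y^{\star\prime}-\y^\star\rangle \;\ge\; -\langle \nabla_\y F(\x^\prime,\y^{\star\prime})-\nabla_\y F(\x^\prime,\y^\star),\, \y^{\star\prime}-\y^\star\rangle.
\]
By $\mu$-strong concavity of $F(\x^\prime,\cdot)$, the right-hand side is bounded below by $\mu\|\y^{\star\prime}-\y^\star\|^2$. Applying Cauchy--Schwarz to the left-hand side and the $L_f$-Lipschitz smoothness of $\nabla_\y F$ in $\x$ (inherited from Assumption~\ref{Assump: Individual Lipschitz}), we bound the left-hand side by $L_f\|\x^\prime-\x\|\cdot\|\y^{\star\prime}-\y^\star\|$. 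Cancelling one factor of $\|\y^{\star\prime}-\y^\star\|$ (trivial when it is zero) gives $\|\y^{\star\prime}-\y^\star\|\le (L_f/\mu)\|\x^\prime-\x\|$, which is the claimed bound with $L_{\y}=L_f/\mu$.

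There is no genuine obstacle here since all the ingredients are standard; the only mild subtlety is making sure to use the variational-inequality form of optimality rather than $\nabla_\y F(\x,\y^*(\x))=0$, because $\y$ is constrained to $\Y$. Once the two variational inequalities are added to kill the unknown boundary terms, strong concavity and Lipschitz smoothness of the gradient in $\x$ close the estimate cleanly.
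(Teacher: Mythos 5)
Your proof is correct. The paper itself gives no argument for this lemma---it simply defers to Lemma~4.3 of \citet{lin2020gradient}---and your derivation is exactly the standard argument behind that cited result: add the two variational inequalities at the constrained maximizers, split off the cross term $\nabla_{\y}F(\x^\prime,\y^\star)$, lower-bound one piece by $\mu\|\y^{\star\prime}-\y^\star\|^2$ via strong concavity (in its inner-product/strong-monotonicity form, which is what Assumption~\ref{Assump: obj}(b) actually asserts, not merely the norm inequality written there), upper-bound the other by $L_f\|\x-\x^\prime\|\,\|\y^{\star\prime}-\y^\star\|$ via Lipschitz smoothness, and cancel. If anything, your write-up is more careful than the paper's, since it explicitly handles the constraint set $\mathcal{Y}$ through the variational-inequality form of optimality rather than assuming $\nabla_{\y}F(\x,\y^*(\x))=\0$.
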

\begin{proof}
See Lemma 4.3 in \cite{lin2020gradient}.
\end{proof}

\begin{lem}\label{Lemma: nabla_J}
Under Assumption~\ref{Assump: obj}, the function $J({\bm{x}}) = F({\bm{x}}, {\bm{y}}^*({\bm{x}}))$ satisfies that $\nabla J({\bm{x}}) = \nabla_{{\bm{x}}} F({\bm{x}}, {\bm{y}}^*({\bm{x}}))$.
\end{lem}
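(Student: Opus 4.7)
My plan is to recognize this as essentially the classical Danskin (envelope) theorem for a strongly concave maximization, and to verify its hypotheses using Assumption~\ref{Assump: obj} and Assumption~\ref{Assump: Individual Lipschitz}. By Assumption~\ref{Assump: obj}(b) the map ${\bm{y}} \mapsto F({\bm{x}},{\bm{y}})$ is $\mu$-strongly concave on the closed convex set $\mathcal{Y}$, so ${\bm{y}}^*({\bm{x}}) = \arg\max_{{\bm{y}} \in \mathcal{Y}} F({\bm{x}},{\bm{y}})$ exists and is unique; by Lemma~\ref{Lemma: Lip_w} it is (Lipschitz) continuous in ${\bm{x}}$. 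Moreover, $F(\cdot,{\bm{y}})$ is continuously differentiable with jointly Lipschitz gradient by Assumption~\ref{Assump: Individual Lipschitz}. These are exactly the ingredients needed to invoke Danskin's theorem.

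The core argument I would present is a two-sided squeeze on the difference quotient. For any ${\bm{x}},{\bm{x}}' \in \mathbb{R}^{p_1}$, by optimality of ${\bm{y}}^*$ one has $F({\bm{x}}',{\bm{y}}^*({\bm{x}})) \leq J({\bm{x}}')$ and $F({\bm{x}},{\bm{y}}^*({\bm{x}}')) \leq J({\bm{x}})$, which yields
\begin{equation*}
F({\bm{x}}',{\bm{y}}^*({\bm{x}})) - F({\bm{x}},{\bm{y}}^*({\bm{x}})) \;\leq\; J({\bm{x}}') - J({\bm{x}}) \;\leq\; F({\bm{x}}',{\bm{y}}^*({\bm{x}}')) - F({\bm{x}},{\bm{y}}^*({\bm{x}}')).
\end{equation*}
Fixing a direction ${\bm{d}}$ and letting ${\bm{x}}' = {\bm{x}} + h {\bm{d}}$, the lower bound divided by $h$ tends to $\langle \nabla_{{\bm{x}}} F({\bm{x}},{\bm{y}}^*({\bm{x}})),{\bm{d}}\rangle$ by continuous differentiability of $F$ in ${\bm{x}}$. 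For the upper bound, a Taylor expansion together with the continuity of ${\bm{y}}^*(\cdot)$ at ${\bm{x}}$ and the joint continuity of $\nabla_{{\bm{x}}} F$ shows that it also converges to $\langle \nabla_{{\bm{x}}} F({\bm{x}},{\bm{y}}^*({\bm{x}})),{\bm{d}}\rangle$. Hence the directional derivative of $J$ at ${\bm{x}}$ in direction ${\bm{d}}$ exists, is linear in ${\bm{d}}$, and equals $\langle \nabla_{{\bm{x}}} F({\bm{x}},{\bm{y}}^*({\bm{x}})),{\bm{d}}\rangle$, which delivers $\nabla J({\bm{x}}) = \nabla_{{\bm{x}}} F({\bm{x}},{\bm{y}}^*({\bm{x}}))$.

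There is no real technical obstacle here; the mild point of care is the upper bound, where one must justify passing to the limit despite the dependence of ${\bm{y}}^*({\bm{x}}')$ on ${\bm{x}}'$. This is handled by writing $F({\bm{x}}',{\bm{y}}^*({\bm{x}}')) - F({\bm{x}},{\bm{y}}^*({\bm{x}}')) = \int_0^1 \langle \nabla_{{\bm{x}}} F({\bm{x}}+sh{\bm{d}},{\bm{y}}^*({\bm{x}}+h{\bm{d}})),h{\bm{d}}\rangle \, ds$ and applying Lemma~\ref{Lemma: Lip_w} together with Lipschitz smoothness of $\nabla_{{\bm{x}}} F$ to identify the limit as $h \to 0$. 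Given that this is a standard consequence of Danskin's theorem and that the paper has already cited Lemma 4.3 of~\cite{lin2020gradient} for the companion Lemma~\ref{Lemma: Lip_J}, I would conclude by pointing to the same reference rather than repeating the calculation in full.
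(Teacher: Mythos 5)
Your proof is correct, but it takes a genuinely different route from the paper's. The paper differentiates $J({\bm{x}}) = F({\bm{x}},{\bm{y}}^*({\bm{x}}))$ by the chain rule and then eliminates the second term by asserting that $\nabla_{{\bm{y}}}F({\bm{x}},{\bm{y}})|_{{\bm{y}}={\bm{y}}^*({\bm{x}})}=0$ and that $\partial {\bm{y}}^*({\bm{x}})/\partial {\bm{x}}$ is bounded (via Lipschitz continuity of ${\bm{y}}^*$). Your Danskin-style two-sided squeeze on the difference quotient avoids both of those steps: it needs only the continuity of ${\bm{y}}^*(\cdot)$ and the joint continuity of $\nabla_{{\bm{x}}}F$, not the differentiability of the argmax map (Lipschitz continuity gives differentiability only almost everywhere, so the paper's chain-rule step is somewhat informal), and it does not require the inner gradient to vanish at the maximizer. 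The latter point matters here because the maximization is over a constrained set $\mathcal{Y}$, so the optimum may lie on the boundary, where only the variational inequality $\langle \nabla_{{\bm{y}}}F({\bm{x}},{\bm{y}}^*({\bm{x}})),\, {\bm{y}}-{\bm{y}}^*({\bm{x}})\rangle \le 0$ holds rather than stationarity; your squeeze handles this case transparently. In short, your argument is more elementary in its prerequisites and more robust to the constrained setting, while the paper's version is shorter but implicitly assumes ${\bm{y}}^*$ is differentiable and that the maximizer is an interior stationary point. Both yield the same conclusion, and deferring the detailed limit computation to the cited reference is reasonable.
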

\begin{proof}
Since $J({\bm{x}}) = F({\bm{x}}, {\bm{y}}^*({\bm{x}}))$, by chain rule, we have
\begin{align}
d J({\bm{x}}) = \frac{\partial F({\bm{x}}, {\bm{y}})}{\partial {\bm{x}}}\Big|_{{\bm{y}} = {\bm{y}}^*({\bm{x}})} \cdot d{\bm{x}} + \frac{\partial F({\bm{x}}, {\bm{y}})}{\partial {\bm{y}}}\Big|_{{\bm{y}} = {\bm{y}}^*({\bm{x}})} \cdot \frac{\partial \omega^*({\bm{x}})}{\partial {\bm{x}}} \cdot d{\bm{x}},
\end{align}
where $\partial F({\bm{x}}, {\bm{y}})/\partial {\bm{x}}$ and $\partial F({\bm{x}}, {\bm{y}})/\partial {\bm{y}}$ are respectively the partial differential of $F$ w.r.t the first variate ${\bm{x}}$ and the second variate ${\bm{y}}$. Note that ${\bm{y}}^*({\bm{x}})$ is the unique optimal point such that $F({\bm{x}}, {\bm{y}})$ reaches the maximums. So, it follows that $\frac{\partial F({\bm{x}}, {\bm{y}})}{\partial {\bm{y}}}|_{{\bm{y}} = {\bm{y}}^*({\bm{x}})} = 0$ for all ${\bm{x}}$. Also, from Lemma \ref{Lemma: Lip_w}, we have $\partial \omega^*({\bm{x}})/\partial {\bm{x}}$ is bounded. Thus, it follows that 
\begin{align}\label{Eq: nabla_J = nablda F}
d J({\bm{x}}) = \frac{\partial F({\bm{x}}, {\bm{y}})}{\partial {\bm{x}}}\Big|_{{\bm{y}} = {\bm{y}}^*({\bm{x}})} \cdot d{\bm{x}},
\end{align}
which is $\nabla J({\bm{x}}) = \nabla_{{\bm{x}}} F({\bm{x}}, {\bm{y}}^*({\bm{x}}))$. 

\end{proof}

\end{document}